\definecolor{darkblue}{rgb}{0.0,0,0.75} 
\newtheorem{theorem}{Theorem}
\theoremstyle{definition}
\newtheorem{proposition}[theorem]{Proposition}
\newcommand{\defemph}[1]{\textbf{#1}}
\DeclareMathOperator*{\argmax}{arg\,max}
\DeclareMathOperator*{\argmin}{arg\,min}
\newcommand{\spanof}{\mathrm{span}}
\newcommand{\range}{\mathrm{range}}
\newcommand{\from}{\!:\,}
\newcommand{\with}{\!:\,}
\newcommand{\given}{\, | \,}
\newcommand{\inner}[2]{#1^T #2}
\newcommand{\R}{\mathbb{R}}
\newcommand{\cB}{\mathcal{B}}
\newcommand{\cQ}{\mathcal{Q}}
\newcommand{\matO}{\mathbf{O}}
\newcommand{\Q}{\mathbf{Q}}
\newcommand{\X}{\mathbf{X}}
\newcommand{\e}{\mathbf{e}}
\newcommand{\f}{\mathbf{f}}
\newcommand{\g}{\mathbf{g}}
\newcommand{\h}{\mathbf{h}}
\newcommand{\q}{\mathbf{q}}
\newcommand{\x}{\mathbf{x}}
\newcommand{\vecv}{\mathbf{v}}
\newcommand{\y}{\mathbf{y}}
\newcommand{\veco}{\mathbf{o}}
\newcommand{\0}{\mathbf{0}}
\newcommand\norm[1]{\lVert#1\rVert}
\newcommand{\expect}{\mathrm{\bf E}}
\newcommand{\alphavec}{\boldsymbol{\alpha}}
\newcommand{\weight}{\beta}
\newcommand{\weights}{\boldsymbol{\weight}}
\newcommand{\activation}{\mu^{-1}}
\newcommand{\gb}{\mathrm{gb}}
\newcommand{\gs}{\mathrm{gs}}
\newcommand{\ogb}{\mathrm{ogb}}
\newcommand{\xgb}{\mathrm{xgb}}
\newcommand{\obj}{\mathrm{obj}}
\newcommand{\bnd}{\mathrm{bnd}}
\newcommand{\ogbobj}{\obj_\ogb}
\newcommand{\gbobj}{\obj_\gb}
\newcommand{\gsobj}{\obj_\gs}
\newcommand{\xgbobj}{\obj_\xgb}
\title{Orthogonal Gradient Boosting for Simpler\\Additive Rule Ensembles}
\author{%
  Fan Yang %\thanks{Use footnote for providing further information about author (webpage, alternative address)---\emph{not} for acknowledging funding agencies.} \\
  Department of Computer Science\\
  Monash University\\
   \\
  \texttt{fan.yang1@monash.edu} \\
  % examples of more authors
  % \And
  % Coauthor \\
  % Affiliation \\
  % Address \\
  % \texttt{email} \\
  % \AND
  % Coauthor \\
  % Affiliation \\
  % Address \\
  % \texttt{email} \\
  % \And
  % Coauthor \\
  % Affiliation \\
  % Address \\
  % \texttt{email} \\
  % \And
  % Coauthor \\
  % Affiliation \\
  % Address \\
  % \texttt{email} \\
}
\begin{document}

\twocolumn[
%\aistatstitel{Corrective Orthogonal Boosting for Simpler Additive Rule Ensembles}
\aistatstitle{Orthogonal Gradient Boosting for Simpler Additive Rule Ensembles}

\aistatsauthor{ Fan Yang \And Pierre Le Bodic \And  Michael Kamp \And Mario Boley }

\aistatsaddress{ Monash University \And  Monash University \And IKIM, University Hospital Essen \And Monash University } ]

\begin{abstract}
Gradient boosting of prediction rules is an efficient approach to learn potentially interpretable yet accurate probabilistic models. However, actual interpretability requires to limit the number and size of the generated rules, and existing boosting variants are not designed for this purpose. Though corrective boosting refits all rule weights in each iteration to minimise prediction risk, the included rule conditions tend to be sub-optimal, because  commonly used objective functions fail to anticipate this refitting.
Here, we address this issue by a new objective function that measures the angle between the risk gradient vector and the projection of the condition output vector onto the orthogonal complement of the already selected conditions.  This approach correctly approximate the ideal update of adding the risk gradient itself to the model and favours the inclusion of more general and thus shorter rules. As we demonstrate using a wide range of prediction tasks, this significantly improves the comprehensibility/accuracy trade-off of the fitted ensemble. Additionally, we show how objective values for related rule conditions can be computed incrementally to avoid any substantial computational overhead of the new method.
%derive a fast incremental algorithm for rule condition evaluation, as is necessary to enable efficient single-rule optimisation through either the greedy or the branch-and-bound approach.
    % Gradient boosting of decision rules is an efficient approach to find potentially interpretable yet accurate machine learning models. However, actual interpretability requires to limit the number and size of the generated rules, and existing boosting variants are not designed for this purpose. Through their strict greedy approach, they can increase accuracy only by adding further rules, even when the same gains can be achieved, in a more interpretable form, by altering already discovered rules. Here we address this shortcoming by adopting a  weight correction step in each boosting round to maximise the predictive gain per added rule. This leads to a new objective function for rule selection that, based on orthogonal projections, anticipates the subsequent weight correction. This approach does not only correctly approximate the ideal update of adding the risk gradient itself to the model, it also favours the inclusion of more general and thus shorter rules.
    % As we demonstrate using a wide range of prediction tasks, this significantly improves the comprehensibility/accuracy trade-off of the fitted ensemble. Additionally, we derive a fast incremental algorithm for rule evaluation, as is necessary to enable efficient single-rule optimisation through either the greedy or the branch-and-bound approach.
\end{abstract}

% \maketitle

\section{Introduction}\label{intro}
\begin{figure}[tb]
\vskip 0.1in
\begin{center}
\begin{subfigure}
\centering
\includegraphics[width=\columnwidth]{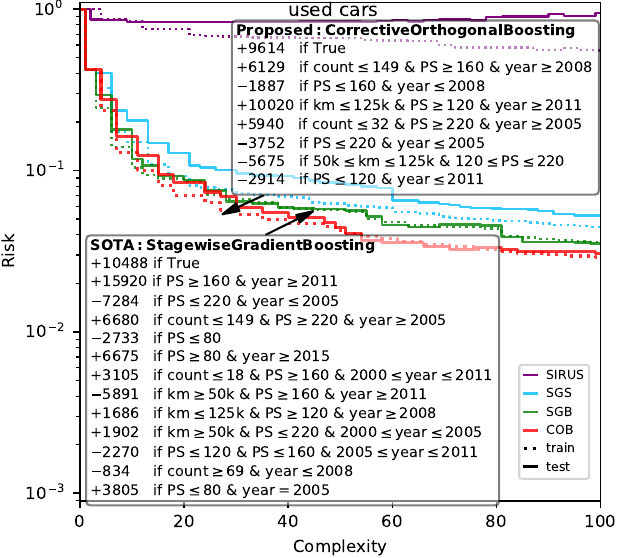}
\vspace{-0.5cm}
\caption{Risk/complexity trade-offs achieved by different rule ensemble learners 
%of proposed approach (red) compared to alternatives 
for  \texttt{used\_cars}. % and \texttt{banknote}. 
Proposed approach (red) reaches risk 0.135 with ensemble of complexity 27, measured as number of rules plus their total length, as opposed to 46 for best alternative.
%Annotated rule ensembles have  equivalent risk but substantially reduced complexity for the proposed method.
}
\label{fig:compare_tic}
\end{subfigure}
\label{fig:risk_complexity_tradeoff}
\end{center}
% \vskip 0.3in
\end{figure}
%Background
\defemph{Additive rule ensembles} are probabilistic models that describe the mean of a target variable $Y$ conditional on an input variable $X$ as $\expect[Y \given X=\x] = \activation(f(\x))$ where $\activation \from \R \to \R$ is an inverse link or \defemph{activation function} and $f\from \R^d \to \R$ is an affine linear combination
of $k$ Boolean \defemph{query functions}  that correspond to conjunctions of threshold functions on individual input coordinates. 
That is, 
\begin{equation}
f(\x) = \weight_0 + \sum_{i=1}^{k}{\weight_i q_i(\x)}
\label{eq:add_rule_ensemble}
\end{equation}
where each $q_i\from \R^d \to \{0, 1\}$ is a product $q_i(\x)=p_{i,1}(\x)p_{i,2}(\x)\dots p_{i,c_i}(\x)$ of \defemph{propositions} $p_{i,j}(\x)=\delta(sx_l \geq t)$ with $s \in \{\pm 1\}$ and $t \in \R$.
Each term in~\eqref{eq:add_rule_ensemble} can be interpreted as an IF-THEN rule where the binary queries $q_i$ define the rule antecedents (conditions), and the \defemph{weights} $\weights=(\weight_1,\dots,\weight_k) \in \R^k$ define the rule consequents, i.e., the output of rule $i$ for input $\x\in \R^d$ is $\weight_i$ if $\x$ satisfies the antecedent, i.e., $q_i(\x)=1$ (and $0$ otherwise).
The offset weight $\weight_0$ conceptually describes the output of a \defemph{background rule} with a trivial condition that is satisfied for all data points.

These models, also called ``rule sets'', are of interest, because they combine an interpretable syntax with high modelling flexibility.
Thus, they are a useful alternative to employing opaque models with post-hoc explanations~\citep[e.g.,][]{strumbelj2010efficient, ribeiro2016should}, which can be misleading~\citep{rudin2019stop, kumar2021shapley}. %interpretable machine learning methods aim to produce intrinsically intelligible, yet accurate, models.
Indeed, like simple generalized linear models~\citep[GLMs,][]{mccullagh2019generalized} and generalized additive models \citep[GAMs,][]{Hastie1990, lou2013accurate}, additive rule ensembles are ``modular'' and ``simulatable''~\citep{murdoch2019interpretable}, i.e., they allow the interpretation of one term (rule) at a time, and the output of each term can be computed by a human interpreter ``in her head''.
%Thus they avoid problems with post-hoc explanations
However, in contrast to those simpler models, rule ensembles can also model interaction effects of an arbitrary number of input coordinates.
In fact, they can be regarded as a generalization of tree and forest models by representing each tree leaf through a rule.
Though, given the motivation of interpretability, we are interested in finding much smaller rule ensembles than those typically defined by a forest. 
In other words, the goal in rule ensemble learning really is to optimize the trade-off between accuracy and complexity (see Fig.~\ref{fig:risk_complexity_tradeoff}).

%Boosting
%interpretability / complexity
%adaptions of gradient boosting to rules are not corrective
% What is more, even if 

%Algorithms for learning additive rule ensembles (or rule sets) are an active area of research because of their interpretable syntax paired with a decent predictive performance.
%While there is an emerging consensus that, to be as interpretable as possible, rule ensembles should optimize the trade-off between statistical risk and \emph{complexity} in terms of %number and lengths of rules (see Fig.~1), there is a multitude of diverse approaches for performing this optimization.
Algorithms for learning additive rule ensembles range from computationally inexpensive generate-and-select approaches~\citep{friedman2008predictive, lakkaraju2016interpretable, benard2021interpretable}, over more expensive minimum-description length and Bayesian approaches~\citep{wang2017bayesian}, to expensive full-fledged discrete optimization methods~\citep{dash2018boolean, wei2019generalized}.
Within this range of options, methods based on \emph{gradient boosting}~\citep{friedman2001greedy} are of special interest because of their good accuracy relative to their  cost and their flexibility to adapt to various response variable types and loss functions.
These methods identify one rule condition at a time by optimizing an objective function that aims to approximate gradient descent of the empirical risk in terms of the ensemble's prediction vector.
%Cite adoptions to rule ensembles here and point out that they do not even weight correct
Current boosting adaptions to rule learning~\citep{cohen1999simple, dembczynski2010ender, boley2021better} are, however, based on design choices that compromise the risk/complexity trade-off of the fitted rule ensembles.
They use stagewise weight updates where rules are not revised after they are added to the ensemble,
although re-optimizing all weights in every iteration in a (totally) \emph{corrective update}~\citep{kivinen1999boosting, shalev2010trading} can achieve a smaller risk for the same ensemble complexity.
%. Thus, they can increase accuracy only by adding further rules, even when the same gains can be achieved, in a more interpretable form, by altering those already present in the model.
%In particular, they only guarantee to converge to an empirical risk minimizer for large numbers of included rules instead of minimizing the risk per ensemble size.
%In particular, they
% This issue is partially addressed by corrective boosting, which re-optimizes all rule weights in every iteration. However, it still uses the traditional objective functions to identify the rule bodies to add to the ensemble, which
Interestingly, fixing this issue is not as straightforward as simply switching to a corrective update, because current boosting objective functions
do not anticipate weight corrections, and, as we show here, this can lead to highly sub-optimal choices and in particular fails to guarantee finding the rule condition that best approximates the inclusion of the risk gradient itself to the multi-dimensional weight search.

Here we investigate a new objective function that provides this guarantee and leads to consistent gains in the risk reduction per added rule. This function is based on considering only the part of a rule body orthogonal to the already selected rules, which takes into account that the predictions for previously covered training examples can be adjusted during weight correction. In addition to providing this refined objective function, we also derive a corresponding  algorithm for incrementally evaluating sequences of related rule conditions, which is crucial for efficiently optimizing the objective function in each boosting round, whether through greedy or branch-and-bound search. As we demonstrate on a wide range of datasets, the resulting rule boosting algorithm significantly outperforms the previous boosting variants in terms of risk/complexity trade-off, which can be attributed to a better risk reduction per rule as well as an affinity to select more general rules. At the same time, the computational cost remains comparable to previous objective functions.
We present these main technical contributions in Sec.~\ref{fcogb} and their empirical evaluation in Sec.~\ref{experiments} and provide a concluding discussion in Sec.~\ref{conclustions}. To start, we briefly recall gradient boosting for rule ensembles in Sec.~\ref{grb}.

\section{Rule Boosting}\label{grb}

%%\subsection{Rule ensembles, risk, and comprehensibility}
%An \defemph{additive ensemble} of $k$ rules can be represented by Boolean query functions $q_1, \dots, q_k$ and a \defemph{weight vector} $\weights=(\weight_1, \dots, \weight_k)^T \in \R^k$, and an \defemph{offset weight} $\beta_0 \in \R$
%that jointly describe a function 
%$
%    f(\x) = \beta_0 + \sum_{i=1}^{k}{\weight_i q_i(\x)}
%$,
%the output of which is mapped to the conditional mean of a target variable $Y$ given an input vector $X$, i.e., $\mathbb{E}[Y | X=\x]=g(f(\x))$.
%That is, the queries define the rule antecedents (rule bodies), and the coefficients $\weights$ define the rule consequents, i.e., the output of rule $i$ for input $\x\in \R^d$ is $\weight_i$ if $\x$ satisfies the antecedent, i.e., $q_i(\x)=1$ (and $0$ otherwise).
%The offset weight conceptually describes the output of a background rule, the condition of which is satisfied for all data points.
%Moreover, each \defemph{query function} $q_i\from \R^d \to \{0, 1\}$ is a conjunction of $c_i$  \defemph{propositions}, i.e., $q_i(\x)=p_{i,1}(\x)p_{i,2}(\x)\dots p_{i,c_i}(\x)$ where the $p_{i,j}$ are typically a threshold function on an individual input variable, e.g., $p_{i,j}(\x)=\delta(sx_l \geq t)$ with $s \in \{\pm 1\}$.
%We denote the set of available propositions by $\cP$ and the \defemph{query language} of all conjunctions by $\cQ$.
%% \begin{equation*}
%%     q_i = \bigwedge_{j=1}^{m_i}{p_j}, 
%% \end{equation*}

We are concerned with the trade-off of two properties of an additive rule ensemble $f$: the \defemph{regularized empirical risk} and its \defemph{complexity}. The first is defined as
%\footnote{While $l_2$-regularization can be added to all algorithms, we focus on the unregularized case for ease of exposition.}
$
    R_\lambda(f) = \sum_{i=1}^{n}{l(f(\x_i), y_i)}/n + \lambda \|\weights\|^2/n
$,
with some positive loss function $l(f(\x), y)$ averaged over a training set, $\{(\x_1, y_1),\dots, (\x_n, y_n)\}$ sampled with respect to the joint distribution of $X$ and $Y$, and estimates the \defemph{prediction risk} $\expect[l(f(X), Y)]$ for new random data $(X, Y)$.
The second is defined as $C(f) = k + \sum_{i=1}^k c_i$ and approximates the cognitive effort required to parse all rule consequents and antecedents.
Here we consider loss functions that can be derived as negative log likelihood (or rather, deviance function) when interpreting the rule ensemble output as natural parameter of an exponential family model of the target variable, which  guarantees that the loss function is strictly convex and twice differentiable.
%This has several advantages. Firstly, the resulting rule ensembles can be interpreted as probabilistic models that encode a conditional distribution $Y | X=x$. Hence, besides enabling predictions via $E[Y| X=x]$, they also provide calibrated uncertainty estimates.
%Moreover, it also guarantees that the loss function is strictly convex and twice differentiable. 
Specifically, we consider the cases of $Y \given X$ being normally distributed and $\activation(a)=a$ resulting in the \defemph{squared loss} $l_\mathrm{sqr}(f(x_i), y_i) = (f(x_i)-y_i)^2$, Bernoulli distributed and $\activation(a)=1/(1+\exp(-a))$ resulting in the \defemph{logistic loss} $l_\mathrm{log}(f(x_i), y_i)=\log(1+\exp(-y_if(x_i)))$, and Poission distributed and $\activation(a)=\exp(a)$ resulting in the \defemph{Poisson loss} $l_\mathrm{poi}(f(x_i), y_i)=y_i\log y_i-y_if(x_i)-y_i+\exp(f(x_i))$.
\begin{comment}
Specifically, we consider the cases of \defemph{squared loss} 
\begin{equation*}
l_\mathrm{sqr}(f(x_i), y_i) = (f(x_i)-y_i)^2
\end{equation*}
for regression problems where we model $Y | x$ as normally distributed with mean $f(x)$, \defemph{logistic loss} 
\begin{equation*}
l_\mathrm{log}(f(x_i), y_i)=\log(1+\exp(-y_if(x_i)))
\end{equation*}
for (binary) classification where we model $Y| x$ as Bernoulli distributed with positive probability $1/(1+\exp(-f(x)))$ and \defemph{Poisson loss} 
\begin{equation*}
l_\mathrm{poi}(f(x_i), y_i)=\log y_i-f(x_i)-y_i+\exp(f(x_i))
\end{equation*}
for count regression where we model $Y | x$ as Poisson distributed with mean $\exp(f(x))$.
\end{comment}

\paragraph{Gradient boosting}
Gradient boosting methods are iterative fitting schemes for additive models that, in our context, produce a sequence of rule ensembles $f^{(0)}, f^{(1)}, \dots, f^{(k)}$ starting with an empty  $f^{(0)}=\beta_0$ where $\beta_0 = \argmin \{R_0(\mathbf{1}\beta)\}$, and for $1 \leq t \leq k$,
\begin{align*}
    f^{(t)}(\x) &= \beta_0 + \beta_{t, 1}q_1(\x) +\dots + \beta_{t, t}q_t(\x)\\
    q_t &= \argmax \{\obj(\q; f^{(t-1)})\with {q \in \cQ}\},
\end{align*}
where query $q_t$ is chosen to maximize some objective $\obj$ that is defined on the query output vector $\q=(q(x_1),\dots, q(x_n))$ and that additionally depends on the previous model, in particular its output vector $\f_{t-1}=(f^{(t-1)}(\x_1), \dots, f^{(t-1)}(\x_n))$.
%In other words, it is a greedy approach that fits one rule per iteration without correcting the previously fitted rules.
%Specifically, the seminal idea in \citet{friedman2001greedy} is to choose the query (or ``baselearner'') in iteration $t$ to perform an approximate gradient descent in $\R^n$---the model ``output space'' defined by the  training data. 
Specifically, the original gradient boosting scheme~\citep{friedman2001greedy} performs an approximate gradient descent in the model output space by choosing 
\begin{align}
    \gbobj(\q) &= |\inner{\g}{\q}|/\|\q\| \enspace , \label{eq:gb_obj}\\
    \weights_t &= [\weights_{t-1}; \argmin_{\weight \in \R}R_\lambda(\f_{t-1}+\weight q_t)],\label{eq:gb_weight_update}
\end{align}
where $\g$ is the \defemph{gradient vector} of the unregularized empirical risk function with components $g_i=\partial l(f(\x_i), y_i) / \partial f(\x_i)$.
We refer to \eqref{eq:gb_obj} as \defemph{gradient boosting objective} and to \eqref{eq:gb_weight_update} as \defemph{stagewise weight update}, because it fits the final prediction function in ``stages'' corresponding to each term.

One popular variant of this scheme chooses queries that correspond to a direction in output space with maximum rate of change in risk according to its first order approximation~\citep{mason1999functional}. This results in the \defemph{gradient sum objective}
\begin{equation}
    \gsobj(\q) = |\inner{\g}{\q}| 
\end{equation}
which is guaranteed~\citep[Thm.~1]{dembczynski2010ender} to select rules as least as general as the gradient boosting objective in terms of the number of selected data points $\|\q\|_1$.
This increased generality, however, can come at the expense of a reduced risk reduction when choosing $\weights$, because the output correction of data points with large gradient elements has to be toned down to avoid over-correction of other selected data points with small gradient elements.
Another variant~\citep{chen2016xgboost, boley2021better} minimizes the second order approximation to $R_\lambda$ via the \defemph{extreme boosting objective} and corresponding closed-form weight updates
\begin{align}
    \xgbobj(\q) &= |\inner{\g}{\q}|/\sqrt{\inner{\h}{\q}+\lambda}\\
    \weights_t &= [\weights_{t-1}; -\inner{\q}{\g}/(\inner{\q}{\h} + \lambda)],\label{eq:xgb_weights}
\end{align}
where $\h=\text{diag}(\nabla^2_{f(\textbf{x})}{R(f)})$ is the diagonal vector of the unregularized risk Hessian  again with respect to the output vector $\f$.
Note that this approach is well-defined for our loss functions derived from exponential family response models, which guarantee defined and positive $\h$.
In particular for the squared loss, it is equivalent to standard gradient boosting, because the second order approximation is exact for $l_\mathrm{sqr}$ and $\h$ is constant.

While the closed-form solution of the weight-updates~\eqref{eq:xgb_weights} reduces the update cost by some amount, it can be highly sub-optimal whenever the second order approximation is loose as, e.g., is the case with the Poisson loss.
Especially if one aims for small interpretable rule ensembles and correspondingly wants to minimize the risk  for each ensemble size, it appears sensible to choose $\weights$ that minimize the actual empirical risk.
The most consequent realization of this idea is given by \defemph{corrective boosting}\footnote{In the literature, the update~\eqref{eq:weight_correction} is often referred to as ``totally corrective'' or ``fully corrective''  whenever the unqualified term  ``corrective boosting'' is already used to refer to the standard stagewise update~\eqref{eq:gb_weight_update} .}~\citep{kivinen1999boosting, shalev2010trading} where the component-wise weight updates~\eqref{eq:gb_weight_update} are replaced by a full joint re-optimization of the weights of all selected queries, i.e.,
\begin{equation}
    \weights_t = \argmin \{R_\lambda(\Q_t \weights)\with \weights \in \R^t\}
    \label{eq:weight_correction} \enspace ,
\end{equation}
where $\Q_t=[\q_1, \dots, \q_t]$ is the $n\times t$ \defemph{query matrix} with the output vectors of all selected queries as columns.
Given that our loss function $l$ and therefore the empirical risk $R_\lambda$ are convex, the additional computational cost for solving~\eqref{eq:weight_correction} instead of~\eqref{eq:gb_weight_update} is negligible relative to the cost of query optimization, especially when targeting small ensembles sizes $k$.

\paragraph{Single rule optimization}
%weight update can be done by Newton search
%remains: how to find query
%last decades ML literature: greedy optimization, designed for classification and regression trees for efficient generation of hundreds of trees (and thousands of rules)
%however, we are interested in small number of rules
%problem: misses the very interactions that are the reason why we prefer rule ensembles over simpler additive models like generalized linear models and generalized additive models
While the rule optimization literature can be broadly divided into  branch-and-bound and greedy (or more generally beam) search, these approaches are actually closely related: 
%they can both be described as traversing a lattice on the query language $\cQ$ imposed by a \defemph{specialization relation} $q \gen q'$ that holds if the propositions in $q'$ are a superset of those mentioned in $q$, and $q'$ thus logically implies $q$.The difference between the approaches is how they generate and discard specializations of candidate queries.
both searches start with a trivial query, $q(\x)=1$ for all $\x \in \R^d$, enqueued on a priority queue of candidates and then, until the queue is exhausted, iteratively (i) dequeue the top candidate $q$, (ii) update the current best solution $q^*$ if $\obj(q) > \obj(q^*)$, and (iii) enqueue new candidates.
For beam search with beam width parameter $k$, these candidates are $k$ best augmentations $q'=qp$ in terms of $\obj$. In contrast, branch-and-bound enqueues all augmentations $q'$ that pass a non-redundancy check\footnote{In efficient implementations this non-redundancy check ensures that each query output vector $\q$ is at most enqueued once through exploitation of a closure system formed by the queries (see \citet{boley2021better} and references therein).} if $q$ passes a bound check $\bnd(q) \geq \obj(q^*)$.
If $\bnd(q)$ is an upper bound to all possible subsequent augmentations to $q$, it is called \defemph{admissible}, and finding the optimal query is guaranteed.

Crucially, both search variants rely on efficient incremental computation of the objective function for sequences of candidate queries to avoid an overall quadratic computational complexity in the number of data points $n$. Specifically, they reduce the candidate generation and filtering to the following \textbf{prefix optimization problem}: for a given ordered sub-selection of $l$ data points $\sigma: \{1, \dots, l\} \to \{1, \dots, n\}$, find 
\begin{equation}
    \argmax \{\obj(\e_{\sigma(1)}+\dots+ \e_{\sigma(i)}) \with 1 \leq i \leq l)\}
    \label{eq:prefix_opt}
\end{equation}
in time $O(l)$, where we denote by $\e_i$ the $i$-th standard unit vector and observe that the objective functions are functions of the query output vectors on the training data, i.e., $\obj(q)=\obj(\q)$. 
For beam search, this problem is solved once per input variable $j$ and $s \in \{\pm 1\}$, setting $\sigma$ to order according to $sx_j$, to find the best augmentation $q\delta(sx_l \geq t)$ with $t \in \{x_{i, j}\}$.
For branch-and-bound, the problem is solved twice per dequeued candidate $q$ to compute the bounding function.
Here, $\sigma$ describes the sub-selection of the $l=\boldsymbol{1}^T\q$ data points selected by $q$, and the order varies based on the objective function.
For $\gbobj$, the order is with respect to $sg_i$, and for $\xgbobj$, the order is with respect to the ratio $sg_i/h_i$, in both cases leading to admissible bounding functions \citep{boley2021better}.

\section{Orthogonal Gradient Boosting}\label{fcogb}
\begin{figure*}[t!]
% \vskip 0.1in
\centering
% \includegraphics[width=0.3\columnwidth]{}
% % we can have another figure here
\includegraphics[width=0.67\columnwidth]{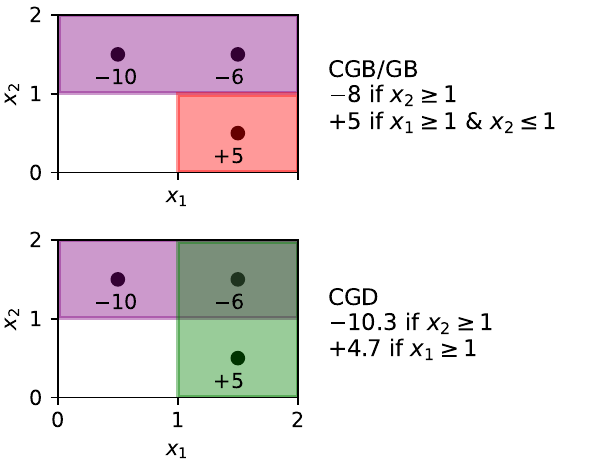}
\includegraphics[width=0.63\columnwidth]{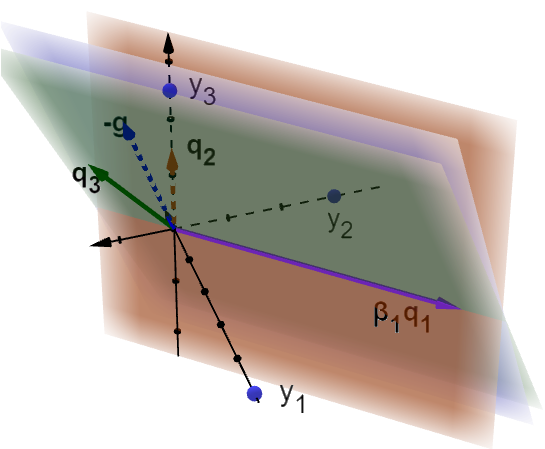}
% \hspace{0.2cm}
\includegraphics[width=0.63\columnwidth]{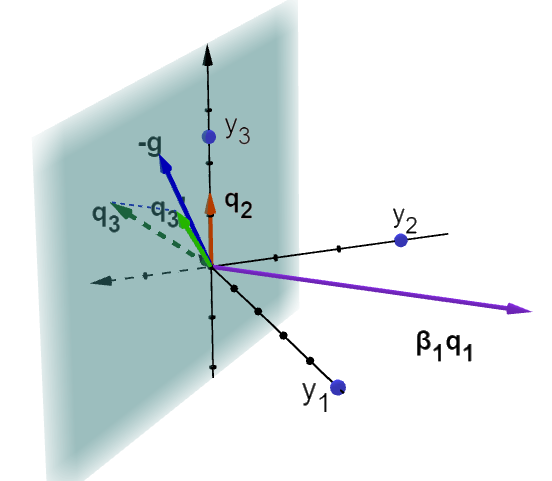} 
\vskip -0.1in
\caption{
A regression example with three data points with target values $\y=(-10, -6, 5)$ and three queries with outputs $\q_1=(1, 1, 0)$, i.e., $q_1$ selects the first two data points, $\q_2=(0, 0, 1)$, and $\q_3=(0, 1, 1)$. Gradient boosting  selects $q_1$ with weight $\beta_1=-8$ as first rule, resulting in negative gradient  $-\g=(-2, 2, 5)$.
\emph{Left:} The input space and the rule ensembles generated by CGB/GB and CGD. The CGB method generates output $(-8, -8, 5)$, and CGD generates output ${-10.3, -5.6, 4.7}$.
\emph{Middle:} Approximations to target subspace (blue) spanned by $\q_1$ and $-\g$. The subspace (green) spanned by $\q_3$ and $\q_1$ is a better approximation than the subspace (orange) selected by standard gradient boosting (spanned by $\q_2$ and $\q_1$). %~\eqref{eq:gb_obj}.
\emph{Right:} After projection onto orthogonal complement of already selected query, angle between $\q_3$ and $-\g$ is smaller than that between $\q_2$ and $-\g$ and is thus successfully selected by orthogonal gradient boosting objective.}
\label{fig:subspace}
\end{figure*}

Using the corrective weights updates~\eqref{eq:weight_correction} turns boosting into a form of forward variable selection for linear models.
However, in contrast to conventional variable selection where all variables are given explicitly, we still have to identify a good query $q_t$ in each boosting iteration, 
and it turns out that finding the appropriate query is more complicated as in the case of single weight optimization~\eqref{eq:gb_weight_update}.

\subsection{Best Geometric Approximations}
\newcommand{\bv}{\mathbf{v}}
To be more precise, assume we are adding to the ensemble a basis function with output vector $
\bv$ in boosting round $t$, and let us denote by $A^\mathrm{GD}_\bv = \f_{t-1}+\spanof\{\bv\}$ the affine weight optimization subspace of the single weight update~\eqref{eq:gb_weight_update} and by $A^\mathrm{CGD}_\bv =\range[\Q_{t-1}; \bv]$ the weight optimization subspace of the corrective weight update~\eqref{eq:weight_correction}. 
%For instance, for gradient descent this optimization happens at round $t$ in the affine subspace $A^\mathrm{GD}_\bv = \f_{t-1}+\spanof\{\bv\}$.
Moreover, for some family of weight optimization spaces $A_\bv$ containing a target vector $\f \in A_\bv$, we define the \defemph{best geometric approximation} with respect to the query language $\cQ$ as 
\begin{equation*}
    \tilde{\f}_\cQ=\argmin \{\|\f - \f'\| \with \f' \in A_\q, q \in \cQ\} \enspace .
\end{equation*}

In the case of the single weight update, it is easy to show that for any $\f=\f_{t-1}+\alpha\bv \in A^\mathrm{GD}_\bv$,the squared projection error for a given $q \in \cQ$ is:
\begin{equation}
    \min_{\beta \in \R} \|\alpha \bv - \beta \q \|^2 = \alpha^2\left(\|\bv\|^2 - \frac{(\inner{\q}{\bv})^2}{\|\q\|^2}\right) \enspace .
    \label{eq:1d_proj_error}
\end{equation}
Hence, if we choose the target output vector $\bv$ to be the one resulting from the ideal gradient descent update at round $t$, i.e., 
\begin{equation*}
    \f^\mathrm{GD}=\argmin \{R_\lambda(\f') \with \f' \in A^\mathrm{GD}_\g\},
\end{equation*}
then its best geometric approximation $\tilde{\f}^\mathrm{GD}_\cQ$ is an element of the affine subspace $A^\mathrm{GD}_\q$ where $\q$ denotes the output vector of a query that minimizes the standard gradient boosting objective $\gbobj$. Consequently, as the final output vector $\f_\mathrm{gb}$ is found via line search in this affine subspace, its risk is no greater than that of the best geometric approximation, i.e., $R_\lambda(\f_\mathrm{gb}) \leq R_\lambda(\tilde{\f}^\mathrm{GD}_\cQ)$.

However, when considering the \emph{corrective} gradient descent update and the target vector 
\begin{equation}
    \f^\mathrm{CGD}=\argmin \{R_\lambda(\f') \with \f' \in A^\mathrm{CGD}_\g\} \enspace.
\end{equation}
corresponding to adding the gradient itself to the optimization space, all previously discussed objective functions fail to identify the best geometric approximation in general.
For the standard gradient boosting objective, this is demonstrated in the example in Fig.~\ref{fig:subspace}.
Here, $\gbobj$ identifies in round $2$ a query resulting with or without weight correction in an output vector $\f^\mathrm{CGB}_\mathrm{gb}=\f^\mathrm{GB}_\mathrm{gb}$ with (unregularized) risk $24/9$, whereas the best geometric approximation to the target vector $\tilde{\f}_\cQ^\mathrm{CGD}$ has a risk of only $1/9$.
%Here, $\gbobj$ identifies in round $2$ query $q_2$ to add to the model $q_1$ with $\f_1=-8\q_1=(-8, -8, 0)$ and $\g=(2, -2, -5)$ resulting in $\f_\mathrm{gb}=(-8, -8, 5)$ with risk $R_0(\f_\mathrm{gb})=8.3$.
%$\spanof\{\q_1, \q_3\}$ (green), is a better approximation to $\spanof\{\q_1, -\g\}$ (blue) than  $\spanof\{\q_1, \q_2\}$ (orange). However, the latter is selected by standard gradient boosting, because the angle between $\q_2$ and $-\g$ is smaller than that between $\q_3$ and $-\g$.

% We still would like to add the direction of steepest descent, i.e., the negative gradient $-\g$, to the subsequent risk optimization step and approximate as closely as possible the outcome $[\Q_{t-1}; \g]\boldsymbol{\alpha}^*$ where $\alphavec^* \in \R^t$ are the risk minimizing weights for $\q_1, \dots, \q_{t-1}, \g$.
% Therefore, the best approximating query $q$ is now given by
% \begin{equation}
%     \argmin_{q \in \cQ} \min_{\weights \in \R^t} \| [\Q_{t-1}; \g]\boldsymbol{\alpha}^* - [\Q_{t-1}; \q]\weights\|^2
%     \label{eq:gen_proj_error} \enspace .
% \end{equation}
% Importantly, the standard gradient boosting objective does not generally identify the optimally approximating query, as demonstrated by the example in Fig.~\ref{fig:subspace}: $\spanof\{\q_1, \q_3\}$ (green), is a better approximation to $\spanof\{\q_1, -\g\}$ (blue) than  $\spanof\{\q_1, \q_2\}$ (orange). However, the latter is selected by standard gradient boosting, because the angle between $\q_2$ and $-\g$ is smaller than that between $\q_3$ and $-\g$.

\subsection{Objective for Corrective Boosting}

The intuitive reason for the gradient boosting objective failing to identify the correct query in Fig.~\ref{fig:subspace} is that selecting $x_2$ in addition to $x_3$ is not beneficial for the overall risk reduction \emph{if} we are only allowed to set the weight for the newly selected query. 
This is because then this weight has to be a compromise between the two different magnitudes of correction required for $x_2$, which only needs a small positive correction, and $x_3$, which needs a large positive correction.
If we, however, are allowed to change the weight of the previously selected query this consideration changes, because we can now balance an over-correction for $x_2$ by adjusting the weight of the first rule.
While on first glance it seems unclear how much of such re-balancing can be applied without harming the overall risk,
it turns out that this is captured by a simple criterion based on the norm of the part of the newly selected query that is orthogonal to the already selected ones (see SI for all proofs).
% \begin{lemma}\label{th:objective_func}
%     For $\g\! \in \!\R^n$, $\Q\!=\![\q_1, \dots, \q_{t-1}] \in \R^{n \times (t-1)}$, and $\f\! \in \! \spanof\{\q_1, \dots, \q_{t-1}, \g\}$, we have
%     \begin{equation}
%         \argmin_{q \in \cQ} \min_{\weights \in \R^t} \|\f - [\q_1, \dots, \q_{t-1}, \q]\weights\|^2 = \argmax_{q \in \cQ} \frac{|\inner{\g_\perp}{\q}|}{\|\q_\perp\|} \enspace .
%     \end{equation}
%     where for a vector $\vecv \in \R^n$ we denote by $\vecv_\perp$ its projection onto the orthogonal complement of $\range \, \Q$.
% \end{lemma}
\begin{restatable}{proposition}{mainprop}
Let $\f^\mathrm{CGD}=\argmin\{R_\lambda(\f') \with \f' \in \range [\Q_{t-1}; \g]\}$ be the output vector of the ideal corrective gradient descent update in round $t$ and 
\begin{equation}\label{eq:ogbobj}
    q = \argmax_{q \in \cQ} |\g_\perp^T \q|/\|\q_\perp\|
\end{equation}
where for $\vecv \in \R^n$ we denote by $\vecv_\perp$ its projection onto the orthogonal complement of $\range \, \Q_{t-1}$.
Then the output vector $\f_q=\argmin\{R_\lambda(\f') \with \f' \in \range[\Q_{t-1}; \q]\}$ dominates the optimal geometric approximation to $\f$ with respect to $\cQ$, i.e.,
$R_\lambda(\f_q) \leq R_\lambda(\tilde{\f}^\mathrm{CGD}_\cQ)$.
\label{prop:main}
\end{restatable}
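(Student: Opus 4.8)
The plan is to reduce the statement to a purely geometric claim: that the best geometric approximation $\tilde{\f}^\mathrm{CGD}_\cQ$ lies in the corrective weight-optimization subspace $\range[\Q_{t-1};\q]$ of the query $q$ chosen by \eqref{eq:ogbobj}. Once this is shown, the conclusion is immediate, since $\f_q$ is by construction the minimizer of $R_\lambda$ over precisely that subspace, so $R_\lambda(\f_q)\le R_\lambda(\f')$ for every $\f'\in\range[\Q_{t-1};\q]$, and in particular for $\f'=\tilde{\f}^\mathrm{CGD}_\cQ$. To prepare for this I would write $U=\range\Q_{t-1}$, let $P$ denote orthogonal projection onto $U$, and use the shorthand $\vecv_\perp=\vecv-P\vecv$ from the proposition statement. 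Two elementary observations set the stage: first, for any query $q'$ with $\q_\perp\ne\0$ one has $\range[\Q_{t-1};\q']=U\oplus\spanof\{\q_\perp\}$, since $\q'-\q_\perp\in U$; second, because $\f^\mathrm{CGD}\in\range[\Q_{t-1};\g]=U\oplus\spanof\{\g_\perp\}$, its component orthogonal to $U$ is a scalar multiple of $\g_\perp$, i.e.\ $\f^\mathrm{CGD}=P\f^\mathrm{CGD}+\alpha\g_\perp$ for some $\alpha\in\R$. If $\g_\perp=\0$ then $\f^\mathrm{CGD}\in U\subseteq\range[\Q_{t-1};\q]$ and the claim is trivial, so I assume $\g_\perp\ne\0$.

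The core computation is then the squared distance from $\f^\mathrm{CGD}$ to $\range[\Q_{t-1};\q']$ for a generic query $q'$. Since $P\f^\mathrm{CGD}\in U$ is contained in every such subspace it is matched exactly, and the Pythagorean theorem leaves only $\min_{\beta\in\R}\norm{\alpha\g_\perp-\beta\q_\perp}^2$; applying the one-dimensional projection identity \eqref{eq:1d_proj_error} in the orthogonal complement of $U$ gives $\alpha^2\bigl(\norm{\g_\perp}^2-(\inner{\g_\perp}{\q_\perp})^2/\norm{\q_\perp}^2\bigr)$. Here I would invoke the key identity $\inner{\g_\perp}{\q_\perp}=\inner{\g_\perp}{\q'}=\inner{\g}{\q_\perp}$, which holds because $\q'-\q_\perp\in U$ while $\g_\perp\perp U$; this is exactly what makes the objective \eqref{eq:ogbobj} --- written with $\g_\perp$ in the numerator but $\q_\perp$ in the denominator --- equal to the quantity governing the approximation error, so the error becomes $\alpha^2\bigl(\norm{\g_\perp}^2-\ogbobj(q')^2\bigr)$ with $\ogbobj(q')=|\inner{\g_\perp}{\q'}|/\norm{\q_\perp}$. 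Minimizing this over $q'\in\cQ$ is therefore equivalent to maximizing $\ogbobj$, which is attained at $q'=q$ (queries with $\q_\perp=\0$ have error $\alpha^2\norm{\g_\perp}^2$ and so are never better). Hence $\tilde{\f}^\mathrm{CGD}_\cQ$ is the orthogonal projection of $\f^\mathrm{CGD}$ onto $\range[\Q_{t-1};\q]$ and in particular lies in that subspace, which completes the reduction above.

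I do not anticipate a deep obstacle; the substance lies in getting the orthogonal-decomposition bookkeeping right, and the one genuinely load-bearing step is the identity $\inner{\g_\perp}{\q_\perp}=\inner{\g_\perp}{\q'}=\inner{\g}{\q_\perp}$ that aligns \eqref{eq:ogbobj} with the geometric-approximation criterion. The remaining care is in the degenerate cases --- $\g_\perp=\0$ and queries with $\q_\perp=\0$, both handled above --- and in noting that the minimizers $\f^\mathrm{CGD}$, $\f_q$, and $\tilde{\f}^\mathrm{CGD}_\cQ$ are well defined, because $R_\lambda$ is strictly convex (strictly convex loss plus the quadratic weight penalty) so its restriction to each of the finitely many closed subspaces in play attains a unique minimum, and the orthogonal projection onto a subspace always exists.
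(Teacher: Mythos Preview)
Your proposal is correct and follows essentially the same route as the paper: both arguments decompose $\range[\Q_{t-1};\q']$ as $U\oplus\spanof\{\q'_\perp\}$, use the Pythagorean theorem to reduce the projection error to the one-dimensional problem $\min_\beta\|\alpha\g_\perp-\beta\q'_\perp\|^2$, invoke the identity $\inner{\g_\perp}{\q'_\perp}=\inner{\g_\perp}{\q'}$ to match the objective \eqref{eq:ogbobj}, and conclude that $\tilde{\f}^\mathrm{CGD}_\cQ\in\range[\Q_{t-1};\q]$. Your version is in fact slightly more careful than the paper's in treating the degenerate cases $\g_\perp=\0$ and $\q'_\perp=\0$ and in noting well-definedness of the minimizers.
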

The right hand side of Eq.~\eqref{eq:ogbobj} is undefined for redundant query vectors $\q$ that lie in $\range\, \Q$ and therefore have $\|\q_\perp\|=0$.
%Through Lm.~\ref{th:g_orth} we know that for such queries we also have $\inner{\g}{\q}=0$, which suggests to simply fix this issue by defining the objective value in this case to be $0$.
%However, this solution would not fix numerical instabilities when $\|\q_\perp\|$ is close to zero.
Therefore we add a small positive value $\epsilon$ to the denominator, which can be considered an alternative \defemph{regularization parameter}.
With this we define the \defemph{orthogonal gradient boosting} objective function as
\begin{equation}
    \ogbobj(q) = |\inner{\g_\perp}{\q}|/(\|\q_\perp\|+\epsilon) \enspace .
    \label{eq:obj_final}
\end{equation}

This function measures the cosine of the angle between the gradient vector and the orthogonal projection of a candidate query vector $\q$. This is in contrast to the standard gradient boosting objective, which considers the angle of the unprojected query vector instead. 
In the example in Fig.~\ref{fig:subspace} we can observe that this difference leads to successfully identifying the best approximating subspace, as guaranteed by Prop.~\ref{prop:main}.
In that example, this corresponds to a factor of 24 improvement over the empirical risk produced by the gradient boosting objective function. 
In fact, the potential advantage of orthogonal gradient boosting is unbounded relative to both the gradient boosting and the gradient sum objective function.
\begin{restatable}{proposition}{advantageprop}\label{prop:advantage}
    There is one-dimensional input data $\X\in \R^{5\times1}$ and a sequence of output data, $\y^{(m)}\in \R^5$, such that for the empirical risk of three-element rule ensembles with corrective weight update we have $
    \lim_{m\rightarrow\infty}{R\left(f_\ogb\left(\X, \y^{(m)}\right)\right)}=0,
    $ but
    $$\lim_{m\rightarrow\infty}{\!\!\!R\!\left(f_\gb\!\left(\X, \y^{(m)}\right)\right)}\!=\!
    \lim_{m\rightarrow\infty}\!\!\!{R\!\left(f_\gs\!\left(\X, \y^{(m)}\right)\right)}\!=\!\infty$$
    where $f_{*}(\X, \y^{(m)})$ denotes the three-element rule ensemble trained by $\obj_{*}$.
\end{restatable}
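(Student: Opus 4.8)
The plan is to construct an explicit worst-case instance in which the orthogonal gradient boosting objective is able to reconstruct the target vector exactly (driving the risk to zero), while both the gradient boosting and gradient sum objectives are forced by their geometry to pick a ``wrong'' query in the first or second round, after which no corrective reweighting of three rules can recover. I would work with the squared loss throughout, since then the gradient after any corrective update is simply (a scalar multiple of) the residual $\f - \y$, the corrective weight update in~\eqref{eq:weight_correction} is ordinary least squares onto $\range \Q_t$, and $R(f_*)$ equals $\|\y - \Pi_{\range \Q_*}\y\|^2/5$ up to a constant. So the whole problem reduces to: which $2$-dimensional subspace of $\R^5$ (spanned by two of the available query vectors, plus the background rule) does each objective end up selecting, and how far is $\y^{(m)}$ from it.

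The key steps, in order, are as follows. First, fix the five one-dimensional inputs $x_1 < x_2 < \dots < x_5$ so that the query language $\cQ$ consists of the threshold rules on this single coordinate; enumerate the resulting handful of candidate output vectors $\q$ (prefixes and suffixes of $(1,1,1,1,1)$). Second, design a family $\y^{(m)}$ with a large ``spike'' component of size growing with $m$ that lies in the span of two specific threshold queries $\q_a, \q_b$ (so that $f_\ogb$ achieves zero residual in at most three rounds by Proposition~\ref{prop:main}, since the best geometric approximation is exact), together with a carefully tuned component that makes the \emph{first-round} choice of $\gbobj$ and $\gsobj$ land on a different query $\q_c$ whose span with any later single addition cannot contain the spike. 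Third, verify the first-round selections: compute $|\inner{\g}{\q}|/\|\q\|$ and $|\inner{\g}{\q}|$ for every candidate $\q$ at $f^{(0)}=\beta_0 \mathbf{1}$ and check that the maximizer is $\q_c$ for large $m$ — this is a finite comparison of rational functions of $m$, dominated by the leading terms. Fourth, trace the remaining two rounds of the $\gb$/$\gs$ runs: after $\q_c$ is in, show that the residual structure still pulls the next selections away from $\{\q_a, \q_b\}$, or else that even selecting one of them is not enough because the third slot cannot close the gap; conclude $\|\y^{(m)} - \Pi\y^{(m)}\| \to \infty$ for the subspace these methods actually reach, hence $R(f_\gb), R(f_\gs) \to \infty$. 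Fifth, for $f_\ogb$, exhibit the explicit three-query selection (or argue via Prop.~\ref{prop:main} that the best geometric approximation with respect to $\cQ$ has risk $0$ because $\y^{(m)} - \beta_0\mathbf{1} \in \spanof\{\q_a,\q_b\}$ by construction, so $R(f_\ogb)=0$ for every $m$, not merely in the limit).

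The main obstacle I expect is the bookkeeping of steps three and four for $\gb$ and $\gs$ \emph{simultaneously}: the instance must be rigged so that the \emph{same} data defeats both objectives, even though they order candidates differently ($|\inner{\g}{\q}|/\|\q\|$ versus $|\inner{\g}{\q}|$), and it must keep defeating $\gs$ across all three greedy rounds despite $\gs$'s known bias toward more general (larger $\|\q\|_1$) rules. Getting a single five-point configuration where the leading-order-in-$m$ comparisons come out right for both methods in round one, and where the induced round-two and round-three residuals continue to avoid $\{\q_a,\q_b\}$ (or render them useless), is the delicate part; I would approach it by first solving the $\gb$ case — pick $\q_c$ to be the vector that is ``angularly'' closest to $\y^{(m)}-\beta_0\mathbf 1$ but spans a bad subspace — and then perturb the non-spike coordinates of $\y^{(m)}$ by $o(m)$ amounts to also tip the unnormalized comparison in favor of $\q_c$ for $\gs$, checking that these perturbations do not disturb the exact-recovery property used for $f_\ogb$. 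A clean way to make step five airtight is to choose $\y^{(m)} - \beta_0 \mathbf 1$ to lie \emph{exactly} in a two-dimensional coordinate-subspace-like span of two nested threshold queries, so that Proposition~\ref{prop:main} gives $R(f_\ogb) = R(\tilde\f^{\mathrm{CGD}}_\cQ) = 0$ with no limiting argument needed on that side.
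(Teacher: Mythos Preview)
Your proposal contains a structural gap that would block the argument. The key fact you are missing is that in round~$1$, with $\Q_0$ empty, the orthogonal projection is the identity, so $\ogbobj=\gbobj$ exactly: orthogonal gradient boosting and standard gradient boosting \emph{always select the same first query}. Hence your plan to have $\gb$ land on a ``wrong'' $\q_c$ in round one while $\ogb$ heads toward $\{\q_a,\q_b\}$ is impossible --- $\ogb$ will pick $\q_c$ too. Any divergence between the methods can only begin in round~$2$, once there is a nontrivial subspace to project onto. This also undermines your appeal to Proposition~\ref{prop:main}: that result is a \emph{per-round} guarantee, saying the $\ogb$ choice in round $t$ dominates the best single-query geometric approximation to $\f^\mathrm{CGD}\in\range[\Q_{t-1};\g]$. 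It does not say three rounds of $\ogb$ find the globally best three-query subspace. If $\y-\beta_0\mathbf{1}\in\spanof\{\q_a,\q_b\}$ but round~$1$ selects $\q_c\notin\{\q_a,\q_b\}$, there is no reason the best approximation in $\range[\q_c,\q]$ over $\q\in\cQ$ has risk~$0$, so Proposition~\ref{prop:main} gives you nothing for round~$2$.

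The paper's proof does not invoke Proposition~\ref{prop:main} at all here. It fixes $\X=(1,2,3,4,5)$ and $\y^{(m)}=(-\alpha_m-\epsilon_m,\,\alpha_m,\,-3\alpha_m-\epsilon_m,\,\alpha_m+\epsilon_m,\,2\alpha_m+\epsilon_m)$ with $\alpha_m\to\infty$, $\epsilon_m\to 0$, and then exhaustively tabulates all three objective values for every candidate query across all three rounds, reading off the selected queries and computing the final corrective-weight risks explicitly: $3\alpha_m^2/2$ for $\gb$, $3(3\alpha_m+\epsilon_m)^2/8$ or $2(6\alpha_m^2+2\alpha_m\epsilon_m+\epsilon_m^2)/5$ for $\gs$ (depending on a first-round tie), and $3\epsilon_m^2/5$ for $\ogb$. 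In particular $\gb$ and $\ogb$ agree in round~$1$ (both pick $(0,0,1,0,0)$) and only diverge from round~$2$ onward; the $\ogb$ success is verified by direct computation, not by a spanning argument. Note also that $R(f_\ogb)=3\epsilon_m^2/5$ is not zero --- the $\epsilon_m$ perturbation is essential for breaking ties and steering the round-by-round comparisons, so your hope of exact recovery is likely incompatible with simultaneously defeating $\gb$ and $\gs$. Finally, since queries are conjunctions of thresholds, on one-dimensional data the admissible $\q$ are all contiguous \emph{intervals}, not just prefixes and suffixes; there are $15$ of them on five points, and the construction relies on interior intervals such as $(0,0,1,0,0)$.
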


\begin{proof}[Proof sketch]
    We define $\X$ as $(1,2,3,4,5)$
and $\y^{(m)}=(-\alpha_m-\epsilon_m, \alpha_m, -3\alpha_m-\epsilon_m, \alpha_m+\epsilon_m, 2\alpha_m+\epsilon_m)$,
where $\alpha_m, \epsilon_m \in \R$ are two arbitrary sequences with $\alpha_m\rightarrow\infty$ and $\epsilon_m\rightarrow0$.
The risks of the rule ensembles generated by 
$\gsobj$ is $R\left(f_\gs\left(\X, \y^{(m)}\right)\right)=3(3\alpha_m+\epsilon_m)^2/8$ or
$R\left(f_\gs\left(\X, \y^{(m)}\right)\right)=2\left(6\alpha_m^2+2\alpha_m\epsilon_m+\epsilon_m^2\right)/5.$
The risk of the rule ensemble generated by 
$\gbobj$ is $R\left(f_\gb\left(\X, \y^{(m)}\right)\right)=3\alpha_m^2/2.$
Therefore, $\lim_{m\rightarrow\infty}{\!\!\!R\!\left(f_\gb\!\left(\X, \y^{(m)}\right)\right)}\!=\!
    \lim_{m\rightarrow\infty}\!\!\!{R\!\left(f_\gs\!\left(\X, \y^{(m)}\right)\right)}\!=\!\infty$. 
However, the risk of the rule ensemble generated by $\ogbobj$ is $R\left(f_\ogb\left(\X, \y^{(m)}\right)\right)=3\epsilon_m^2/5$, whose limit is 0.

\end{proof}

\begin{algorithm}[tb]
   \caption{Corrective Orthogonal Boosting}%{Fully-corrective Orthogonal Gradient Boosting}
   \label{alg:fcogb}
\begin{algorithmic}
   \STATE {\bfseries Input:} dataset $(\x_i, y_i)_{i=1}^{n}$, number of rules $k$
   \vspace{0.1cm}
   \STATE $\sigma_j = \mathrm{argsort}(\{x_{i,j} \with 1 \leq i \leq n\})$ \textbf{for} $j \in \{1,\dots, p\}$
   \STATE $f^{(0)}=\beta_0=\argmin \{R_\lambda(\beta\mathbf{1}) \with \beta \in \R\}$
   \FOR{$t=1$ {\bfseries to} $k$}
   \STATE $\g_t=\left(\partial l(f^{(t-1)}(\x_i), y_n)/\partial f^{(t-1)}(\x_i)\right)_{i=1}^n$
   % \STATE $\textbf{g}=\left(\frac{\partial l(f^{(t-1)}(\x_1), y_1)}{\partial f^{(t-1)}(\x_1)}, \dots, \frac{\partial l(f^{(t-1)}(\x_n), y_n)}{\partial f^{(t-1)}(\x_n)}\right)$
   \STATE $q_t = \mathrm{ogb\_baselearner}(\g_t, \matO_t)$
   \STATE $\q_{t, \perp} = \q - \matO_t\matO_t^T\q$
   %\STATE $q_t=\argmax_{q \in \cQ}{\frac{|\inner{\q}{\g}|}{\|\q_\perp\|}}$ \hfill//beam or bb search% \texttt{beam}($\g$, $\matO_{t-1}$) or \texttt{bb}($\g$, $\matO_{t-1}$)
   \STATE $\veco_t=\q_{t\perp}/\|\q_{t\perp}\|$ and $\matO_{t}=[\matO_{t-1}; \veco_t]$
   \STATE $\weights_t = \argmin\{R_\lambda(\beta_0+[\q_1, \dots, \q_t]\weights) \with {\weights \in \R^t}\}$ %\hfill //cvx opt% via \texttt{convex\_opt}
   \STATE $f^{(t)}(\cdot)=\beta_0 + \beta_{t, 1}q_1(\cdot)+\dots+\beta_{t, t}q_t(\cdot)$ %[\q_1, \dots, \q_t]\weights_t$ %\left\{\textbf{q}^{(1)}_*, \dots, \textbf{q}^{(t)}_*\right\}\textbf{w}^{(t)}_*$
   \ENDFOR
   \vspace{0.1cm}
   \STATE {\bfseries Output:} $f^{(0)}, \dots, f^{(k)}$
\end{algorithmic}
\end{algorithm}

\begin{algorithm}[tb]
   \caption{OGB Base Learner}
   \label{alg:baselearner}
\begin{algorithmic}
   \STATE {\bfseries Input:} data $(\x_i)_{i=1}^{n}$, grad. $\g$, orthonorm. $\matO \in \R^{n,t}$
   \vspace{0.1cm}
   
   \STATE $\g_\perp \leftarrow \g-\matO\matO^T\g$
   {\color{red} \STATE $\varphi \leftarrow \mathrm{argsort}(\g_\perp)$}
   \STATE $v_*, q_*, I_*\leftarrow(0, 1, \{1, \dots n\})$
   \STATE $\cB \leftarrow \text{empty priority queue with size limit } w$
   \STATE $\mathrm{enqueue}(\cB, (v_*, q_*, I_*))$
   \WHILE{not $\mathrm{empty}(\cB)$}
   \STATE $\cB' \leftarrow \text{empty priority queue with size limit } w$
   \FOR{$(v, q, I) \in \cB$}
   {\color{red}
   \STATE $\varphi' \leftarrow \mathrm{filter}(\varphi, I)$
   \STATE $b_+ \leftarrow \max \mathrm{prefix\_values}(\g_\perp, \matO, \varphi')$
   \STATE $b_- \leftarrow \max \mathrm{prefix\_values}(\g_\perp, \matO, \mathrm{inverted}(\varphi'))$
   \IF{$\max\{b_+, b_-\} \leq v_*$ \textbf{or} $\mathrm{redundant}(q, I)$}
   \STATE \textbf{continue}
   \ENDIF
   }
   \FOR{$j \in \{1, \dots, d\}$}
   \FOR{$s \in \{-1,1\}$}
   \STATE $\sigma \leftarrow \mathrm{filter}(\sigma_j, I)$
   \STATE \textbf{if} $s=-1$ \textbf{then} $\sigma \leftarrow \mathrm{inverted}(\sigma)$
   \STATE $\mathbf{v} = \mathrm{prefix\_values(\g_\perp, \matO, \sigma)}$
%   \STATE $G, N_1, \dots, N_t \leftarrow (0, 0, \dots, 0)$
   \FOR{$i \in \{1, \dots, |\sigma|\}$}
%   \STATE $G \leftarrow G + g_{\perp, \sigma(i)}$
  % \FOR{$k \in \{1, \dots, t\}$}
   %\STATE $N_k \leftarrow N_k + o_{k, \sigma(i)}$
  % \ENDFOR
   %\STATE $v' \leftarrow |G|/\left(\sqrt{i^2 -\sum_{k=1}^t N^2_k}+\epsilon\right)$
   \STATE $q' \leftarrow q\delta(sx_{0, j} \leq sx_{\sigma(i), j})$
   \IF{$v' > v_*$}
   \STATE $v_*, q_* \leftarrow v', q'$
   \ENDIF
   \STATE $\mathrm{enqueue}(\cB', (v', q', \{\sigma(1), \dots, \sigma(i)\}))$
   \ENDFOR
   \ENDFOR
   \ENDFOR
   \ENDFOR
   \STATE $\cB \leftarrow \cB'$
   \ENDWHILE
   \vspace{0.1cm}
   \STATE {\bfseries Output:} $q_*$
\end{algorithmic}
\end{algorithm}

\begin{algorithm}[tb]
\caption{Incremental Prefix Values}
\label{alg:incremental_prefix}
\begin{algorithmic}
   \STATE {\bfseries Input:} proj. grad. $\g_\perp$, orthon. $\matO \in \R^{n,t}$, order $\sigma$
   \vspace{0.1cm}
 \STATE $G, N_1, \dots, N_t \leftarrow (0, 0, \dots, 0)$
   \FOR{$i \in \{1, \dots, |\sigma|\}$}
   \STATE $G \leftarrow G + g_{\perp, \sigma(i)}$
   \FOR{$k \in \{1, \dots, t\}$}
   \STATE $N_k \leftarrow N_k + o_{k, \sigma(i)}$
   \ENDFOR
   \STATE $v_i = |G|/\left(\sqrt{i^2 -\sum_{k=1}^t N^2_k}+\epsilon\right)$
   \ENDFOR
   \vspace{0.1cm}
   \STATE {\bfseries Output:} $(v_1, \dots, v_l)$
\end{algorithmic}
\end{algorithm}

\subsection{Efficient Implementation}
%Having abstractly defined a rule objective function that is suitable in conjunction with full weight correction, we now turn to the question of how it can be computed efficiently in the context of the gradient boosting framework.
To develop an efficient optimization algorithm for the orthogonal gradient boosting objective, 
we recall that projections $\q_\perp$ on the orthogonal complement of $\range\,\Q$ can be naively computed via $\q_\perp=\q-\Q((\Q^T\Q)^{-1}(\Q^T\q))$
% $
%     \q_\perp = \q - \underbrace{\Q((\Q^T\Q)^{-1}(\Q^T\q))}_{\q_\parallel}
% $
where we placed the parentheses to emphasize that only matrix-vector products are involved in the computation---at least once the inverse of the Gram matrix $\Q^T\Q$ is computed.
This approach allows to compute projections, and thus objective values, in time $O(nt+t^2)$ per candidate query after an initial preprocessing per boosting round of cost $O(t^2n + t^3)$.

%While the dependencies on $t$ might seem negligible 
In a first step, this naive approach can be improved by maintaining an orthonormal basis of the range of the query matrix throughout the boosting rounds, resulting in a Gram-Schmidt-type procedure. Since the projections $\q_{\perp}$ of the selected query output vectors already form an orthogonal basis of $\range\, \Q$ this only requires normalization with negligible additional cost. Formally, by storing $\veco_t = \q_{t\perp} / \|\q_{t\perp}\|$ in all boosting rounds $t$, subsequent projections can be computed via
$
    \q_{t+1, \perp} = \q_{t+1} - \matO_t(\matO_t^T\q_{t+1})
    % \label{eq:proj_with_O}
$
where $\matO_t=[\veco_1, \dots, \veco_t]$.
This reduces the computational complexity per candidate query to $O(tn)$ without  additional preprocessing.

% While this looks like the optimal complexity for evaluating $\ogbobj$ in isolation, it leads to a prohibitive complexity for large $n$ for finding the optimal query in a given round.
% Specifically, branch-and-bound with the tight bounding function~\eqref{eq:tight_bound} evaluates $O(n)$ queries per expanded search node  and beam search even $O(d^2n)$.
% In both cases, using the expression for $\q_\perp$ above repeatedly results in a quadratic cost in $n$ per search node.
% To avoid this, we can exploit the structure of candidate evaluations, similar to other efficient rule and tree learning algorithms~\citep{mampaey2015efficient}. 

% The candidates evaluated per search node of both branch-and-bound and beam search have query vectors that are prefixes of an ordered sub-selection of data points, 
% in beam search because optimum cut-off values are sought for each of the $d$ input variables,
% in branch-and-bound because the optimum in Eq.~\eqref{eq:tight_bound} is attained or  approximated by a prefix with respect to some presorting order. 

% (e.g., based on $g_i$ for the gradient boosting objective or $g_i/h_i$ for the extreme gradient boosting objective.
% Hence, we need to solve the following \defemph{prefix optimization problem}: 
% given an ordered sub-selection of $l$ of the $n$ data points, represented by an injective map $\sigma\from \{1, \dots, l\} \to \{1, \dots, n\}$, find the optimal prefix 
As mentioned above, to achieve an acceptable scalability with $n$, we need to reduce this complexity further for evaluating sequences of candidate queries, as required by either beam or branch-and-bound search.
In particular we need to solve the  prefix 
optimisation problem \eqref{eq:prefix_opt}, which translates for our objective function to finding:
\begin{equation}
 i_* = \argmax_{i \in \{1, \dots, l\}} \frac{|\inner{\g_\perp}{\q^{(i)}}|}{\|\q^{(i)}_\perp\| + \epsilon} 
 \label{eq:best_prefix}
\end{equation}
given an ordered sub-selection $\sigma$ where $\q^{(0)}=\0$ and $\q^{(i)}=\q^{(i-1)}+\e_{\sigma(i)}$.
%A naive solution to this problem is to compute the projection matrix $\M=\I-\matO\matO^T \in \R^{n \times n}$ with $\matO=[\veco_1,\dots, \veco_t]$ and use it to compute the projection $\q^{(i)}_\perp = \M\q^{(i)}$ of all $i$ prefixes to find the maximum in Eq.~\eqref{eq:best_prefix}.
%While this approach is straightforward to implement, its computational complexity is of order $O(n^3)$, given that usually $l \approx n$, because it requires the computation of $l$ matrix-vector products involving an $n$-by-$n$ matrix. 
The following proof shows how the computational complexity for solving~\eqref{eq:best_prefix} can be substantially reduced compared to the direct approach above. 
It uses an incremental computation of projections that works directly on the available orthonormal basis vectors $\veco$ instead of computing matrix-vector products or, even worse, the whole projection matrix.
\begin{restatable}{proposition}{efficientprop}
    Given a gradient vector $\g \in \R^n$, an orthonormal basis $\veco_1, \dots, \veco_t \in \R^n$ of the subspace spanned by the queries of the first $t$ rules, and a sub-selection of $l$ candidate points $\sigma\from [l] \to [n]$, the best prefix selection problem~\eqref{eq:best_prefix} can be solved in time $O(tl)$.
    \label{thm:fast_prefix}
\end{restatable}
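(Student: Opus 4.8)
The plan is to show that, as the prefix index $i$ runs from $1$ to $l$, both the numerator $|\inner{\g_\perp}{\q^{(i)}}|$ and the squared denominator norm $\|\q^{(i)}_\perp\|^2$ can be updated from their values at $i-1$ in $O(t)$ arithmetic operations, using only the orthonormal vectors $\veco_1,\dots,\veco_t$ and the (precomputed) projected gradient $\g_\perp = \g - \matO\matO^T\g$, where $\matO = [\veco_1,\dots,\veco_t]$. Summing $O(t)$ over the $l$ prefixes, and tracking the running maximum of the resulting ratios $v_i$ at $O(1)$ extra cost per step, then yields the claimed $O(tl)$ bound; this is precisely what Algorithm~\ref{alg:incremental_prefix} carries out.

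For the numerator, write $\q^{(i)} = \q^{(i-1)} + \e_{\sigma(i)}$, so that $G^{(i)} \coloneqq \inner{\g_\perp}{\q^{(i)}} = G^{(i-1)} + g_{\perp,\sigma(i)}$ with $G^{(0)} = 0$, a single addition per step. The key point is the denominator. Since $\veco_1,\dots,\veco_t$ is an orthonormal basis of $\range\,\Q$, the matrix $\matO\matO^T$ is the orthogonal projector onto that subspace and $\matO^T\matO = \I$, so the decomposition $\q^{(i)} = \matO\matO^T\q^{(i)} + \q^{(i)}_\perp$ is orthogonal and Pythagoras gives
\[
  \|\q^{(i)}_\perp\|^2 \;=\; \|\q^{(i)}\|^2 - \|\matO^T\q^{(i)}\|^2 \;=\; i - \sum_{k=1}^{t}\bigl(\inner{\veco_k}{\q^{(i)}}\bigr)^2,
\]
using that $\q^{(i)}\in\{0,1\}^n$ has exactly $i$ ones, so $\|\q^{(i)}\|^2 = i$. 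Each coordinate $N_k^{(i)} \coloneqq \inner{\veco_k}{\q^{(i)}} = \sum_{j\le i} o_{k,\sigma(j)}$ of $\matO^T\q^{(i)}$ is again a prefix sum, hence $N_k^{(i)} = N_k^{(i-1)} + o_{k,\sigma(i)}$ with $N_k^{(0)} = 0$; updating the whole vector $(N_1,\dots,N_t)$ and then evaluating $\|\q^{(i)}_\perp\|^2 = i - \sum_{k=1}^{t} N_k^2$ therefore costs $O(t)$ per step. Finally $v_i = |G^{(i)}|/(\sqrt{\,i - \sum_{k=1}^{t} (N_k^{(i)})^2\,} + \epsilon)$, where the additive $\epsilon$ keeps $v_i$ well-defined even when $\q^{(i)}$ is (nearly) in $\range\,\Q$; comparing $v_i$ against the best value seen so far completes the step in $O(t)$, and over all $i$ this is $O(tl)$.

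I do not expect a genuine obstacle: the whole argument rests on the elementary fact that orthogonal projection onto $\range\,\Q$ is a Parseval-type sum of $t$ inner products against the orthonormal $\veco_k$, and that each such inner product --- like $\inner{\g_\perp}{\q^{(i)}}$ itself --- decomposes along $\sigma$ into a prefix sum with a constant-time update. The only bookkeeping to verify is the initialization ($G = N_1 = \dots = N_t = 0$, matching $\q^{(0)} = \0$) and the accounting of the one-time work: forming $\g_\perp$ costs $O(tn)$ and having $\veco_1,\dots,\veco_t$ available is assumed, but this is incurred once per boosting round in the surrounding base learner (Algorithm~\ref{alg:baselearner}) and amortized over the many prefix sweeps --- one per input coordinate and sign, plus the bound evaluations --- so it does not enter the per-sweep $O(tl)$ figure.
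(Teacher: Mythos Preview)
Your proposal is correct and follows essentially the same approach as the paper: both use Pythagoras to write $\|\q^{(i)}_\perp\|^2 = \|\q^{(i)}\|^2 - \sum_{k=1}^t(\inner{\veco_k}{\q^{(i)}})^2$ and observe that each of the $t{+}1$ inner products (the numerator and the $N_k$) is a prefix sum with an $O(1)$ update, giving $O(t)$ per prefix and $O(tl)$ overall. Your additional remarks on the $O(tn)$ one-time cost of forming $\g_\perp$ and its amortization across prefix sweeps are a useful clarification that the paper's proof leaves implicit.
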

\begin{proof}[Proof sketch]
    We can write the objective value of prefix $i$ in terms of incrementally computable quantities:
    % \begin{align*}
    %     &\frac{\inner{\g}{\q^{(i)}}}{\|\q^{(i)}_\perp\| + \epsilon} = \frac{\inner{\g}{\q^{(i)}}}{\|\q^{(i)}\| - \|\q^{(i)}_\parallel\| + \epsilon} \\= &\frac{\inner{\g}{\q^{(i)}}}{\|\q^{(i)}\| - \sqrt{\sum_{k=1}^t \|\veco_k \veco_k^T\q^{(i)}\|^2} + \epsilon}
    %     \enspace .
    % \end{align*}
    \begin{align*}
        &\frac{\left|\inner{\g_\perp}{\q^{(i)}}\right|}{\|\q^{(i)}_\perp\| + \epsilon} = \frac{\left|\inner{\g_\perp}{\q^{(i)}}\right|}{\sqrt{\|\q^{(i)}\|^2 - \|\q^{(i)}_\parallel\|^2} + \epsilon} \\= &\frac{\left|\inner{\g_\perp}{\q^{(i)}}\right|}{\sqrt{\|\q^{(i)}\|^2 - \sum_{k=1}^t \|\veco_k \veco_k^T\q^{(i)}\|^2} + \epsilon}
        \enspace .
    \end{align*}
    In particular, the $t$ sequences of norms $\|\veco_k \veco_k^T\q^{(i)}\|$ can be computed in time $O(l)$ via cumulative summation of the $k$-th basis vector elements in the given order:
    \begin{align*}
        \|\veco_k\veco_k^T\q^{(i)}\|
        %=\left\|\veco_k\veco_k^T\left(\sum_{j=1}^i\e_{\sigma(j)}\right)\right\|
        =\|\veco_k\| \left|\sum_{j=1}^i \inner{\veco_k}{\e_{\sigma(j)}}\right|
        =\left|\sum_{j=1}^i o_{k, \sigma(j)} \right|
    \end{align*}
    %To incrementally compute $\|q^{(i)}\|$  in time $O(t)$ per $i \in [n]$ as required by the claim we reduce the problem
\end{proof}
% \begin{proof}
%     To see the claim, we first rewrite the objective value for the $i$-th prefix as
%     \begin{equation*}
%         \frac{\inner{\g}{\q^{(i)}}}{\|\q^{(i)}_\perp\| + \epsilon} = \frac{\inner{\g}{\q^{(i)}}}{\|\q^{(i)}\| - \|\q^{(i)}_\parallel\| + \epsilon} 
%         \enspace .
%     \end{equation*}
%     The value of $\|\q^{(i)}\|$ is trivially given as $\sqrt{i}$, and $\inner{\g}{\q^{i}}$ can be easily computed for all $i \in [l]$ in time $O(n)$ via cumulative summation.
%     Finally we can reduce the problem of computing the (squared) norms of the $l$ projected prefixes to computing the $t$ (squared) norms of the prefixes on the subspaces given by the individual orthonormal basis vectors via
%     \begin{equation*}
%         \|\q^{(i)}_\parallel\|^2 = \left\|\sum_{k=1}^t \veco_k \veco_k^T\q^{(i)}\right\|^2 = \sum_{k=1}^t \|\veco_k \veco_k^T\q^{(i)}\|^2 \enspace .
%     \end{equation*}
%     Each of these $t$ sequences of (squared) norms can be computed in time $O(n)$ by rewriting
%     \begin{align*}
%         \|\veco_k\veco_k^T\q^{(i)}\| &= \left\|\veco_k\veco_k^T\left(\sum_{j=1}^i\e_{\sigma(j)}\right)\right\|\\
%         &=\|\veco_k\| \sum_{j=1}^i \inner{\veco_k}{\e_{\sigma(j)}}\\
%         &=\sum_{j=1}^i o_{k, \sigma(j)} 
%     \end{align*}
%     where the last equality shows how an $O(n)$-computation is achieved via cumulative summation of the $k$-th basis vector elements in the order given by $\sigma$.
%     %To incrementally compute $\|q^{(i)}\|$  in time $O(t)$ per $i \in [n]$ as required by the claim we reduce the problem
% \end{proof}
This result implies that the computational complexity of maximizing the objective function in boosting iteration $t$ depends linearly on $t$, which introduces an asymptotic overhead of a factor of $k$ over the previous objective functions when running boosting for $k$ iterations.
However, as we will discuss below, in practice for finite $k$ we typically find a much smaller overhead, e.g., of around $2$ for most data sets when running $10$ iterations.
This is due to additive preprocessing costs that are common to all objective functions and the tendency to select more general rules compared to GB/XGB.

We close this section with pseudocodes that summarize the main ideas of orthogonal gradient boosting, starting with the high level algorithm (Alg.~\ref{alg:fcogb}), followed by the base learner for the query optimization at each boosting round (Alg.~\ref{alg:baselearner}), and closing with the fast incremental objective function computation (Alg.~\ref{alg:incremental_prefix}).
This specific version of Alg.~\ref{alg:baselearner} uses breadth-first-search for ease of exposition. Other search orders can be implemented. Moreover, it uses the following operations and notations for ordered sub-collections of data point indices $\sigma: \{1, \dots, m\} \to \{1, \dots n\}$, which can be represented by integer arrays of length $m$. The cardinality symbol refers to the size of the represented sub-collection, i.e., $|\sigma|=m$ . The operation $\mathrm{filter}(\sigma, I)$ refers to the order $(\sigma(i) \with 1 \leq i \leq |\sigma|, \sigma(i) \in I)$, which can be computed in time $O(m)$ assuming $I$ is given as a Boolean array. Finally, $\mathrm{inverted}(\sigma)$ refers to the inverted order $\sigma'(i)=\sigma(m-i+1)$.
For beam search the priority queue limit $w$ has to be set to some finite positive integer, where $w=1$ yields the standard greedy algorithm as special case.
The setting $w=\infty$ leads to branch-and-bound, for which the red lines implement the bounding part.
Importantly, the computed bounding function based on a prefix greedy optimization with respect to the gradient order is just a heuristic for $\ogbobj$.
Therefore, one might want to omit it for beam search. However, we find in extensive numerical experiments (see SI) that this approach leads to a 3/4-approximation algorithm with high probability.

\section{Empirical Evaluation}\label{experiments}

\begin{table}
\caption{Datasets with size $n$, dimensions $d$, and type: classification ($^+$), Poisson ($^*$), ordinary regression ( )}
\label{tb:data_info}
% \vskip 0.05in
\begin{center}
\begin{scriptsize}
\begin{sc}
\begin{tabular}{@{\hskip 0.02in}l@{\hskip 0.02in}c@{\hskip 0.01in}c|@{\hskip 0.02in}l@{\hskip 0.02in}c@{\hskip 0.02in}c|@{\hskip 0.02in}l@{\hskip 0.02in}c@{\hskip 0.02in}c}
\toprule
Dataset & $d$ & $n$ & Dataset & $d$ & $n$ & Dataset & $d$ & $n$ \\
\hline
ships$^*$ & 4 & 34 & breast$^+$ & 30 & 569 & gender$^+$ & 20 & 3168 \\
gdp & 1 & 35 & tic-tac-toe$^+$ & 27 & 958 & digits5$^+$ & 64 & 3915 \\
smoking$^*$ & 2 & 36 & titanic$^+$ & 7 & 1043 & friedman3 & 4 & 5000 \\
covid vic$^*$ & 4 & 85 & insurance & 6 & 1338 & demograph. & 13 & 6876 \\
bicycle$^*$ & 4 & 122 & banknote$^+$ & 4 & 1372 & tel. churn$^+$ & 18 & 7043 \\
iris$^+$ & 4 & 150 & wage & 5 & 1379 & friedman2 & 4 & 10000 \\
wine$^+$ & 13 & 178 & ibm hr$^+$ & 32 & 1470 & magic$^+$ & 6 & 16327 \\
covid$^*$ & 2 & 225 & red wine & 11 & 1599 & videogame & 5 & 27820 \\
happiness & 8 & 315 & life expect. & 21 & 1649 & suicide rate & 5 & 27820 \\
liver$^+$ & 6 & 345 & used cars & 4 & 1770 & adult$^+$ & 11 & 30162 \\
diabetes & 10 & 442 & mobile price & 20 & 2000 &  &  &  \\
boston & 13 & 506 & friedman1 & 10 & 2000 &  &  & \\
\bottomrule
\end{tabular}
\end{sc}
\end{scriptsize}
\end{center}
\end{table}
In this section, we present empirical results comparing the proposed corrective orthogonal gradient boosting (COB) to the standard gradient boosting algorithms~\citep{dembczynski2010ender} using greedy optimization of $\gbobj$ (SGB) and $\gsobj$ (SGS) with stagewise weight update, to extreme gradient boosting~\citep{boley2021better} using branch-and-bound optimisation of $\xgbobj$ with stagewise weight update (SXB), and finally to SIRUS \citep{benard2021interpretable} as the state-of-the-art generate-and-filter approach.
%Specifically, for the high-level comparison we consider FCOGB with branch-and-bound rule optimization with the bounding function based on gradient-based presorting, and, as alternatives the two algorithm presented in~\citet{dembczynski2010ender}, i.e., using $\gsobj$ and $\gbobj$ in conjunction with greedy optimization, as well the algorithm from~\citet{boley2021better}, i.e., using $\xgbobj$ in conjunction with branch-and-bound search.
% Additionally, we report on an ablation study were individual components of the algorithms are varied.
We investigate the risk/complexity trade-off, the affinity to select general rules, as well as the computational complexity.
The datasets used are those of \citet{boley2021better} augmented by three additional classification datasets from the UCI machine learning repository and, to introduce a novel modelling task to the rule learning literature, five counting regression datasets from public sources. 
This results in a total of 34 datasets (13 for classification, 16 for regression, and 5 for counting/Poisson regression, see Tab.~\ref{tb:data_info}).
The experiment code and further information about the datasets are available on GitHub (\url{https://github.com/fyan102/FCOGB}).
\begin{figure}[bt]
% \vskip 0.1in
\centering
\begin{center}
\includegraphics[width=0.96\columnwidth]{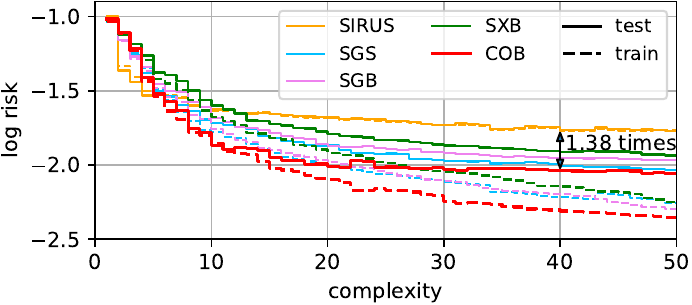}
\includegraphics[width=0.96\columnwidth]{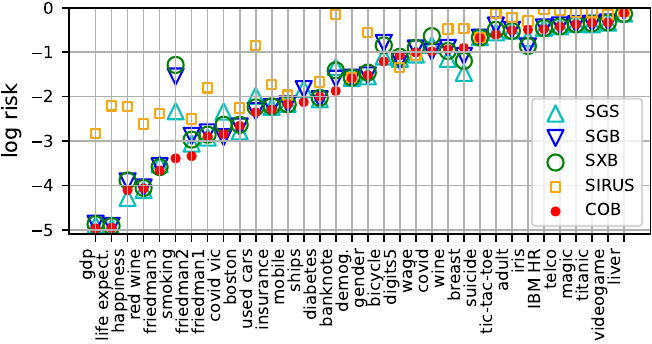}
\end{center}
\vskip -0.1in
\caption{Method log risks across complexity levels (top) and log test risks across datasets (bottom). 
% Right Inner: comparison of running time of FCOGB with XGBoost. 
}
\label{fig:experiments}
\end{figure}

All algorithms were run five times on all datasets using 5 random 80/20 train/test splits to calculate robust estimates of all considered metrics. 
%In all cases, the number of gradient boosting iterations was chosen to produce ensembles with  complexity of at most 50.
For each boosting variant, we consider ensembles with $k=1,2,\dots$ rules until the complexity exceeds 50.
For each ensemble size $k$ (and each training split) a separate regularization parameter value $\lambda_k$ is found via an internal five-fold cross-validation as
\begin{equation*}
    \argmin\{R_\mathrm{CV}(f_\lambda^{(k)}):\lambda=10^a,a\in\{-2, -1, 0, 1 ,2\}\}
\end{equation*}
%For each $k$ and each train/test split, we perform another train/test split on the train split and adopt an internal cross-validation on the training set to choose the best regularisation parameters which minimise the from a fixed grid: 
Note that with this procedure, the ensemble complexity is not necessarily monotone with the number of rules, i.e., because typically $\lambda_l >\lambda_k$ for $l > k$.
For SIRUS, we consider different ensembles by varying the minimum occurrence frequency $p_0$ of a rule (in trees of the random forest) required to be added to the ensemble, incrementally decreasing $p_0$ until complexity 50 is exceeded.

% \begin{comment}   
% \begin{figure*}[t!]
% \vskip 0.1in
% \centering
% \begin{center}
% \includegraphics[width=\columnwidth]{}
% \end{center}
% \vskip -0.1in
% \caption{The comparison of the risks of Gradient boosting, Fully-corrective boosting, Greedy FCOGB and optimal FCOGB. Each point in these figures represents the comparison of the risk of models generated by algorithms on the axis labels of one dataset. }
% \label{fig:ablation_study}
% \end{figure*}
% \end{comment}

\paragraph{Complexity versus risk}
We firstly compare the complexity/risk trade-off of SGB, SGS, SXB and COB. 
Fig. \ref{fig:experiments} compares the log risks in terms of the complexity of rule ensembles generated by all methods. 
The log risks are used such that their difference indicate risk ratios between methods, as $\log(R_A/R_B)=\log(R_A)-\log(R_B)$.
Here, normalization is performed by the risk of the rule ensemble with a single empty ``offset'' rule.
%by summarizing the trade-off curves (as visualized in Fig.~\ref{fig:compare_tic}) 
The top part compares the risks  per complexity level averaged across all datasets. 
One can see that, for almost all complexity levels, COB has the smallest average training and test risks for all but the smallest complexity levels, where it is only beaten by SIRUS.
On the other hand, SIRUS is not competitive for complexity levels greater than 15.
The bottom part of Fig.~\ref{fig:experiments} compares the risk per dataset  averaged across all considered cognitive complexity levels from 1 to 50.
COB generates rules which have the smallest test risks for 23 out of 34 datasets (and the smallest training risks for 26 out of 34 datasets, see Fig.~\ref{fig:log_train_risk} in SI).
Moreover, COB occasionally outperforms the second-best algorithm by a wide margin (\textit{tic-tac-toe}, \textit{banknote}, \textit{insurance}, \textit{friedman2}, \textit{used cars}, \textit{smoking}).
One-sided paired t-tests at significance level 0.05 (Bonferroni-corrected for 8 hypotheses, 4 for training risks and 4 for test risks) reveal that COB significantly outperforms all other methods for a random dataset by a margin of at least 0.001 average training risk and testing risk. 

\begin{figure}[t!]
\vskip 0.1in
\begin{center}
\includegraphics[width=\columnwidth]{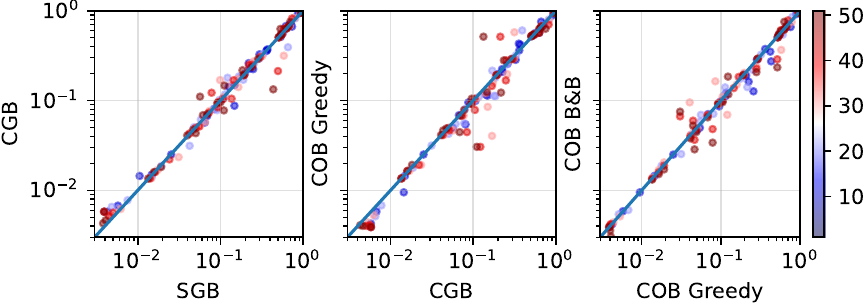}
\label{fig:ablation}
\vskip -0.25in
\caption{Comparison of risks with different complexity levels between Stepwise Gradient Boosting (SGB), Corrective Gradient Boosting (CGB), COB using Greedy search and COB using Branch-and-bound search. The colours represent the complexity of the rule ensembles.
% Bottom: the comparison of the running time of GS, GB, XGB, Naive FCOGB (NFCOGB) and FCOGB for the benchmark datasets \texttt{breast cancer} and \texttt{diabetes} of generating rule ensembles with cognitive complexity 50.
}
\label{fig:ablation_study}
\end{center}
\vskip -0.2in
\end{figure}
\begin{figure}[t!]
% \vskip 0.1in
\begin{center}
% \begin{subfigure}
% \centering
\includegraphics[width=1\columnwidth]{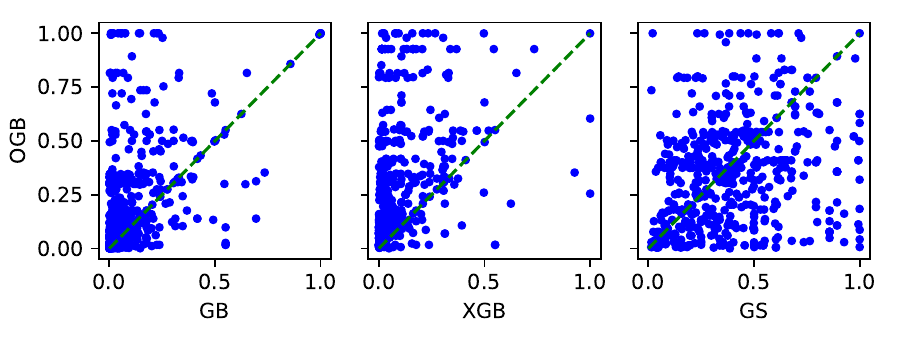}
% \end{subfigure}
% \begin{subfigure}
% \centering
% \includegraphics[width=0.8\columnwidth]{Images/ablation20230905.pdf}
\label{fig:time_tic}
% \end{subfigure}
% \begin{subfigure}
% \centering
% \includegraphics[width=0.20\columnwidth]{}
% \label{fig:time_diab}
% \end{subfigure}
\vskip -0.3in
\caption{Coverage rate of the rules generated by Gradient Boosting, XGBoost, Gradient Sum versus OGB. 
% Bottom: the comparison of the running time of GS, GB, XGB, Naive FCOGB (NFCOGB) and FCOGB for the benchmark datasets \texttt{breast cancer} and \texttt{diabetes} of generating rule ensembles with cognitive complexity 50.
}
\label{fig:compare_coverage}
\end{center}
\vskip -0.2in
\end{figure}

\begin{figure}[t!]
\vskip 0.1in
\centering
\begin{center}
\includegraphics[width=\columnwidth]{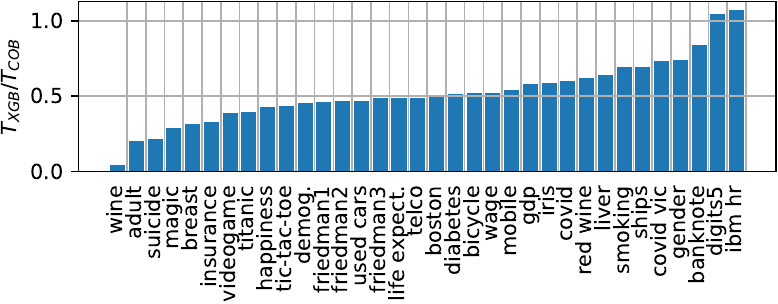}
\includegraphics[width=\columnwidth]{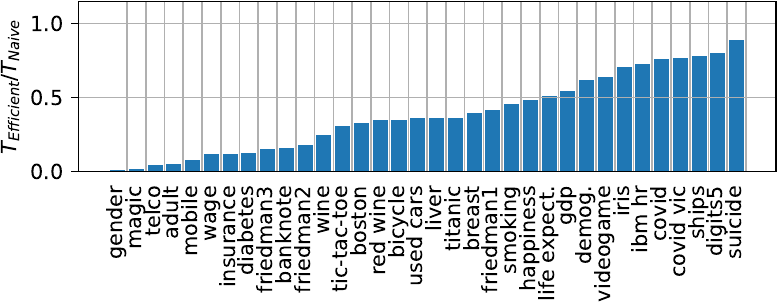} 
\end{center}
\vskip -0.2in
\caption{Running time ratio of SXB and COB (top) and naive and efficient opt. of COB (bottom).}
\label{fig:computation_time}
\end{figure}
\paragraph{Ablation study}
After seeing that the developed rule learning approach yields overall significant advantages, we next investigate the necessity of its individual components. %particular the most closely related standard gradient boosting function.
Fig.~\ref{fig:ablation_study} compares the risk of various intermediate variants of COB for all investigated datasets and representative complexity levels.
%ablation study where standard gradient boosting is first modified by weight-correction, then by the advanced objective function, and then by upgrading greedy to branch-and-bound search.
As can be seen, applying just corrective weights updates does not substantially improve  standard gradient boosting with stagewise updates and greedy rule optimization, with only 57.7\% of the considered ensembles improved.
Correspondingly, a ones-sided paired t-test at significance level 0.05 does not reject the null hypothesis that the expected risks differ at all, i.e., without a margin,
between these two methods (for a random dataset and complexity level). In contrast, using the orthogonal objective function $\ogbobj$ in conjunction with corrective updates improves the risk values in 78.9\% of the cases (compared to using $\gbobj$ with corrective update), and optimizing $\ogbobj$ with branch-and-bound search instead of greedy search improves the performance further (63.4\% of ensembles are improved).
Both improvements are significant, i.e., two further one-sided paired t-tests at significance level 0.05, with Bonferroni correction for three tests, reject the null hypotheses that the difference in expected risks are less than 0.001.
In terms of the ensemble sizes there are no clear patterns in terms of the number of wins and losses per step. However, we can observe that larger differences are tend to be attained on larger ensembles.

% As can be seen, in particular for the best case, the overall advantages are building up through the individual steps, which contribute roughly equally to the performance gains.

\paragraph{Effect on Condition Complexity}
The objective functions primarily affect the risk / complexity trade-off through the number of rules required to reach a certain complexity level.
However, as a secondary effect, they might also have a different propensity to select complex rule conditions with many propositions, which our complexity measure also accounts for.
While condition complexity is not directly assessed by any of the investigated objective functions, it is related to the condition's \defemph{coverage rate}, i.e., the relative number of selected data points $\|\q\|_0/n$, because less complex conditions tend to have a higher coverage rate.
Here we assess differences in the coverage rate in the following way:
for each alternative objective function $\obj_\mathrm{alt}$ we fit rule ensembles $f_\mathrm{alt}^{(t)}$ for $t=1, \dots, 30$ for all considered datasets (using stagewise weight updates). For each $t$, we then compare the coverage rate of the next condition ($q_{t+1}$) produced by $\obj_\mathrm{alt}$ to the one that is produced by $\ogbobj$ based on the same previous model $f_\mathrm{alt}^{(t)}$.
%To compare the generality of the rules learned by the new objective function in comparison to the existing ones, we performed an additional experiment where we first used one of the previous objective functions to generate rule ensembles for all datasets.
%Then for each partial rule ensemble, we applied the orthogonal gradient boosting objective function to find an alternative rule. 
Importantly, we use branch-and-bound for all the objectives to avoid confounding through sub-optimal greedy solutions.
As shown in Fig.~\ref{fig:compare_coverage}, 91.0\% of conditions identified by $\ogbobj$ cover more data points than those identified by $\xgbobj$, and similarly 78.6\% of the $\ogbobj$ conditions cover more data points than those generated by $\gbobj$. In contrast, only 46.1\% of conditions identified by $
\ogbobj$ cover more data points than the gradient sum objective $\gsobj$.
%There are 7.6\% of the FCOGB rules covering more than 50\% data points while the rules generated by gradient boosting cover less than 50\%, and this number is 6.0\% for gradient boosting and 6.1\% for gradient sum. 
These results are aligned with the theoretical expectation in terms of the influence of the coverage on the objective values where gradient sum is completely unaffected, whereas orthogonal gradient boosting has a denominator that tends to grow with coverage albeit less than the one of gradient boosting. 

\paragraph{Computation time}
% \begin{figure}
% \vskip 0.1in
% \begin{center}
% \begin{subfigure}
% \centering
% \includegraphics[width=0.48\columnwidth]{}
% \label{fig:time_tic}
% \end{subfigure}
% \begin{subfigure}
% \centering
% \includegraphics[width=0.46\columnwidth]{}
% \label{fig:time_diab}
% \end{subfigure}
% \end{center}
% \vskip -0.1in
% \caption{Comparison of the running time of Gradient boosting, XGBoost and FCOGB for the benchmark datasets \texttt{breast cancer} and \texttt{diabetes} of generating 10 rules.}
% \label{fig:time_comp}
% \end{figure}
Finally, we investigate the computational overhead for generating rule ensembles with the proposed COB method with branch-and-bound search. Here, we use XGB as primary benchmark, because it uses the same more expensive search. Further, we investigate the effect of the efficient incremental computation of $\ogbobj$ via Alg.~\ref{alg:incremental_prefix} to assess its necessity. 
For both comparisons, we consider the time it takes for each test dataset to compute the largest rule ensemble generated in the main experiment (the one that exceeds complexity 50).
The top of Fig~\ref{fig:computation_time}.
compares the efficient implementation of COB to XGB. We can see that the costs are in the same order of magnitude for almost all datasets. For 17 of the 34 datasets the overhead is within a factor of 2. For all but one extreme case (\textit{wine}, overhead factor 26) the overhead is within a factor of 5.
% Comparing to the two greedy variants, FCOGB is still faster than gradient boosting for 13 datasets, and faster than the gradient sum approach for 5 datasets.
% Comparing to the two greedy variants, FCOGB is in the same order of magnitude as gradient boosting for most datasets. 
% Unsurprisingly, there are a few examples where greedy search vastly outperforms branch-and-bound, in one case (telco churn) by a factor of around 76.
Comparing the two implementations of COB, the bottom of Fig.~\ref{fig:computation_time} the efficient implementation uses less than half of the time of the naive implementation for 24 out of 34 datasets. 
Particularly, for the datasets \textit{gender} and \textit{magic}, the efficient implementation improves the running time more than 50 times. 
Overall, the results confirm that branch-and-bound search is a practical algorithm in absolute terms: For 23 benchmark dataset, COB is able to finish training a model of complexity of 50 within one minute. Most of the  other experiments run within 15 minutes except one dataset (telco churn) which require longer running time. See SI~\ref{sec:SI:eval} for further details and comparisons. 

\section{Conclusion}\label{conclustions}
The proposed fully corrective orthogonal boosting approach is a worthwhile alternative to previously published boosting variants for rule learning, especially when targeting a beneficial risk-complexity trade-off and an overall small number of rules.
The present work provided a relatively detailed theoretical analysis of the newly developed rule objective function. However, some interesting questions were left open. 
While the presorting-based approach to the bounding function performs extremely well in synthetic experiments, 
a theoretical approximation guarantee for this algorithm has yet to be derived.
Another interesting direction for future work is the extension of the introduced approximating subspace paradigm to the extreme gradient boosting approach, which, due to the utilization of higher order information, should principally be able to produce even better risk-complexity trade-offs.

\subsubsection*{Acknowledgements}
The authors thank Daniel F. Schmidt for valuable discussions and the anonymous reviewers for their constructive feedback. This work was supported by the Australian Research Council (DP210100045).

\bibliography{references}
\begin{appendix}
\onecolumn
% \setcounter{proposition}{0}
% \setcounter{table}{1}
% \aistatstitle{Orthogonal Gradient Boosting for Simpler Additive Rule Ensembles \\ Supplementary Material}
% \thispagestyle{plain}
% \twocolumn
\section{Full Proofs and Additional Formal Statements}
\mainprop*
%\begin{proposition}
%Let $\f^\mathrm{CGD}=\argmin\{R_\lambda(\f') \with \f' \in \range [\Q_{t-1}; \g]\}$ be the output vector of the ideal corrective gradient descent update in round $t$ and 
%\begin{equation*}
%    q = \argmax_{q \in \cQ} |\g_\perp^T \q|/\|\q_\perp\|
%\end{equation*}
%where for $\vecv \in \R^n$ we denote by $\vecv_\perp$ its projection onto the orthogonal complement of $\range \, \Q_{t-1}$.
%Then the output vector $\f_q=\argmin\{R_\lambda(\f') \with \f' \in \range[\Q_{t-1}; \q]\}$ dominates the optimal geometric approximation to $\f$ with respect to $\cQ$, i.e.,
%$R_\lambda(\f_q) \leq R_\lambda(\tilde{\f}^\mathrm{CGD}_\cQ)$.
%% \label{prop:main}
%\end{proposition}

\begin{proof}
    Let $\Q=\Q_{t-1}$ and $\f=[\Q; \g]\alphavec$ and $\tilde{\f}=[\Q; \q]\weights$ for some arbitrary coefficient vectors $\alphavec, \weights \in \R^t$. Denoting by $\vecv_\parallel$ the projection of  $\vecv \in \R^n$ onto the column space of $\Q$ and its orthogonal complement by $\vecv_\perp$, we can decompose the squared norm of the difference $\f-\tilde{\f}$ as
    \begin{align*}
        \|\f-\tilde{\f}\|^2 &= \|[\Q; \g]\alphavec - [\Q; \q]\weights\|^2\\
        &=\|[\Q; \g_\parallel + \g_\perp]\alphavec - [\Q; \q_\parallel + \q_\perp]\weights\|^2\\
        &=\|[\Q; \g_\parallel]\alphavec + \alpha_t\g_\perp - [\Q; \q_\parallel]\weights + \weight_t\q_\perp \|^2\\
        &=\|[\Q; \g_\parallel]\alphavec - [\Q; \q_\parallel]\weights\|^2 + \|\alpha_t \g_\perp - \beta_t \q_\perp\|^2
    \end{align*}
    where the last step follows from the Pythagorean theorem and the fact that $\alpha_t \g_\perp - \beta_t \q_\perp$ is an element from the orthogonal complement of $\range [\Q; \g_\parallel]=\range[\Q; \q_\parallel]=\range\, \Q$.
    The equality of these ranges also implies that $\beta_1, \dots, \beta_{t-1}$ can, for all choices of $\beta_t$, be chosen such that the left term of the error decomposition is $0$. Setting $\gamma=\beta_t/\alpha_t$, it follows for the squared projection error of $\f$ onto $\range [\Q, \q]$ that
    \begin{align*}
        \min_{\weights \in \R^t} \|\f-\tilde{\f}\|^2 &= \min_{\beta \in \R^t} \|\alpha_t \g_\perp - \beta_t \h_\perp\|^2\\
        &=\min_{\gamma \in \R^t} \alpha^2_t \|\g_\perp - \gamma \q_\perp\|^2\\
        &=\min_{\gamma \in \R^t} \alpha_t^2 (\|\g_\perp\|^2-2\gamma \inner{\q_\perp}{\g_\perp} + \gamma^2\|\q_\perp\|^2) \\
        \intertext{and plugging in the minimizing $\gamma=\inner{\q_\perp}{\g_\perp}/\|\q_\perp\|^2$ }
        &=\alpha^2(\|\g_\perp\| - (\inner{\g_\perp}{\q_\perp})^2/\|\q_\perp\|^2) \enspace ,
    \end{align*}
    from which, noting that $\inner{\g_\perp}{\q_\perp}=\inner{\g_\perp}{\q}$, it follows that a query that maximizes $|\inner{\g_\perp}{\q}|/\norm{\q_\perp}$ minimizes the projection error. Hence, by choosing $\f=\f^\mathrm{GCD}$ the ideal corrective gradient descent update, we have that $\tilde{\f}^\mathrm{GCD}_\cQ \in \range [\Q; \q]$ and by definition of $\f_q$ we have $R_\lambda(\f_q) \leq R_\lambda(\tilde{\f}^\mathrm{GCD}_\cQ)$ as required.
\end{proof}

\begin{table}[t]
\caption{Calculation of the objective functions for the first and the second query in proof of Proposition \ref{prop:advantage}}
\label{tb:prop2proof1}
\vskip 0.05in
% \hskip 0.05in
\begin{center}
\begin{footnotesize}
\begin{sc}
\begin{tabular}{c@{\hskip 0.03in}c|@{\hskip 0.03in}c@{\hskip 0.05in}c@{\hskip 0.03in}|c@{\hskip 0.03in}c@{\hskip 0.03in}c@{\hskip 0.03in}c@{\hskip 0.03in}c@{\hskip 0.03in}c@{\hskip 0.03in}c}
\hline
 &  & \multicolumn{2}{c|}{1st query} & \multicolumn{7}{c}{2nd   query} \\
\multirow{-2}{*}{$\q$} & \multirow{-2}{*}{$\|\q\|$} & \begin{tabular}[c]{@{}c@{}}$\gsobj(\q)$\\      $\left|\q^T\g^{(0)}\right|$\end{tabular} & \begin{tabular}[c]{@{}c@{}}$\gbobj(\q)$\\      $\ogbobj(\q)$\end{tabular} & $\q_\bot$ & $\|\q_\bot\|$ & \begin{tabular}[c]{@{}c@{}}$\left|\q^T\g^{(1)}_\gb\right|$\\      $\left|\q^T_\bot\g^{(1)}_{\ogb\bot}\right|$\end{tabular} & \begin{tabular}[c]{@{}c@{}}$\gsobj(\q)$\\      1st case\end{tabular} & \begin{tabular}[c]{@{}c@{}}$\gsobj(\q)$\\      2nd case\end{tabular} & $\gbobj(\q)$ & $\ogbobj(\q)$ \\
\hline
$(1, 0, 0, 0, 0)$ & $1$ & $\alpha_m+\epsilon_m$ & $\alpha_m+\epsilon_m$ & $(1, 0, 0, 0, 0)$ & $1$ & $\alpha_m+\epsilon_m$ & $\cfrac{\epsilon_m}{3}$ & $\alpha_m+\epsilon_m$ & $\alpha_m+\epsilon_m$ & $\alpha_m+\epsilon_m$ \\
$(0, 1, 0, 0, 0)$ & $1$ & $\alpha_m$ & $\alpha_m$ & $(0, 1, 0, 0, 0)$ & $1$ & $\alpha_m$ & $2\alpha_m+\cfrac{2\epsilon_m}{3}$ & $\alpha_m$ & $\alpha_m$ & $\alpha_m$ \\
$(0, 0, 1, 0, 0)$ & $1$ & $3\alpha_m+\epsilon_m$ & {\color[HTML]{FE0000} \textbf{$3\alpha_m+\epsilon_m$}} & $(0, 0, 0, 0, 0)$ & $0$ & $0$ & $2\alpha_m+\cfrac{\epsilon_m}{3}$ & $3\alpha_m+\epsilon_m$ & $0$ & $0$ \\
$(0, 0, 0, 1, 0)$ & $1$ & $\alpha_m+\epsilon_m$ & $\alpha_m+\epsilon_m$ & $(0, 0, 0, 1, 0)$ & $1$ & $\alpha_m+\epsilon_m$ & $\alpha_m+\epsilon_m$ & $\cfrac{\alpha_m}{2}$ & $\alpha_m+\epsilon_m$ & $\alpha_m+\epsilon_m$ \\
$(0, 0, 0, 0, 1)$ & $1$ & $2\alpha_m+\epsilon_m$ & $2\alpha_m+\epsilon_m$ & $(0, 0, 0, 0, 1)$ & $1$ & $2\alpha_m+\epsilon_m$ & $2\alpha_m+\epsilon_m$ & $\cfrac{\alpha_m}{2}$ & $2\alpha_m+\epsilon_m$ & $2\alpha_m+\epsilon_m$ \\
$(1, 1, 0, 0, 0)$ & $\sqrt{2}$ & $\epsilon_m$ & $\cfrac{\epsilon_m}{\sqrt{2}}$ & $(1, 1, 0, 0, 0)$ & $\sqrt{2}$ & $\epsilon_m$ & $2\alpha_m+\cfrac{\epsilon_m}{3}$ & $\epsilon_m$ & $\cfrac{\epsilon_m}{\sqrt{2}}$ & $\cfrac{\epsilon_m}{\sqrt{2}}$ \\
$(0, 1, 1, 0, 0)$ & $\sqrt{2}$ & $2\alpha_m+\epsilon_m$ & $\cfrac{2\alpha_m+\epsilon_m}{\sqrt{2}}$ & $(0, 1, 0, 0, 0)$ & $1$ & $\alpha_m$ & $\cfrac{\epsilon_m}{3}$ & $2\alpha_m+\epsilon_m$ & $\cfrac{\alpha_m}{\sqrt{2}}$ & $\alpha_m$ \\
$(0, 0, 1, 1, 0)$ & $\sqrt{2}$ & $2\alpha_m$ & $\sqrt{2}\alpha_m$ & $(0, 0, 0, 1, 0)$ & $1$ & $\alpha_m+\epsilon_m$ & $\alpha_m-\cfrac{2\epsilon_m}{3}$ & $\cfrac{7\alpha_m}{2}+\epsilon_m$ & $\cfrac{\alpha_m+\epsilon_m}{\sqrt{2}}$ & $\alpha_m+\epsilon_m$ \\
$(0, 0, 0, 1, 1)$ & $\sqrt{2}$ & {\color[HTML]{FE0000} \textbf{$3\alpha_m+2\epsilon_m$}} & $\cfrac{3\alpha_m+2\epsilon_m}{\sqrt{2}}$ & $(0, 0, 0, 1, 1)$ & $\sqrt{2}$ & $3\alpha_m+2\epsilon_m$ & $3\alpha_m+2\epsilon_m$ & $0$ & {\color[HTML]{FE0000} \textbf{$\cfrac{3\alpha_m+2\epsilon_m}{\sqrt{2}}$}} & $\cfrac{3\alpha_m+2\epsilon_m}{\sqrt{2}}$ \\
$(1, 1, 1, 0, 0)$ & $\sqrt{3}$ & {\color[HTML]{FE0000} \textbf{$3\alpha_m+2\epsilon_m$}} & $\cfrac{3\alpha_m+2\epsilon_m}{\sqrt{3}}$ & $(1, 1, 0, 0, 0)$ & $\sqrt{2}$ & $\epsilon_m$ & $0$ & $3\alpha_m+2\epsilon_m$ & $\cfrac{\epsilon_m}{\sqrt{3}}$ & $\cfrac{\epsilon_m}{\sqrt{2}}$ \\
$(0, 1, 1, 1, 0)$ & $\sqrt{3}$ & $\alpha_m$ & $\cfrac{\alpha_m}{\sqrt{3}}$ & $(0, 1, 0, 1, 0)$ & $\sqrt{2}$ & $2\alpha_m+\epsilon_m$ & $\alpha_m+\cfrac{4\epsilon_m}{3}$ & $\cfrac{5\alpha_m}{2}+\epsilon_m$ & $\cfrac{2\alpha_m+\epsilon_m}{\sqrt{3}}$ & $\cfrac{2\alpha_m+\epsilon_m}{\sqrt{2}}$ \\
$(0, 0, 1, 1, 1)$ & $\sqrt{3}$ & $\epsilon_m$ & $\cfrac{\epsilon_m}{\sqrt{3}}$ & $(0, 0, 0, 1, 1)$ & $\sqrt{2}$ & $3\alpha_m+2\epsilon_m$ & $\alpha_m+\cfrac{5\epsilon_m}{3}$ & $3\alpha_m+\epsilon_m$ & $\cfrac{3\alpha_m+2\epsilon_m}{\sqrt{3}}$ & $\cfrac{3\alpha_m+2\epsilon_m}{\sqrt{2}}$ \\
$(1, 1, 1, 1, 0)$ & $2$ & $2\alpha_m+\epsilon_m$ & $\alpha_m+\cfrac{\epsilon_m}{2}$ & $(1, 1, 0, 1, 0)$ & $\sqrt{3}$ & $\alpha_m$ & $\alpha_m+\epsilon_m$ & {\color[HTML]{FE0000} \textbf{$\cfrac{7\alpha_m}{2}+2\epsilon_m$}} & $\cfrac{\alpha_m}{2}$ & $\cfrac{\alpha_m}{\sqrt{3}}$ \\
$(0, 1, 1, 1, 1)$ & $2$ & $\alpha_m+\epsilon_m$ & $\cfrac{\alpha_m+\epsilon_m}{2}$ & $(0, 1, 0, 1, 1)$ & $\sqrt{3}$ & $4\alpha_m+2\epsilon_m$ & {\color[HTML]{FE0000} \textbf{$3\alpha_m+\cfrac{7\epsilon_m}{3}$}} & $2\alpha_m+\epsilon_m$ & $2\alpha_m+\epsilon_m$ & {\color[HTML]{FE0000} \textbf{$\cfrac{4\alpha_m+\epsilon_m}{\sqrt{3}}$}} \\
$(1, 1, 1, 1, 1)$ & $\sqrt{5}$ & $0$ & $0$ & $(1, 1, 0, 1, 1)$ & $2$ & $3\alpha_m+\epsilon_m$ & $3\alpha_m+2\epsilon_m$ & $3\alpha_m+2\epsilon_m$ & $\cfrac{3\alpha_m+\epsilon_m}{\sqrt{5}}$ & $\cfrac{3\alpha_m+\epsilon_m}{2}$
\\
\hline
\end{tabular}
\end{sc}
\end{footnotesize}
\end{center}
\vskip -0.1in
\end{table}

\begin{table}[t]
\caption{Calculation of the objective functions for the third query in proof of Proposition \ref{prop:advantage}}
\label{tb:prop2proof2}
\vskip 0.05in
% \hskip 0.05in
\begin{center}
\begin{footnotesize}
\begin{sc}
\begin{tabular}{c@{\hskip 0.05in}c@{\hskip 0.05in}c@{\hskip 0.05in}c@{\hskip 0.05in}c@{\hskip 0.05in}c@{\hskip 0.05in}c@{\hskip 0.05in}c@{\hskip 0.05in}c@{\hskip 0.05in}c}
\hline
$\q$ & $\|\q\|$ & $\q_\bot$ & $\|\q_\bot\|_2$ & $\left|\q^T_\bot\g^{(1)}_{\ogb\bot}\right|$ & $\left|\q^T\g^{(2)}_\gb\right|$ & \begin{tabular}[c]{@{}c@{}}$\gsobj(\q)$\\      1st case\end{tabular} & \begin{tabular}[c]{@{}c@{}}$\gsobj(\q)$\\      2nd case\end{tabular} & $\gbobj(\q)$ & $\ogbobj(\q)$ \\
\hline
$(1, 0, 0, 0, 0)$ & $1$ & $(1, 0, 0, 0, 0)$ & $1$ & $\alpha_m+\epsilon_m$ & $\alpha_m+\epsilon_m$ & $\cfrac{3\alpha_m+\epsilon_m}{4}$ & $\cfrac{3\epsilon_m}{7}$ & {\color[HTML]{FE0000} \textbf{$\alpha_m+\epsilon_m$}} & $\alpha_m+\epsilon_m$ \\
$(0, 1, 0, 0, 0)$ & $1$ & $\left(0, \cfrac{2}{3}, 0, -\cfrac{1}{3},   -\cfrac{1}{3}\right)$ & $\sqrt{\cfrac{2}{3}}$ & $\cfrac{\alpha_m+2\epsilon_m}{3}$ & $\alpha_m$ & $\cfrac{13\alpha_m+3\epsilon_m}{8}$ & {\color[HTML]{FE0000} \textbf{$2\alpha_m+\cfrac{4\epsilon_m}{7}$}} & $\alpha_m$ & $\cfrac{\alpha_m+2\epsilon_m}{\sqrt{6}}$ \\
$(0, 0, 1, 0, 0)$ & $1$ & $(0, 0, 0, 0, 0)$ & $0$ & $0$ & $0$ & $\cfrac{19\alpha_m+5\epsilon_m}{8}$ & $2\alpha_m+\cfrac{3\epsilon_m}{7}$ & $0$ & $0$ \\
$(0, 0, 0, 1, 0)$ & $1$ & $\left(0, -\cfrac{1}{3}, 0, \cfrac{2}{3},   -\cfrac{1}{3}\right)$ & $\sqrt{\cfrac{2}{3}}$ & $\cfrac{\alpha_m-\epsilon_m}{3}$ & $\cfrac{\alpha_m}{2}$ & $\cfrac{\alpha_m-\epsilon_m}{8}$ & $\cfrac{2\epsilon_m}{7}$ & $\cfrac{\alpha_m}{2}$ & $\cfrac{\alpha_m-\epsilon_m}{\sqrt{6}}$ \\
$(0, 0, 0, 0, 1)$ & $1$ & $\left(0, -\cfrac{1}{3}, 0, -\cfrac{1}{3},   \cfrac{2}{3}\right)$ & $\sqrt{\cfrac{2}{3}}$ & $\cfrac{2\alpha_m+\epsilon_m}{3}$ & $\cfrac{\alpha_m}{2}$ & $\cfrac{7\alpha_m+\epsilon_m}{8}$ & $\cfrac{2\epsilon_m}{7}$ & $\cfrac{\alpha_m}{2}$ & $\cfrac{2\alpha_m+\epsilon_m}{\sqrt{6}}$ \\
$(1, 1, 0, 0, 0)$ & $\sqrt{2}$ & $\left(1, \cfrac{2}{3}, 0, -\cfrac{1}{3},   -\cfrac{1}{3}\right)$ & $\sqrt{\cfrac{5}{3}}$ & $\cfrac{4\alpha_m+5\epsilon_m}{3}$ & $\epsilon_m$ & $\cfrac{19\alpha_m+5\epsilon_m}{8}$ & $2\alpha_m+\cfrac{\epsilon_m}{7}$ & $\cfrac{\epsilon_m}{\sqrt{2}}$ & $\cfrac{4\alpha_m+5\epsilon_m}{\sqrt{15}}$ \\
$(0, 1, 1, 0, 0)$ & $\sqrt{2}$ & $\left(0, \cfrac{2}{3}, 0, -\cfrac{1}{3},   -\cfrac{1}{3}\right)$ & $\sqrt{\cfrac{2}{3}}$ & $\cfrac{\alpha_m+2\epsilon_m}{3}$ & $\alpha_m$ & $\cfrac{3\alpha_m+\epsilon_m}{4}$ & $\cfrac{\epsilon_m}{7}$ & $\cfrac{\alpha_m}{\sqrt{2}}$ & $\cfrac{\alpha_m+2\epsilon_m}{\sqrt{6}}$ \\
$(0, 0, 1, 1, 0)$ & $\sqrt{2}$ & $\left(0, -\cfrac{1}{3}, 0, \cfrac{2}{3},   -\cfrac{1}{3}\right)$ & $\sqrt{\cfrac{2}{3}}$ & $\cfrac{\alpha_m-\epsilon_m}{3}$ & $\cfrac{\alpha_m}{2}$ & {\color[HTML]{FE0000} \textbf{$\cfrac{5\alpha_m+\epsilon_m}{2}$}} & $2\alpha_m+\cfrac{\epsilon_m}{7}$ & $\cfrac{\alpha_m}{2\sqrt{2}}$ & $\cfrac{\alpha_m-\epsilon_m}{\sqrt{6}}$ \\
$(0, 0, 0, 1, 1)$ & $\sqrt{2}$ & $\left(0, -\cfrac{2}{3}, 0, \cfrac{1}{3}, \cfrac{1}{3}\right)$ & $\sqrt{\cfrac{2}{3}}$ & $\cfrac{\alpha_m+2\epsilon_m}{3}$ & $0$ & $\cfrac{3\alpha_m+\epsilon_m}{4}$ & $0$ & $0$ & $\cfrac{\alpha_m+2\epsilon_m}{\sqrt{6}}$ \\
$(1, 1, 1, 0, 0)$ & $\sqrt{3}$ & $\left(1, \cfrac{2}{3}, 0, -\cfrac{1}{3},   -\cfrac{1}{3}\right)$ & $\sqrt{\cfrac{5}{3}}$ & $\cfrac{4\alpha_m+5\epsilon_m}{3}$ & $\epsilon_m$ & $0$ & $\cfrac{2\epsilon_m}{7}$ & $\cfrac{\epsilon_m}{sqrt{3}}$ & $\cfrac{4\alpha_m+5\epsilon_m}{\sqrt{15}}$ \\
$(0, 1, 1, 1, 0)$ & $\sqrt{3}$ & $\left(0, \cfrac{1}{3}, 0, \cfrac{1}{3}, -\cfrac{2}{3}\right)$ & $\sqrt{\cfrac{2}{3}}$ & $\cfrac{2\alpha_m+\epsilon_m}{3}$ & $\cfrac{\alpha_m}{2}$ & $\cfrac{7\alpha_m+\epsilon_m}{8}$ & $\cfrac{3\epsilon_m}{7}$ & $\cfrac{\alpha_m}{2\sqrt{3}}$ & $\cfrac{2\alpha_m+\epsilon_m}{\sqrt{6}}$ \\
$(0, 0, 1, 1, 1)$ & $\sqrt{3}$ & $\left(0, -\cfrac{2}{3}, 0, \cfrac{1}{3}, \cfrac{1}{3}\right)$ & $\sqrt{\cfrac{2}{3}}$ & $\cfrac{\alpha_m+2\epsilon_m}{3}$ & $0$ & $\cfrac{13\alpha_m+3\epsilon_m}{8}$ & $2\alpha_m+\cfrac{3\epsilon_m}{7}$ & $0$ & $\cfrac{\alpha_m+2\epsilon_m}{\sqrt{6}}$ \\
$(1, 1, 1, 1, 0)$ & $2$ & $\left(1, \cfrac{1}{3}, 0, \cfrac{1}{3}, -\cfrac{2}{3}\right)$ & $\sqrt{\cfrac{5}{3}}$ & $\cfrac{5\alpha_m+4\epsilon_m}{3}$ & $\cfrac{\alpha_m}{2}+\epsilon_m$ & $\cfrac{\alpha_m-\epsilon_m}{8}$ & $0$ & $\cfrac{\alpha_m}{4}+\cfrac{\epsilon_m}{2}$ & {\color[HTML]{FE0000} \textbf{$\cfrac{5\alpha_m+4\epsilon_m}{\sqrt{15}}$}} \\
$(0, 1, 1, 1, 1)$ & $2$ & $(0, 0, 0, 0, 0)$ & $0$ & $0$ & $\alpha_m$ & $0$ & $\cfrac{\epsilon}{7}$ & $\cfrac{\alpha_m}{2}$ & $0$ \\
$(1, 1, 1, 1, 1)$ & $\sqrt{5}$ & $(1, 0, 0, 0, 0)$ & $1$ & $\alpha_m+\epsilon_m$ & $\epsilon_m$ & $\cfrac{3\alpha_m+\epsilon_m}{4}$ & $\cfrac{2\epsilon}{7}$ & $\cfrac{\epsilon_m}{\sqrt{5}}$ & $\alpha_m+\epsilon_m$
\\
\hline
\end{tabular}
\end{sc}
\end{footnotesize}
\end{center}
\vskip -0.1in
\end{table}
%In the proof of Proposition 2, we assume that there is a default background rule which covers all data points whose weight is the mean of the outputs of all data points, and the weight of the background rule is not recalculated.

% \begin{proposition}
%     There is one-dimensional input data $\X\in \R^{5\times1}$ and a sequence of output data, $\y^{(m)}\in \R^5$, such that for the empirical risk of three-element rule ensembles with corrective weight update we have $
%     \lim_{m\rightarrow\infty}{R\left(f_\ogb\left(\X, \y^{(m)}\right)\right)}=0,
%     $ but
%     $$\lim_{m\rightarrow\infty}{R\left(f_\gb\left(\X, \y^{(m)}\right)\right)}=
%     \lim_{m\rightarrow\infty}{R\left(f_\gs\left(\X, \y^{(m)}\right)\right)}=\infty,$$
%     where $f_{*}(\X, \y^{(m)})$ denotes the three-element rule ensemble trained by $\obj_{*}$.
% \end{proposition}
\advantageprop*
\begin{proof}
We define $\X$ as $(1,2,3,4,5)$
and $\y^{(m)}=(-\alpha_m-\epsilon_m, \alpha_m, -3\alpha_m-\epsilon_m, \alpha_m+\epsilon_m, 2\alpha_m+\epsilon_m)$,
where $\alpha_m, \epsilon_m \in \R$ are two arbitrary sequences with $\alpha_m\rightarrow\infty$ and $\epsilon_m\rightarrow0$.
We calculate the values of $\gsobj$,$\gbobj$ and $\ogbobj$ for all possible queries to select the first, second and the third queries, as shown in Table \ref{tb:prop2proof1} and Table \ref{tb:prop2proof2}. 
% As a tiebreaker, if there are two queries with the same maximum objective values, we choose the one covering more data points.
We use the query vectors to represent the queries in this proof.

For the gradient sum objective, according to Table \ref{tb:prop2proof1}, the first query identified is either the one with outputs $\q_\gs^{(1)}=(1,1,1,0,0)$ or the one with output $\q_\gs^{(1)'}=(0,0,0,1,1)$ for the five data points. 
In the first case,
the weight of the query is $\weight_\gs^{(1)}=(-\alpha_m-2\epsilon_m/3)$.
The gradient vector after adding this rule is $\g_{\gs}^{(1)}=(-\epsilon_m/3,2\alpha_m+2\epsilon_m/3,-2\alpha_m-\epsilon_m/3, \alpha_m+\epsilon_m, 2\alpha_m+\epsilon_m)$.
The second query selected is $\q_\gs^{(2)}=(0,1,1,1,1)$ according to the objective values calculated in Table \ref{tb:prop2proof1}.
After adding this query, the corrected weight vector is 
$\weight_\gs^{(2)}=(-7\alpha_m/4-5\epsilon_m/4, 9\alpha_m/8+7\epsilon_m/8)$,
and the gradient vector is $\g_{\gs}^{(2)}=((3\alpha_m+\epsilon_m)/4, (13\alpha_m+3\epsilon_m)/8, -(19\alpha_m+5\epsilon_m)/8, -(\alpha_m-\epsilon_m)/8, (7\alpha_m+\epsilon_m)/8)$.
Then, we calculate the values of $\gsobj(\q)$ in Table \ref{tb:prop2proof2}, and the third query selected is $\q_\gs^{(3)}=(0,0,1,1,0)$.
The corrected weight vector is $\weight_\gs^{(3)}=(-(7\alpha_m+5\epsilon_m)/4, (19\alpha_m+9\epsilon_m)/8, -(5\alpha_m+\epsilon_m)/2)$.
The output vector of the rule ensemble is
$$-\frac{7\alpha_m+5\epsilon_m}{4}\begin{pmatrix}
1\\1\\1\\0\\0
\end{pmatrix}+\frac{19\alpha_m+9\epsilon_m}{8}\begin{pmatrix}
0\\1\\1\\1\\1
\end{pmatrix}-\frac{5\alpha_m+\epsilon_m}{2}\begin{pmatrix}
0\\0\\1\\1\\0
\end{pmatrix}=\begin{pmatrix}
-(7\alpha_m+5\epsilon_m)/4\\(5\alpha_m-\epsilon_m)/8\\-5(3\alpha_m+\epsilon_m)/8\\-(\alpha_m-5\epsilon_m)/8\\(19\alpha_m+9\epsilon)/8
\end{pmatrix}.$$
The gradient vector is $\g_{\gs}^{(3)}=((3\alpha_m+\epsilon_m)/4, (3\alpha_m+\epsilon_m)/8, -(9\alpha_m+3\epsilon_m)/8, (9\alpha_m+3\epsilon_m)/8, -(3\alpha_m+\epsilon_m)/8)$.
The empirical risk after adding three rules into the rule ensemble is 
$$R\left(f_\gs\left(\X, \y^{(m)}\right)\right)=\frac{3}{8}(3\alpha_m+\epsilon_m)^2.$$
In the second case ( $\q_\gs^{(1)'}=(0,0,0,1,1)$), the weight of the first query is $3\alpha_m/2+\epsilon_m$, and the gradient vector is $\g_{\gs}^{(1)'}=(-\alpha_m-\epsilon_m, \alpha_m, -3\alpha_m-\epsilon_m, -\alpha_m/2, \alpha_m/2)$.
The second query selected is $\q_\gs^{(2)'}=(1,1,1,1,0)$.
The corrected weight vector is $(2\alpha_m+9\epsilon_m,-\alpha_m-4\epsilon_m/7)$.
The gradient vector after adding two rules is $\g_{\gs}^{(2)'}=(-3\epsilon_m/7, 2\alpha_m+4\epsilon_m/7, -2\alpha_m-3\epsilon/7, 2\epsilon/7, -2\epsilon/7)$.
The third query selected is $\q_\gs^{(3)'}=(0,1,0,0,0)$.
The corrected weight vector is $((12\alpha_m+7\epsilon_m)/5, -(9\alpha_m+4\epsilon_m)/5, 2(7\alpha_m+2\epsilon_m)/5)$.
The output vector of the rule ensemble is
$$\frac{12\alpha_m+7\epsilon_m}{5}\begin{pmatrix}
0\\0\\0\\1\\1
\end{pmatrix}-\frac{9\alpha_m+4\epsilon_m}{5}\begin{pmatrix}
1\\1\\1\\1\\0
\end{pmatrix}+\frac{2}{5}(7\alpha_m+2\epsilon_m)\begin{pmatrix}
0\\1\\0\\0\\0
\end{pmatrix}=\begin{pmatrix}
-(9\alpha_m+4\epsilon_m)/5\\
\alpha_m\\
-(9\alpha_m+4\epsilon_m)/5\\
3(\alpha_m+\epsilon_m)/5\\
(12\alpha_m+7\epsilon_m)/5
\end{pmatrix}.$$
The empirical risk after adding three rules is 
$$R\left(f_\gs\left(\X, \y^{(m)}\right)\right)=\frac{2}{5}\left(6\alpha_m^2+2\alpha_m\epsilon_m+\epsilon_m^2\right).$$

\begin{figure*}[t]
\vskip 0.1in
% \addtocounter{figure}{5}
\begin{center}
\centering
\includegraphics[width=0.95\columnwidth]{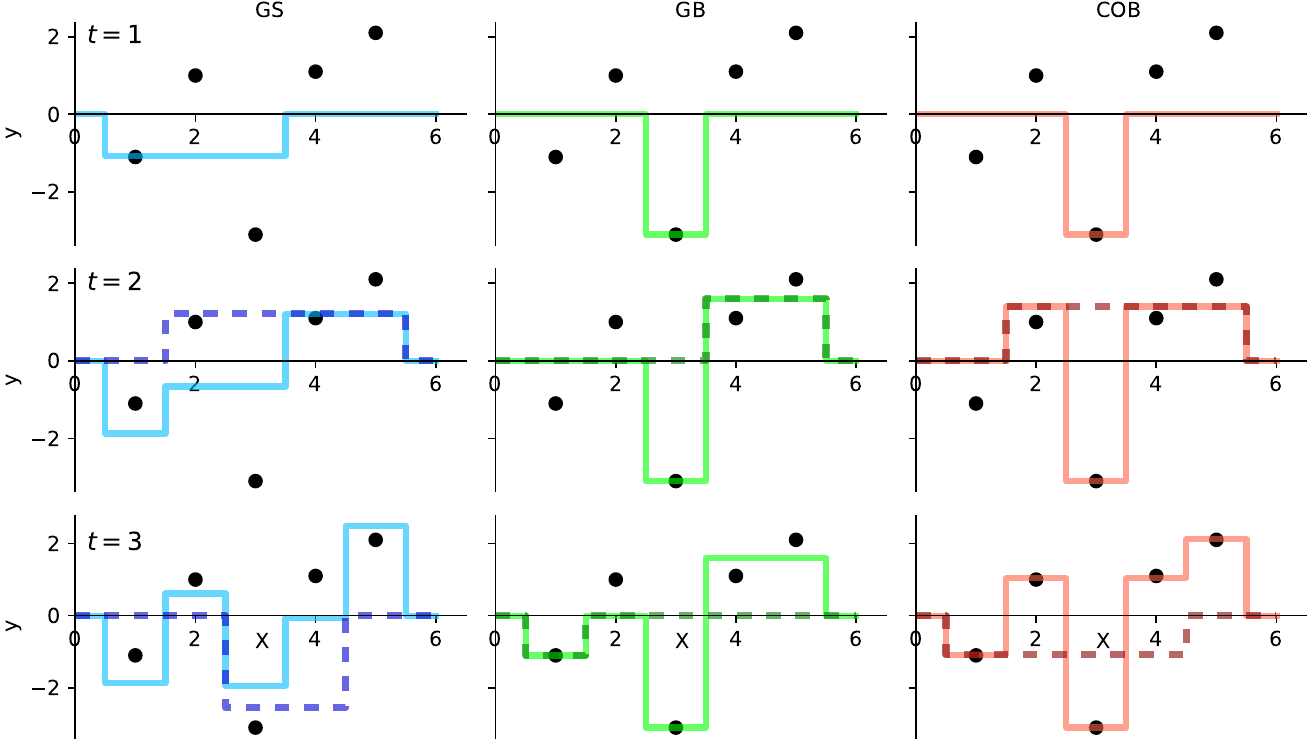}

\vskip -0.1in
\caption{Visualisation of the generating of rules by different objective functions (in column) in each iteration (row) for the dataset used in proof of Proposition \ref{prop:advantage}. The solid lines show the output of rule ensembles in each iteration. The dashed lines show the output of the rule generated in each iteration.}
\label{fig:vis_proof}
\end{center}
\vskip 0.1in
\end{figure*}

For gradient boosting objective, the first query selected is $\q_\gb^{(1)}=(0,0,1,0,0)$ according to Table \ref{tb:prop2proof1}. 
Its weight is $-3\alpha_m-\epsilon_m.$ 
The gradient after adding this rule is $\g_{\gb}^{(1)}=(-\alpha_m-\epsilon_m, \alpha_m, 0, \alpha_m+\epsilon_m, 2\alpha_m+\epsilon_m).$
The second query selected by the gradient boosting objective is $\q_\gb^{(2)}=(0,0,0,1,1).$ 
The weight vector after correction is $(-3\alpha_m-\epsilon_m, 3\alpha_m/2+\epsilon_m).$ 
The gradient becomes $\g_{\gb}^{(2)}=(-\alpha_m-\epsilon_m, \alpha_m, 0, -\alpha_m/2, \alpha_m/2)$ after adding the second rule.
Then, according to Table \ref{tb:prop2proof2}, the third query selected by gradient boosting objective is $\q_\gb^{(3)}=(1,0,0,0,0).$ 
The corrected weight vector is $(-3\alpha_m-\epsilon_m, 3\alpha_m/2+\epsilon_m, -\alpha_m-\epsilon_m).$
The output vector of the rule ensemble is
$$-(3\alpha_m+\epsilon_m)\begin{pmatrix}
0\\0\\1\\0\\0
\end{pmatrix}+\left(\frac{3\alpha_m}{2}+\epsilon_m\right)\begin{pmatrix}
0\\0\\0\\1\\1
\end{pmatrix}-(\alpha_m+\epsilon_m)\begin{pmatrix}
1\\0\\0\\0\\0
\end{pmatrix}=\begin{pmatrix}
-\alpha_m-\epsilon_m\\0\\-3\alpha_m-\epsilon_m\\3\alpha_m/2+\epsilon_m\\3\alpha_m/2+\epsilon_m
\end{pmatrix}.$$
The gradient vector is $\g_{\gb}^{(3)}=(0, \alpha_m, 0, -\alpha_m/2, \alpha_m/2).$
The empirical risk after adding three rules into the ensemble is 
$$
R\left(f_\gb\left(\X, \y^{(m)}\right)\right)=3\alpha_m^2/2.
$$

In the first iteration, the orthogonal boosting objective selects the same query as the gradient boosting, since their objective values are the same, so their weights, outputs, gradients are also the same.
We calculate the orthogonal projections $\q_\bot$, their lengths $\|\q_\bot\|$ and the objective values $\ogbobj(\q)$ as Table \ref{tb:prop2proof1}. 
The second query selected is $\q_\ogb^{(2)}=(0,1,1,1,1),$
and the weight vector after correction is $(-(13\alpha_m+5\epsilon_m)/3, (4\alpha_m+2\epsilon_m)/3).$
The gradient vector after adding the first two rules is $\g_{\ogb}^{(2)}=(-(\alpha_m+\epsilon_m),-(\alpha_m+\epsilon_m)/3,0,-(\alpha_m-\epsilon_m)/3,(2\alpha_m+\epsilon_m)/3).$
According to Table \ref{tb:prop2proof2}, the third query selected by the orthogonal boosting objective is $\q_\ogb^{(3)}=(1,1,1,1,0)$.
The weight vector is now $(-4\alpha_m-7\epsilon_m/5, 2\alpha_m+6\epsilon_m/5, -\alpha_m-4\epsilon_m/5).$
The output vector of the rule ensemble is
$$-\left(4\alpha_m+\frac{7\epsilon_m}{5}\right)\begin{pmatrix}
0\\0\\1\\0\\0
\end{pmatrix}+\left(2\alpha_m+\frac{6\epsilon_m}{5}\right)\begin{pmatrix}
0\\1\\1\\1\\1
\end{pmatrix}-\left(\alpha_m+\frac{4\epsilon_m}{5}\right)\begin{pmatrix}
1\\1\\1\\1\\0
\end{pmatrix}=\begin{pmatrix}
-\alpha_m-4\epsilon_m/5\\\alpha_m+2\epsilon_m/5\\-3\alpha_m-\epsilon_m\\\alpha_m+2\epsilon_m/5\\2\alpha_m+6\epsilon_m/5
\end{pmatrix}.$$
The gradient vector is $\g_{\ogb}^{(3)}=(-\epsilon_m/5, -2\epsilon_m/5, 0, 3\epsilon_m/5, -\epsilon_m/5).$
The empirical risk after adding three rules into the ensemble is 
$$
R\left(f_\ogb\left(\X, \y^{(m)}\right)\right)=3\epsilon_m^2/5.
$$

Since $\alpha_m\rightarrow\infty$ and $\epsilon_m\rightarrow0$, 
$$
\lim_{m\rightarrow\infty}{R\left(f_\gs\left(\X, \y^{(m)}\right)\right)}=\lim_{m\rightarrow\infty}{\frac{3}{8}\left(3\alpha_m+\epsilon_m\right)^2}=\infty,
$$
or 
$$
\lim_{m\rightarrow\infty}{R\left(f_\gs(\X, \y^{(m)})\right)}=\lim_{m\rightarrow\infty}{\frac{2}{5}\left(6\alpha_m^2+2\alpha_m\epsilon_m+\epsilon_m^2\right)}=\infty,
$$
$$
\lim_{m\rightarrow\infty}{R\left(f_\gb\left(\X, \y^{(m)}\right)\right)}=\lim_{m\rightarrow\infty}{3\alpha_m^2/2}=\infty,
$$
and 
$$
\lim_{m\rightarrow\infty}{R\left(f_\ogb\left(\X, \y^{(m)}\right)\right)}=\lim_{m\rightarrow\infty}{3\epsilon_m^2/5}=0.
$$

The rules generated by each objective function in each step is visualised as Figure \ref{fig:vis_proof}.

\end{proof}

% \begin{theorem}
%     Given a gradient vector $\g \in \R^n$, an orthonormal basis $\veco_1, \dots, \veco_t \in \R^n$ of the subspace spanned by the queries of the first $t$ rules, and a sub-selection of $l$ candidate points $\sigma\from [l] \to [n]$, the best prefix selection problem~\eqref{eq:best_prefix} can be solved in time $O(tl)$.
%     \label{thm:fast_prefix1}
% \end{theorem}
\efficientprop*
\begin{proof}
    To see the claim, we first rewrite the objective value for the $i$-th prefix as
    \begin{equation*}
        \frac{\left|\inner{\g_\bot}{\q^{(i)}}\right|}{\|\q^{(i)}_\perp\| + \epsilon} = \frac{\left|\inner{\g_\bot}{\q^{(i)}}\right|}{\sqrt{\|\q^{(i)}\|^2 - \|\q^{(i)}_\parallel\|^2} + \epsilon} 
        \enspace .
    \end{equation*}
    The value of $\|\q^{(i)}\|^2$ is trivially given as $|\text{I}(\q^{(i)})|=i$, and $\inner{\g}{\q^{i}}$ can be easily computed for all $i \in [l]$ in time $O(n)$ via cumulative summation.
    Finally we can reduce the problem of computing the (squared) norms of the $l$ projected prefixes to computing the $t$ (squared) norms of the prefixes on the subspaces given by the individual orthonormal basis vectors via
    \begin{equation*}
        \|\q^{(i)}_\parallel\|^2 = \left\|\sum_{k=1}^t \veco_k \veco_k^T\q^{(i)}\right\|^2 = \sum_{k=1}^t \|\veco_k \veco_k^T\q^{(i)}\|^2 \enspace .
    \end{equation*}
    Each of these $t$ sequences of (squared) norms can be computed in time $O(n)$ by rewriting
    \begin{align*}
        \|\veco_k\veco_k^T\q^{(i)}\| &= \left\|\veco_k\veco_k^T\left(\sum_{j=1}^i\e_{\sigma(j)}\right)\right\|\\
        &=\|\veco_k\| \left|\sum_{j=1}^i \inner{\veco_k}{\e_{\sigma(j)}}\right|\\
        &=\left|\sum_{j=1}^i o_{k, \sigma(j)}\right| 
    \end{align*}
    where the last equality shows how an $O(n)$-computation is achieved via cumulative summation of the $k$-th basis vector elements in the order given by $\sigma$.
    %To incrementally compute $\|q^{(i)}\|$  in time $O(t)$ per $i \in [n]$ as required by the claim we reduce the problem
\end{proof}

\begin{proposition}
Let $\g$ be the gradient vector after the application of the weight correction step~\eqref{eq:weight_correction} for selected queries $\q_1, \dots, \q_t$. If the regularisation parameter is 0, then $\g \perp \spanof\{\q_1, \dots, \q_t\}$.
\label{th:g_orth}
\end{proposition}
\begin{proof}
    After the weight correction step $\weights$ is a stationary point of $R(\Q(\cdot))$, i.e., we have for all $j \in [t]$
    \begin{equation*}
        0 = \frac{\partial R(\Q\weights)}{\partial \beta_j}
          = \sum_{i=1}^n \frac{\partial\, l(\inner{\tilde{\q}_i}{\weights}, y_i)}{\partial \beta_j} 
          = \sum_{i=1}^n q_{ij} \underbrace{\frac{\partial\, l(\inner{\tilde{\q}_i}{\weights}, y_i)}{\partial\, \inner{\tilde{\q}_i}{\weights}}}_{g_i} = \inner{\q_j}{\g} \enspace .
    \end{equation*}
\end{proof}
\begin{proposition}
Let $\Q = [\q_1, \dots, \q_{t-1}] \in \R^{n \times (t-1)}$ be the selected query matrix and $\g$ the corresponding gradient vector after full weight correction, and let us denote by $\q=\q_\perp+\q_\parallel$ the orthogonal decomposition of $\q$ with respect to $\range\, \Q$. 
%, and $\q^*$ a maximizer of the \defemph{orthogonal gradient boosting objective} 
% $
%     \ogbobj(q) = |\inner{\g}{\q}|/(\|\q_\perp\|+\epsilon)
% $
% where $\q_\perp$ is the projection of $\q$ onto the orthogonal complement of $\range \,\Q$. 
Then we have for a maximizer $\q^*$ of the \defemph{orthogonal gradient boosting objective} 
$
    \ogbobj(q) = |\inner{\g_\perp}{\q}|/(\|\q_\perp\|+\epsilon)
$:
\begin{itemize}
    \item[a)] For $\epsilon \to 0$, $\spanof\{\q_1, \dots, \q_{t-1}, \q^*\}$ is the best approximation to $\spanof\{\q_1, \dots, \q_{t-1}, \g\}$.
    % as defined in~\eqref{eq:gen_proj_error}.
    % \item[b)] For $\epsilon \to \infty$, $\q^*$ is a maximizer of $\gsobj$ and any maximizer of $\gsobj$ is a maximizer of $\ogbobj$.
    \item[b)] For $\epsilon \to \infty$, $\q^*$ maximizes $\gsobj$ and any maximizer of $\gsobj$ maximizes $\ogbobj$.
    \item[c)] For $\epsilon = 0$ and $\|\q_\perp\|>0$, the ratio $(\ogbobj(q)/\gbobj(q))^2$ is equal to $1+(\|\q_\parallel\|/\|\q_\perp\|)^2$.
    \item[d)] The objective value $\ogbobj(q)$ is upper bounded by $\|\g_\perp\|$.
\end{itemize}
    \label{thm:objective}
\end{proposition}

\begin{proof}
\begin{itemize}
\item[a)] If $\epsilon\rightarrow0$, then $\ogbobj(q)\rightarrow\cfrac{|g_\bot^Tq|}{\norm{q_\bot}}$.
% Let $\Q=[\q_1,\cdots, \q_{t-1}]$.
If $\q^*$ is a maximizer of $\ogbobj$, 
then as shown in Lemma 4.1, $\q^*$ minimises the minimum distance from all 
$$\f\in\text{span}\{\q_1, \cdots, \q_{t-1}, \g\}$$
to the subspace of $$\text{span}\{\q_1, \cdots, \q_{t-1}, \q^*\}.$$ 
Therefore, the subspace spanned by $[\q_1, \cdots, \q_{t-1}, \q^*]$ is the best approximation to the subspace spanned by $[\q_1, \cdots, \q_{t-1}, \g]$.
% \end{proof}

% \subsubsection{Property b}
% \begin{proposition}
% For $\epsilon \to \infty$, $\q^*$ is also a maximizer of $\gsobj$ and any maximizer of $\gsobj$ is also a maximizer of $\ogbobj$.
%  \end{proposition}
% \begin{proof}
% If $\epsilon\rightarrow\infty$, then 
% \begin{align*}
%     \ogbobj(\q)&=\frac{|\g^T\q|}{\norm{\q_\bot}+\epsilon} \\
%     &=\cfrac{\cfrac{|\g^T\q|}{\epsilon}}{1+\cfrac{\norm{\q_\bot}}{\epsilon}} \\
%     &\rightarrow\frac{|\g^T\q|}{\epsilon}.
% \end{align*}
% Therefore, maximising $\ogbobj$ is equivalent to maximising $\gsobj$. 
% \end{proof}
% $$\epsilon_{q_1, q_2} = \frac{\|\q_2^\perp\|\gsobj(\q_1) - \gsobj(\q_2)\|\q_1^\perp\|}{\gsobj(\q_2) - \gsobj(\q_1)}.$$

% \subsection{Property b (proposal after discussion)}
% \begin{proof}
    
\item[b)] Let $q_1$ and $q_2$ be any two queries and denote by $\ogbobj^{(\epsilon)}(q)$ the $\ogbobj$-value of $q$ for a specific $\epsilon$.
Then
\begin{align*}
    &\lim_{\epsilon \to \infty} \epsilon\left(\ogbobj^{(\epsilon)}(q_1) - \ogbobj^{(\epsilon)}(q_2)\right) \\
    = &\lim_{\epsilon \to \infty} \epsilon\left( \frac{|\inner{g_\bot}{\q_1}|}{\|\q_1^\perp\|+\epsilon} - \frac{|\inner{g_\bot}{\q_2}|}{\|\q_2^\perp\|+\epsilon}\right) \\
    = &\lim_{\epsilon \to \infty} \left( \frac{|\inner{g_\bot}{\q_1}|}{\|\q_1^\perp\|/\epsilon+1} - \frac{|\inner{g_\bot}{\q_2}|}{\|\q_2^\perp\|/\epsilon+1}\right) \\
    = &|\inner{g_\bot}{\q_1}|-|\inner{g_\bot}{\q_2}|\\
    = &\gsobj(q_1) - \gsobj(q_2)
\end{align*}
Thus for large enough $\epsilon$, the signs of $\ogbobj^{(\epsilon)}(q_1) - \ogbobj^{(\epsilon)}(q_2)$ and $\gsobj(q_1) - \gsobj(q_2)$ agree.
Therefore, a query $q$ is a $\gsobj$-maximizer, i.e., $\gsobj(q) \geq \gsobj(q')$ for all $q' \in \cQ$, if and only if $q$ is a $\ogbobj$-maximizer, i.e., $\ogbobj(q) \geq \ogbobj(q')$ for all $q' \in \cQ$.
% \end{proof}

% \subsection{Property b by Pierre}
% \begin{proof}
    
% Suppose that for $\epsilon \geq 0$, $q^* \in \argmax_q \ogbobj(q)$, and for any $q' \in \argmax_q \gsobj(q)$, we have $$\frac{\gsobj(q')}{\|\q'_\perp\|+\epsilon} < \frac{\gsobj(q^*)}{\|\q^*_\perp\|+\epsilon}.$$
% Then $$\frac{\gsobj(q')}{\|\q'_\perp\|+\epsilon'} \geq \frac{\gsobj(q^*)}{\|\q^*_\perp\|+\epsilon'}$$ if and only if $$\epsilon' \geq \frac{\|\q'_\perp\|\gsobj(q^*) - \gsobj(q')\|\q^*_\perp\|}{\gsobj(q') - \gsobj(q^*)}.$$
% Therefore, if $q^*$ is a maximizer of $\ogbobj$ for $\epsilon$, but not of $\gsobj$, there exists an $\epsilon'$ such that $q^*$ is not a maximizer of $\ogbobj$. We conclude that for large enough $\epsilon$, only maximizers of $\gsobj$ can be maximizers of $\ogbobj$.
% \end{proof}

% \subsubsection{Property c}
% \begin{proposition}

%  For $\epsilon = 0$ and $\|\q_\perp\|>0$, the ratio $(\frac{\ogbobj(q)}{\gbobj(q)})^2$ is equal to $1+(\frac{\|\q_\parallel\|}{\|\q_\perp\|})^2$.
    
% \end{proposition}
% \begin{proof}
    
\item[c)] If $\epsilon=0$ and $\norm{q_\bot}>0$, then 
\begin{align*}
    \left(\frac{\ogbobj(q)}{\gbobj(q)}\right)^2&=\cfrac{\cfrac{|\g_\bot^T\q|^2}{\norm{\q_\bot}^2}}{\cfrac{|\g_\bot^T\q|^2}{\norm{\q}^2}} =\frac{\norm{\q}^2}{\norm{\q_\bot}^2} \\
    &=\frac{\norm{\q_\parallel}^2+\norm{\q_\bot}^2}{\norm{\q_\bot}^2}\\
    &=1+\left(\frac{\norm{\q_\parallel}}{\norm{\q_\bot}}\right)^2
\end{align*}
% \end{proof}
% % \subsubsection{Property d}
% \begin{proposition}
    
% The objective value $\ogbobj(q)$ is upper bounded by $\|\g\|$.
% \end{proposition}

% \begin{proof}
    
\item[d)] If we divide the numerator and denominator of $\ogbobj(\q)$ with $\norm{\q_bot}$, then we can get
\begin{align*}
    \ogbobj(\q)&=\frac{|\g_\bot^T\q|}{\|q_\perp\|+\epsilon} \\
    &= \cfrac{\cfrac{|\g_\bot^T\q_\bot|}{\norm{\q_\bot}}}{1+\cfrac{\epsilon}{\norm{\q_\bot}}}\\
    \intertext{according to the Cauchy–Schwarz inequality,  $\cfrac{|\g_\bot^T\q|}{\norm{\q_\bot}}\leq\cfrac{\norm{\g_\bot}\norm{\q_\bot}}{\norm{\q_\bot}}=\norm{\g_\bot}$, so, }\\
    \ogbobj(\q)&\leq \frac{\norm{\g_\bot}}{1+\cfrac{\epsilon}{\norm{\q_\bot}}}\\
    \intertext{as $\norm{\q_\bot}$ is upper bounded by the number of data points $n$, }\\
     \ogbobj(\q) & \leq \frac{\norm{\g_\bot}}{1+\cfrac{\epsilon}{n}}\\
     \ogbobj(\q) & \leq \norm{\g_\bot}.
\end{align*}
% Let $\epsilon'=\cfrac{\epsilon}{n}$,
% then 
% \begin{align*}
%     \ogbobj(\q)\leq\frac{\norm{\g}}{1+\epsilon'}
% \end{align*}
\end{itemize}
\end{proof}

% \subsection{Proof of Theorem 4.4}

% \input{SI_table.tex}
\section{Greedy approximation to bounding function}

% \begin{figure*}[t]
% \vskip 0.1in
% % \addtocounter{figure}{5}
% \begin{center}
% \centering
% \includegraphics[width=\columnwidth]{}
% \vskip -0.1in
% \caption{The number of instances of ratios between the best objective values obtained from the greedy search and the true optimal objective value. The upper figures are in linear scales and the lower figures are in $\log$ scales. The total variation distances for these three values of $\epsilon$ are 0.394, 0.377 and 0.227.}
% \label{fig:greedy_pn}
% \end{center}
% \vskip 0.1in
% \end{figure*}
\begin{figure*}[htb]
\vskip 0.1in
% \addtocounter{figure}{5}
\begin{center}
\centering
\includegraphics[width=0.8\columnwidth]{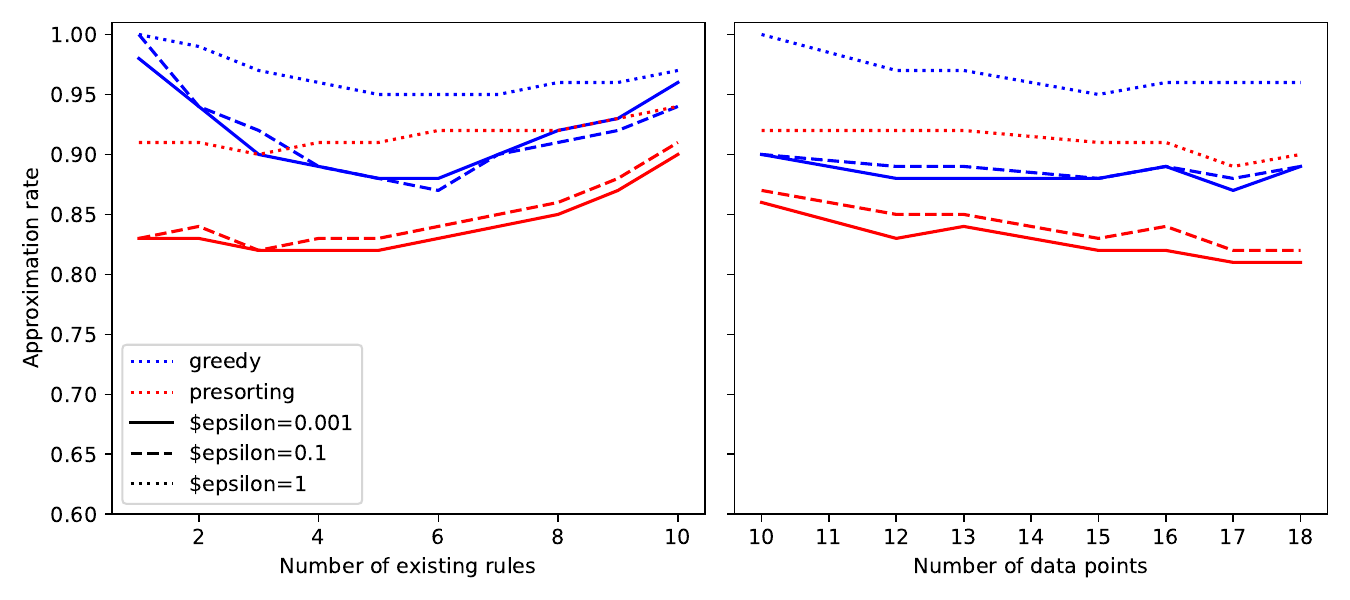}

\vskip -0.1in
\caption{The approximation rates for different number of existing rules (left) and data points (right) with $99\%$ success rate.}
\label{fig:greedy_approx}
\end{center}
\vskip 0.1in
\end{figure*}

The branch-and-bound search described in Section 3.3 requires an efficient way of calculating the value of $\text{bnd}(\q)=\max\{obj(\q'): I(\q')\subseteq I(\q), \q'\in\{0,1\}^n\}$, where $I(\q)=\{i: \q(x_i)=1,1\leq i\leq n\}$. 
It is too expensive to enumerate all possible $\q'$s as there are $2^n$ cases in the worst case. 
One solution to this problem is that we can relax the constraint $\q'\in\{0,1\}^n$ to $\q'\in[0,1]^n$ and it can be solved by quadratic programming. 
However, this would render the branch-and-bound search computationally very expensive. 
Instead, we investigate here using greedy approximations to identifying the optimal point sets. This does not yield an admissible bounding function, and thus does not guarantee to identify the optimal query, but can still lead to good approximation ratios in practice and is computationally inexpensive.

\begin{table}[t]
\begin{center}
\caption{The ratio of instances (15 data points and 5 existing rules) which reaches certain approximation rates.}
\begin{tabular}{rrrr}
\toprule
Approx. rate & $\epsilon$=0.001 & $\epsilon$=0.1 & $\epsilon$=1 \\
\midrule
75\% & 100.00\% & 100.00\% & 100.00\% \\
80\% & 99.66\% & 99.71\% & 100.00\% \\
85\% & 96.21\% & 98.05\% & 99.93\% \\
90\% & 88.11\% & 90.80\% & 99.34\% \\
95\% & 65.54\% & 70.15\% & 92.43\% \\
100\% & 33.20\% & 36.87\% & 63.28\%\\
\bottomrule
\end{tabular}
\end{center}
\end{table}

In particular, we are investigating two greedy variants: the fast prefix greedy approach described in the main text, and a slower full greedy approach that works as follows.
% A full greedy approach can be used to approximate the maximum objective value of the subset of data points selected by $\q$, which is the bounding value $\bnd(\q)$.  
Given a query $\q'^{(t-1)}\leq\q$, we need to find the data point selected by $\q$ which maximise the objective function, and use it with $\q'^{(t-1)}$ to form a $\q'{(t)}$.
\begin{equation*}
    i_*^{(t)}=\argmax_{i\in I(q)-I(q'^{(t-1)})}{\frac{\g^T\left(\q'^{(t-1)}+\e_i\right)}{\norm{\left(\q'^{(t-1)}+\e_i\right)_\bot}+\epsilon}}. 
\end{equation*}
where 
% $I(\q)=\{i: \q(x_i)=1,1\leq i\leq n\}$, 
$0\leq t\leq|I(\q)|$, $\q'^{(0)}=\0$ and $\q'^{(t)}=\q'^{(t-1)}+\e_{i_*^{(t)}}$. 
We use the maximum value of $\obj(\q'^{(t)})$ as the bounding value for query $\q$. 
The computation time complexity level of this approach is $O(n^2)$ for each query.
% , which is not as efficient as the presorting greedy approach described in Section 4.3.

%The presorting greedy approach of solving the prefix optimization problem leads to another approximation to the optimal objective function value for the queries which cover subsets of data points covered by $\q$. As proved in Theorem 4.4, this approach has a time complexity of $O(tn)$.

We now investigate the approximation ratios achieved by both approaches.
For that, we generate 2000 groups of initial queries and initial gradient vectors. 
Each of the groups contains 15 data points and 5 queries. In each query, each data point has a probability of 0.5 being 1 and 0. 
The initial gradient vector is originally generated by an $15$-dimensional standard normal distribution, and then it is projected onto the subspace orthogonal to the existing queries.
We test these 2000 instances to see the difference between the approximation of $\bnd(\q)$ obtained by the full greedy approach, the pre-sorting greedy approach, and the actual optimal objective values (obtained by a brute-force approach). We choose three different values of $\epsilon$: 0.001, 0.1 and 1.

% Figure \ref{fig:greedy_pn} compares the ratio between the approximations to $\bnd(\q)$ obtained by the two greedy approaches and the true optimal objective value. The Y axis of Figure \ref{fig:greedy_pn} represents the percentage of instances of different ratios. 
Figure \ref{fig:greedy_approx} shows the approximation rate of different number of existing rules and number of data points. 
For the presorting greedy approach, if there are more rules existing, the approximation is closer to the true value.
Although the approximation rate is decreasing slightly with more number of data points, there is still a trend that the decreasing is getting smaller when the size of dataset is increasing.
% According to the comparison, for $epsilon=0.001$, more than $99\%$ of instances 
The full greedy approach approximates the true bounding function better than the presorting greedy approach. 
However, since the fully greedy approach costs more time than the presorting greedy, it is still reasonable to use the presorting greedy approach to get higher efficiency.

% For smaller $\epsilon$ values (e.g. $\epsilon=0.001$), there are $90\%$ instances whose approximation values are more than $90\%$ of the true bounding function values, while $96\%$ of instances approximate more than $90\%$ of the value of $\bnd(\q)$ using the full greedy approach. For $\epsilon=0.1$, both algorithms have slight better (both $1\%$ promotion) approximation than $\epsilon=0.001$. 
% It can be observed that for $\epsilon=1$, both algorithms have more instances where the approximations are closed to the true bounding values.
% However, if the value of $\epsilon$ is too large, then the calculated objective values are not accurate according to Proposition \ref{thm:fast_prefix}. 
% Comparing the statistical distances of these two greedy approaches, it is reasonable to use the presorting greedy approach to approximate the bounding values. 

% To approximate the true bounding function more efficiently and more accurate, we adopt the presorting greedy approach in this research.
\begin{figure*}[t]
\vskip 0.1in
\centering
\begin{center}
\includegraphics[width=0.55\columnwidth]{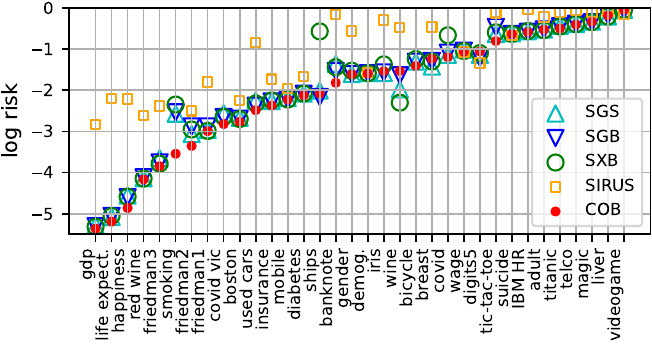}
\end{center}
\vskip -0.1in
\caption{Comparison of log training risks over different datasets. The datasets are ordered by the training risks of COB. 
% Right Inner: comparison of running time of COB with XGBoost. 
}
\label{fig:log_train_risk}
\end{figure*}

\begin{figure*}[t]
\vskip 0.1in
\centering
\begin{center}
\includegraphics[width=0.41\columnwidth]{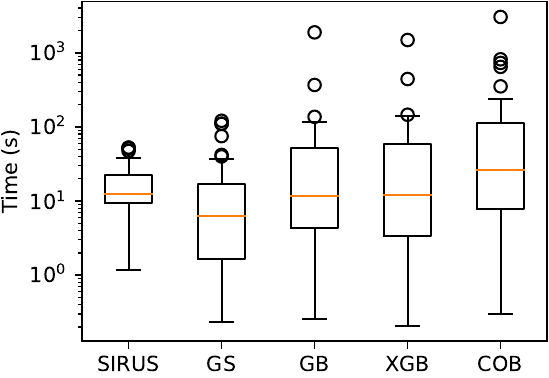}
\end{center}
\vskip -0.1in
\caption{The distribution of computation times across all test datasets to reach complexity level 50 for different algorithms. GS and GB are using greedy search, while XGB and COB are using branch-and-bound.
% Right Inner: comparison of running time of COB with XGBoost. 
}
\label{fig:time_box}
\end{figure*}

\begin{table*}[t]
\caption{Comparison of normalised risks and computation times for rule ensembles, averaged over cognitive complexities between 1 and 50, using SIRUS(SRS), Gradient Sum(SGS), Gradient boosting (SGB), XGBoost (SXB) and COB (using greedy search and branch-and-bound search), for benchmark datasets of classification (upper), regression (middle) and Poisson regression problems (lower).}
\label{tb:comparison}
\vskip 0.05in
% \hskip 0.05in
\begin{center}
\begin{scriptsize}
\begin{sc}
\begin{tabular}{l@{\hskip 0.03in}c@{\hskip 0.03in}c@{\hskip 0.03in}c@{\hskip 0.03in}c@{\hskip 0.03in}c@{\hskip 0.03in}c@{\hskip 0.03in}c@{\hskip 0.03in}c@{\hskip 0.03in}c@{\hskip 0.03in}c@{\hskip 0.03in}c@{\hskip 0.03in}c@{\hskip 0.03in}c@{\hskip 0.03in}c@{\hskip 0.03in}c@{\hskip 0.03in}c@{\hskip 0.03in}c@{\hskip 0.03in}c@{\hskip 0.03in}c}
\toprule
\multicolumn{1}{c}{\multirow{2}{*}{Dataset}} & \multicolumn{1}{c}{\multirow{2}{*}{$d$}} & \multicolumn{1}{c}{\multirow{2}{*}{$n$}} & \multicolumn{6}{c}{Train risks} & \multicolumn{6}{c}{Test risks} & \multicolumn{5}{c}{Computation times} \\
\multicolumn{1}{c}{} & \multicolumn{1}{c}{} & \multicolumn{1}{c}{} & SRS & SGS & SGB & SXB & COB$_{G}$ & COB$_{B}$ & SRS & SGS & SGB & SXB & COB$_{G}$ & COB$_{B}$ & SRS & SGS & SGB & SXB & COB$_{B}$ \\ \hline
titanic & 7 & 1043 & .895 & .653 & .656 & .646 & {\color[HTML]{FF0000} .639} & {\color[HTML]{FF0000} \textbf{.616}} & .894 & .695 & \textbf{.707} & .711 & .713 & .717 & 7.077 & 2.624 & 9.858 & 10.21 & 25.71 \\
tic-tac-toe & 27 & 958 & .892 & .555 & .641 & .577 & .650 & {\color[HTML]{FF0000} \textbf{.492}} & .885 & .596 & .682 & .629 & .721 & {\color[HTML]{FF0000} \textbf{.566}} & 12.59 & 3.971 & 10.34 & 6.09 & 13.99 \\
iris & 4 & 150 & .685 & .261 & .261 & .332 & {\color[HTML]{FF0000} \textbf{.192}} & {\color[HTML]{FF0000} .251} & .745 & .521 & .440 & .459 & .531 & {\color[HTML]{FF0000} \textbf{.424}} & 11.02 & 0.775 & 1.099 & 1.453 & 2.487 \\
breast & 30 & 569 & .569 & \textbf{.277} & .310 & .314 & .290 & .304 & .627 & \textbf{.269} & .362 & .338 & .383 & .349 & 11.48 & 6.744 & 74.43 & 74.83 & 239.2 \\
wine & 13 & 178 & .578 & .216 & .250 & .192 & {\color[HTML]{FF0000} .191} & {\color[HTML]{FF0000} \textbf{.183}} & .621 & .346 & .431 & .409 & .483 & {\color[HTML]{FF0000} \textbf{.265}} & 9.456 & 1.530 & 4.432 & 2.154 & 55.183 \\
ibm hr & 32 & 1470 & .980 & .567 & .560 & .571 & {\color[HTML]{FF0000} \textbf{.558}} & {\color[HTML]{FF0000} \textbf{.558}} & .974 & .640 & .645 & .636 & .652 & {\color[HTML]{FF0000} \textbf{.621}} & 11.15 & 17.24 & 10.99 & 12.92 & 12.03 \\
telco churn & 18 & 7043 & .944 & .679 & .682 & .678 & {\color[HTML]{FF0000} \textbf{.664}} & {\color[HTML]{FF0000} .668} & .945 & .663 & .677 & .665 & {\color[HTML]{FF0000} \textbf{.650}} & {\color[HTML]{FF0000} .660} & 50.83 & 40.01 & 1883 & 1485 & 3039 \\
gender & 20 & 3168 & .566 & .230 & .230 & .249 & {\color[HTML]{FF0000} .224} & {\color[HTML]{FF0000} \textbf{.224}} & .570 & \textbf{.243} & .247 & .263 & .246 & .246 & 22.42 & 22.73 & 25.49 & 24.27 & 32.95 \\
banknote & 4 & 1372 & .854 & .304 & .267 & .290 & {\color[HTML]{FF0000} .253} & {\color[HTML]{FF0000} \textbf{.228}} & .858 & .311 & .268 & .299 & {\color[HTML]{FF0000} .264} & {\color[HTML]{FF0000} \textbf{.229}} & 8.933 & 6.298 & 5.648 & 7.060 & 8.444 \\
liver & 6 & 345 & .908 & .815 & .834 & .814 & {\color[HTML]{FF0000} \textbf{.802}} & .834 & .917 & .879 & .927 & \textbf{.873} & .940 & .891 & 9.734 & 1.997 & 99.72 & 124.1 & 193.9 \\
magic & 10 & 19020 & .906 & .718 & .708 & .710 & {\color[HTML]{FF0000} .707} & {\color[HTML]{FF0000} .707} & .903 & .698 & .693 & .693 & {\color[HTML]{FF0000} .688} & {\color[HTML]{FF0000} \textbf{.688}} & 1.364 & 75.14 & 89.18 & 101.9 & 352.2 \\
adult & 11 & 30162 & .804 & .594 & .599 & .588 & {\color[HTML]{FF0000} \textbf{.575}} & {\color[HTML]{FF0000} .576} & .802 & .603 & .615 & .601 & {\color[HTML]{FF0000} \textbf{.589}} & {\color[HTML]{FF0000} .589} & 2.169 & 121.0 & 136.7 & 146.0 & 728.3 \\
digits5 & 64 & 3915 & \textbf{.248} & .332 & .312 & .344 & .329 & .315 & \textbf{.262} & .329 & .314 & .341 & .320 & .315 & 52.60 & 110.8 & 72.74 & 101.5 & 97.4 \\\hline
insurance & 6 & 1338 & .169 & .130 & .142 & .144 & {\color[HTML]{FF0000} \textbf{.120}} & {\color[HTML]{FF0000} .123} & .177 & .132 & .145 & .147 & {\color[HTML]{FF0000} \textbf{.127}} & {\color[HTML]{FF0000} .128} & 14.06 & 7.507 & 15.94 & 12.98 & 39.53 \\
friedman1 & 10 & 2000 & .180 & .089 & .074 & .068 & {\color[HTML]{FF0000} .067} & {\color[HTML]{FF0000} .068} & .165 & .091 & .077 & .075 & {\color[HTML]{FF0000} \textbf{.070}} & {\color[HTML]{FF0000} .072} & 16.79 & 2.514 & 4.302 & 3.171 & 6.915 \\
friedman2 & 4 & 10000 & .082 & .133 & .119 & .115 & {\color[HTML]{FF0000} .770} & {\color[HTML]{FF0000} \textbf{.075}} & .082 & .135 & .120 & .115 & {\color[HTML]{FF0000} \textbf{.077}} & {\color[HTML]{FF0000} \textbf{.077}} & 47.33 & 11.79 & 17.56 & 13.18 & 28.4 \\
friedman3 & 4 & 5000 & .093 & .045 & .042 & .042 & {\color[HTML]{FF0000} .041} & {\color[HTML]{FF0000} \textbf{.041}} & .092 & .048 & .047 & .046 & {\color[HTML]{FF0000} .045} & {\color[HTML]{FF0000} \textbf{.045}} & 29.86 & 6.243 & 10.61 & 8.559 & 17.65 \\
wage & 5 & 1379 & .427 & .370 & .362 & .359 & {\color[HTML]{FF0000} .352} & {\color[HTML]{FF0000} .354} & \textbf{.341} & .358 & .405 & .411 & .368 & .365 & 14.18 & 5.605 & 12.12 & 13.17 & 25.19 \\
demographics & 13 & 6876 & .219 & .214 & .214 & .214 & {\color[HTML]{FF0000} .212} & {\color[HTML]{FF0000} \textbf{.212}} & \textbf{.209} & .216 & .217 & .217 & .214 & .215 & 38.24 & 36.80 & 29.40 & 33.04 & 72.42 \\
gdp & 1 & 35 & .063 & .020 & .020 & .020 & .024 & {\color[HTML]{FF0000} \textbf{.020}} & .059 & .020 & .020 & .020 & .027 & {\color[HTML]{FF0000} \textbf{.020}} & 7.974 & .261 & .351 & .282 & .488 \\
used cars & 4 & 1770 & .373 & .139 & .123 & .132 & {\color[HTML]{FF0000} \textbf{.113}} & {\color[HTML]{FF0000} \textbf{.121}} & .427 & .171 & .131 & .141 & {\color[HTML]{FF0000} \textbf{.116}} & {\color[HTML]{FF0000} .130} & 15.00 & 8.371 & 12.10 & 9.484 & 20.27 \\
diabetes & 10 & 442 & .156 & .138 & .142 & .139 & {\color[HTML]{FF0000} .132} & {\color[HTML]{FF0000} .134} & .188 & .141 & .141 & .147 & {\color[HTML]{FF0000} \textbf{.136}} & .149 & 10.50 & 2.204 & 3.574 & 3.920 & 7.591 \\
boston & 13 & 506 & .101 & .086 & .087 & .086 & {\color[HTML]{FF0000} \textbf{.080}} & {\color[HTML]{FF0000} .082} & .105 & \textbf{.079} & .087 & .089 & .088 & .087 & 10.96 & 3.055 & 6.731 & 5.285 & 10.44 \\
happiness & 8 & 315 & .109 & .031 & .031 & .031 & {\color[HTML]{FF0000} .029} & {\color[HTML]{FF0000} \textbf{.029}} & .109 & .033 & .039 & .039 & .035 & {\color[HTML]{FF0000} \textbf{.033}} & 6.344 & 1.160 & 11.37 & 11.31 & 26.43 \\
life expect. & 21 & 1649 & .109 & .026 & .026 & .026 & {\color[HTML]{FF0000} .026} & {\color[HTML]{FF0000} \textbf{.026}} & .110 & .027 & .027 & .027 & {\color[HTML]{FF0000} .026} & {\color[HTML]{FF0000} \textbf{.026}} & 21.44 & 16.16 & 58.43 & 63.82 & 131.2 \\
mobile prices & 20 & 2000 & .148 & .131 & .137 & .137 & {\color[HTML]{FF0000} \textbf{.122}} & {\color[HTML]{FF0000} .126} & .140 & .134 & .143 & .143 & {\color[HTML]{FF0000} \textbf{.126}} & {\color[HTML]{FF0000} .132} & 33.81 & 15.03 & 367.7 & 442.5 & 815.4 \\
suicide rate & 5 & 27820 & .547 & .543 & .540 & .540 & {\color[HTML]{FF0000} .540} & {\color[HTML]{FF0000} .532} & .514 & .521 & .521 & .521 & {\color[HTML]{FF0000} .519} & {\color[HTML]{FF0000} .512} & 52.35 & 109.6 & 117.1 & 139.6 & 644.6 \\
videogame & 6 & 16327 & \textbf{.895} & .953 & .953 & .953 & .953 & .953 & .850 & \textbf{.720} & \textbf{.720} & \textbf{.720} & {\color[HTML]{FF0000} \textbf{.720}} & {\color[HTML]{FF0000} \textbf{.720}} & 1.171 & 41.91 & 34.38 & 45.90 & 119.1 \\
red wine & 11 & 1599 & .072 & .034 & .035 & .034 & {\color[HTML]{FF0000} .034} & {\color[HTML]{FF0000} .034} & .073 & .035 & .036 & .036 & {\color[HTML]{FF0000} .035} & {\color[HTML]{FF0000} \textbf{.035}} & 19.94 & 9.149 & 15.32 & 21.99 & 35.34 \\\hline
covid vic & 4 & 85 & NA & .153 & .121 & .132 & {\color[HTML]{FF0000} .105} & {\color[HTML]{FF0000} \textbf{.086}} & NA & .182 & \textbf{.100} & .133 & .104 & .086 & NA & .523 & .600 & .628 & .854 \\
covid & 2 & 225 & NA & .344 & .371 & .891 & {\color[HTML]{FF0000} .343} & {\color[HTML]{FF0000} \textbf{.321}} & NA & .459 & .411 & .741 & .417 & {\color[HTML]{FF0000} \textbf{.395}} & NA & .701 & .690 & .682 & 1.143 \\
bicycle & 4 & 122 & NA & .317 & .324 & .337 & {\color[HTML]{FF0000} .296} & {\color[HTML]{FF0000} \textbf{.275}} & NA & .366 & .478 & .457 & .529 & {\color[HTML]{FF0000} \textbf{.320}} & NA & .695 & 1.103 & 1.105 & 2.124 \\
ships & 4 & 34 & NA & .174 & .181 & .146 & {\color[HTML]{FF0000} \textbf{.125}} & .168 & NA & .197 & .203 & .199 & .464 & {\color[HTML]{FF0000} \textbf{.155}} & NA & .235 & .296 & .311 & .448 \\
smoking & 2 & 36 & NA & .127 & .128 & .163 & {\color[HTML]{FF0000} \textbf{.078}} & {\color[HTML]{FF0000} .072} & NA & .136 & .250 & .322 & {\color[HTML]{FF0000} .121} & {\color[HTML]{FF0000} \textbf{.084}} & NA & .266 & .256 & .208 & .301\\
\bottomrule
\end{tabular}
\end{sc}
\end{scriptsize}
\end{center}
\vskip -0.2in
\end{table*}

\section{Additional Details of Empirical Evaluation}
\label{sec:SI:eval}
All experiments in this paper are conducted on a computer with CPU ‘Intel(R) Core(TM) i5-10300H CPU @ 2.50GHz’ and memory of 24G.

To further show the difference between the proposed corrective orthogonal boosting and the other methods, we provide additional details of empirical evaluation.
Table \ref{tb:comparison} shows the normalised average training risk, test risk and computation time of the rule ensembles generated by SIRUS, SGS, SGB, SXB and COB using greedy search and  branch-and-bound search over complexity levels from 1 to 50 for the 34 datasets used in the experiments of this paper.
In Table \ref{tb:comparison}, we bold the lowest training and test risks for each dataset, and the texts with red colours indicate the COB approach using greedy search or branch-and-bound search have lower risks than all the other methods.
Figure \ref{fig:log_train_risk} compares the normalised average logged training risks over complexity levels from 1 to 50 for different datasets. 
According to Fig. \ref{fig:log_train_risk} and Table \ref{tb:comparison}, COB generates lower training risks than the other algorithms for 26 out of 34 datasets. 

For the COB with greedy search, there are 29 out of 34 datasets whose training risks are lower than the other methods. However, it has only 15 out of 34 datasets whose test risks are lower than other methods. Therefore, using branch-and-bound search generates better rule ensembles than greedy search. 
The One-sided T-test at significance level 0.05 with Bonferroni-correction for 8 hypotheses (4 for training and 4 for test) also shows the same results with the branch-and-bound search: the COB using greedy search generates rule ensembles with significantly less risk than the other methods by at least 0.001 of the normalised training and test risks.

Figure \ref{fig:time_box} shows the box plot of the running time for generating rule ensembles with complexity level 50 spent by different algorithms on the 34 datasets. Although the overall running time of COB is higher than the other methods, they are still at the same scale.

Furthermore, Figure \ref{fig:compare_all} shows more comparisons of the risk / complexity tradeoff for SIRUS, gradient sum, gradient boosting, XGBoost and Orthogonal Gradient Boosting for 6 datasets. 
We compare the rule ensembles generated by COB with complexity around 20 and the first method whose risk value is compatitive with the COB rule ensemble.

\begin{figure}[t!]
\vskip 0.1in
\begin{center}
\begin{subfigure}
\centering
\includegraphics[width=0.95\columnwidth]{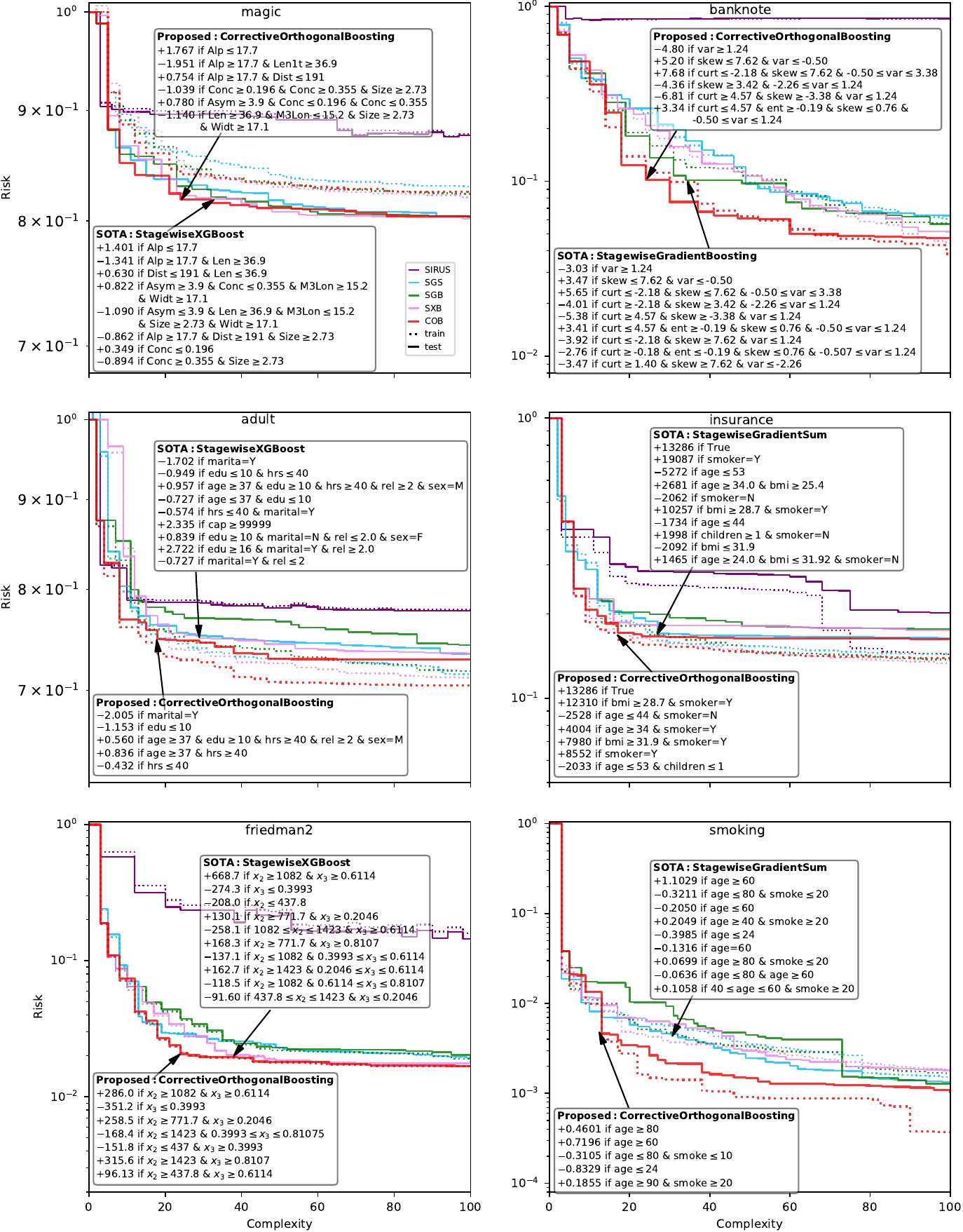}
% some issues, will modify later
% \includegraphics[width=0.45\columnwidth]{}
% \includegraphics[width=0.45\columnwidth]{}
% \includegraphics[width=0.45\columnwidth]{}
% \includegraphics[width=0.45\columnwidth]{}
% \includegraphics[width=0.45\columnwidth]{}
\vspace{-0.5cm}
\caption{Risk/complexity curves of proposed  approach (red) compared to alternatives for  \texttt{magic}, \texttt{banknotes}, \texttt{adult}, \texttt{insurance}, \texttt{friedman 2} and \texttt{smoking}. % and \texttt{banknote}. 
Annotated rule ensembles have  equivalent risk but substantially reduced complexity for the proposed method.
}
\label{fig:compare_all}
\end{subfigure}
\end{center}
% \vskip 0.3in
\end{figure}

\end{appendix}

\end{document}

% --- supplement: SI_main.tex ---

\onecolumn
% \maketitle
\aistatstitle{Orthogonal Gradient Boosting for Simpler Additive Rule Ensembles \\ Supplementary Information}
% \include{1_abstract_intro}
% \include{2_preliminaries}
% \include{3_main_contributions}
% \include{4_experiments}
% \include{5_conclusion}
% \bibliography{references}
%%
%% By default, the full list of authors will be used in the page
%% headers. Often, this list is too long, and will overlap
%% other information printed in the page headers. This command allows
%% the author to define a more concise list
%% of authors' names for this purpose.
% \renewcommand{\shortauthors}{Yang and Le Bodic, et al.}

\appendix
\setcounter{table}{1}
% \thispagestyle{plain}
% \twocolumn
\section{Full Proofs and Additional Formal Statements}
\mainprop*
%\begin{proposition}
%Let $\f^\mathrm{CGD}=\argmin\{R_\lambda(\f') \with \f' \in \range [\Q_{t-1}; \g]\}$ be the output vector of the ideal corrective gradient descent update in round $t$ and 
%\begin{equation*}
%    q = \argmax_{q \in \cQ} |\g_\perp^T \q|/\|\q_\perp\|
%\end{equation*}
%where for $\vecv \in \R^n$ we denote by $\vecv_\perp$ its projection onto the orthogonal complement of $\range \, \Q_{t-1}$.
%Then the output vector $\f_q=\argmin\{R_\lambda(\f') \with \f' \in \range[\Q_{t-1}; \q]\}$ dominates the optimal geometric approximation to $\f$ with respect to $\cQ$, i.e.,
%$R_\lambda(\f_q) \leq R_\lambda(\tilde{\f}^\mathrm{CGD}_\cQ)$.
%% \label{prop:main}
%\end{proposition}

\begin{proof}
    Let $\Q=\Q_{t-1}$ and $\f=[\Q; \g]\alphavec$ and $\tilde{\f}=[\Q; \q]\weights$ for some arbitrary coefficient vectors $\alphavec, \weights \in \R^t$. Denoting by $\vecv_\parallel$ the projection of  $\vecv \in \R^n$ onto the column space of $\Q$ and its orthogonal complement by $\vecv_\perp$, we can decompose the squared norm of the difference $\f-\tilde{\f}$ as
    \begin{align*}
        \|\f-\tilde{\f}\|^2 &= \|[\Q; \g]\alphavec - [\Q; \q]\weights\|^2\\
        &=\|[\Q; \g_\parallel + \g_\perp]\alphavec - [\Q; \q_\parallel + \q_\perp]\weights\|^2\\
        &=\|[\Q; \g_\parallel]\alphavec + \alpha_t\g_\perp - [\Q; \q_\parallel]\weights + \weight_t\q_\perp \|^2\\
        &=\|[\Q; \g_\parallel]\alphavec - [\Q; \q_\parallel]\weights\|^2 + \|\alpha_t \g_\perp - \beta_t \q_\perp\|^2
    \end{align*}
    where the last step follows from the Pythagorean theorem and the fact that $\alpha_t \g_\perp - \beta_t \q_\perp$ is an element from the orthogonal complement of $\range [\Q; \g_\parallel]=\range[\Q; \q_\parallel]=\range\, \Q$.
    The equality of these ranges also implies that $\beta_1, \dots, \beta_{t-1}$ can, for all choices of $\beta_t$, be chosen such that the left term of the error decomposition is $0$. Setting $\gamma=\beta_t/\alpha_t$, it follows for the squared projection error of $\f$ onto $\range [\Q, \q]$ that
    \begin{align*}
        \min_{\weights \in \R^t} \|\f-\tilde{\f}\|^2 &= \min_{\beta \in \R^t} \|\alpha_t \g_\perp - \beta_t \h_\perp\|^2\\
        &=\min_{\gamma \in \R^t} \alpha^2_t \|\g_\perp - \gamma \q_\perp\|^2\\
        &=\min_{\gamma \in \R^t} \alpha_t^2 (\|\g_\perp\|^2-2\gamma \inner{\q_\perp}{\g_\perp} + \gamma^2\|\q_\perp\|^2) \\
        \intertext{and plugging in the minimizing $\gamma=\inner{\q_\perp}{\g_\perp}/\|\q_\perp\|^2$ }
        &=\alpha^2(\|\g_\perp\| - (\inner{\g_\perp}{\q_\perp})^2/\|\q_\perp\|^2) \enspace ,
    \end{align*}
    from which, noting that $\inner{\g_\perp}{\q_\perp}=\inner{\g_\perp}{\q}$, it follows that a query that maximizes $|\inner{\g_\perp}{\q}|/\norm{\q_\perp}$ minimizes the projection error. Hence, by choosing $\f=\f^\mathrm{GCD}$ the ideal corrective gradient descent update, we have that $\tilde{\f}^\mathrm{GCD}_\cQ \in \range [\Q; \q]$ and by definition of $\f_q$ we have $R_\lambda(\f_q) \leq R_\lambda(\tilde{\f}^\mathrm{GCD}_\cQ)$ as required.
\end{proof}

% \section{Properties of the objective functions}
\begin{comment}
    \begin{theorem}
   There exists some data sets $\{X^{(m)}, Y^{(m)}\}$, where $X^{(m)}\in\mathbb{R}^{4\times 1}$ and $Y^{(m)}\in\mathbb{R}^4$, such that $\lim_{m\rightarrow\infty}{\text{R}_{\gb}}=\infty$ and $\lim_{m\rightarrow\infty}{\text{R}_{\ogb}}=0.$
\end{theorem}
\begin{proof}
Considering a 1-d dataset whose output is
$\alpha, -2\alpha, -2\alpha, \alpha+\epsilon$,
where $\alpha>0$, $\epsilon>0$ and $\alpha\gg\epsilon$, 
we solve the regression problem to fit this dataset. 

The first rule of both GB and OGB cover the two ``$-2\alpha$''s. 
However, for the 2nd rule, GB only covers $\alpha+\epsilon$, 
but OGB covers all data points. It makes the risk of GB $\alpha/4$, 
and the risk of OGB depends only on $\epsilon$ and can be as small as possible. 

Extremely, if $\alpha\rightarrow\infty$ and  $\epsilon\rightarrow0$, 
the risk of the two-rule-ensemble generated by OGB approaches 0, 
while GB approaches $\infty$. 
\end{proof}

\begin{theorem}
   There exists some sequence of data sets $\{X^{(m)}, Y^{(m)}\}$, where $X^{(m)}\in\mathbb{R}^{n\times k}$ and $Y^{(m)}\in\mathbb{R}^n$, such that $\lim_{m\rightarrow\infty}{\text{R}_{\text{GS}}}=\infty$ and $\lim_{m\rightarrow\infty}{\text{R}_{\ogb}}=0.$
\end{theorem}
\begin{proof}
    Considering a 1-d dataset whose output is
$-ab, -a, a, ab,$
where $a>0$ and $b>1,$
we solve the regression problem to fit this dataset.

The first rule of GS chooses a query covering ``$-ab$'' and ``$-a$'' since the sum of gradient of the covered points is largest. 
The second rule generated by GS choose a query covering the last 3 data points.
The weights of the 2 rules generated by GS are $-a(b+1)/2$ and $(3b+1)a/6$, and the risk of these 2 rules is $R_\text{GS}=a^2(9b^2-18b+17)/48.$

However, if $b$ is large enough, the first rule of OGB chooses a query which only covers one data point ``$-ab$'', and the second rule covers ``$ab$''.
The weights of the rule generated by OGB is $-ab$ and $ab$, and the risk is $R_\ogb=a^2/2.$

The ratio between $R_\text{GS}$ and $R_\ogb$ is 
$$
\cfrac{R_\text{GS}}{R_\ogb}=\cfrac{9b^2-18b+17}{24}.
$$
If $b\rightarrow\infty$, then $\cfrac{R_\text{GS}}{R_\ogb}\rightarrow\infty.$
\end{proof}
\end{comment}
\begin{table}[t]
\caption{Calculation of the objective functions for the first and the second query in proof of Proposition \ref{prop:advantage}}
\label{tb:prop2proof1}
\vskip 0.05in
% \hskip 0.05in
\begin{center}
\begin{footnotesize}
\begin{sc}
\begin{tabular}{c@{\hskip 0.03in}c|@{\hskip 0.03in}c@{\hskip 0.05in}c@{\hskip 0.03in}|c@{\hskip 0.03in}c@{\hskip 0.03in}c@{\hskip 0.03in}c@{\hskip 0.03in}c@{\hskip 0.03in}c@{\hskip 0.03in}c}
\hline
 &  & \multicolumn{2}{c|}{1st query} & \multicolumn{7}{c}{2nd   query} \\
\multirow{-2}{*}{$\q$} & \multirow{-2}{*}{$\|\q\|$} & \begin{tabular}[c]{@{}c@{}}$\gsobj(\q)$\\      $\left|\q^T\g^{(0)}\right|$\end{tabular} & \begin{tabular}[c]{@{}c@{}}$\gbobj(\q)$\\      $\ogbobj(\q)$\end{tabular} & $\q_\bot$ & $\|\q_\bot\|$ & \begin{tabular}[c]{@{}c@{}}$\left|\q^T\g^{(1)}_\gb\right|$\\      $\left|\q^T_\bot\g^{(1)}_{\ogb\bot}\right|$\end{tabular} & \begin{tabular}[c]{@{}c@{}}$\gsobj(\q)$\\      1st case\end{tabular} & \begin{tabular}[c]{@{}c@{}}$\gsobj(\q)$\\      2nd case\end{tabular} & $\gbobj(\q)$ & $\ogbobj(\q)$ \\
\hline
$(1, 0, 0, 0, 0)$ & $1$ & $\alpha_m+\epsilon_m$ & $\alpha_m+\epsilon_m$ & $(1, 0, 0, 0, 0)$ & $1$ & $\alpha_m+\epsilon_m$ & $\cfrac{\epsilon_m}{3}$ & $\alpha_m+\epsilon_m$ & $\alpha_m+\epsilon_m$ & $\alpha_m+\epsilon_m$ \\
$(0, 1, 0, 0, 0)$ & $1$ & $\alpha_m$ & $\alpha_m$ & $(0, 1, 0, 0, 0)$ & $1$ & $\alpha_m$ & $2\alpha_m+\cfrac{2\epsilon_m}{3}$ & $\alpha_m$ & $\alpha_m$ & $\alpha_m$ \\
$(0, 0, 1, 0, 0)$ & $1$ & $3\alpha_m+\epsilon_m$ & {\color[HTML]{FE0000} \textbf{$3\alpha_m+\epsilon_m$}} & $(0, 0, 0, 0, 0)$ & $0$ & $0$ & $2\alpha_m+\cfrac{\epsilon_m}{3}$ & $3\alpha_m+\epsilon_m$ & $0$ & $0$ \\
$(0, 0, 0, 1, 0)$ & $1$ & $\alpha_m+\epsilon_m$ & $\alpha_m+\epsilon_m$ & $(0, 0, 0, 1, 0)$ & $1$ & $\alpha_m+\epsilon_m$ & $\alpha_m+\epsilon_m$ & $\cfrac{\alpha_m}{2}$ & $\alpha_m+\epsilon_m$ & $\alpha_m+\epsilon_m$ \\
$(0, 0, 0, 0, 1)$ & $1$ & $2\alpha_m+\epsilon_m$ & $2\alpha_m+\epsilon_m$ & $(0, 0, 0, 0, 1)$ & $1$ & $2\alpha_m+\epsilon_m$ & $2\alpha_m+\epsilon_m$ & $\cfrac{\alpha_m}{2}$ & $2\alpha_m+\epsilon_m$ & $2\alpha_m+\epsilon_m$ \\
$(1, 1, 0, 0, 0)$ & $\sqrt{2}$ & $\epsilon_m$ & $\cfrac{\epsilon_m}{\sqrt{2}}$ & $(1, 1, 0, 0, 0)$ & $\sqrt{2}$ & $\epsilon_m$ & $2\alpha_m+\cfrac{\epsilon_m}{3}$ & $\epsilon_m$ & $\cfrac{\epsilon_m}{\sqrt{2}}$ & $\cfrac{\epsilon_m}{\sqrt{2}}$ \\
$(0, 1, 1, 0, 0)$ & $\sqrt{2}$ & $2\alpha_m+\epsilon_m$ & $\cfrac{2\alpha_m+\epsilon_m}{\sqrt{2}}$ & $(0, 1, 0, 0, 0)$ & $1$ & $\alpha_m$ & $\cfrac{\epsilon_m}{3}$ & $2\alpha_m+\epsilon_m$ & $\cfrac{\alpha_m}{\sqrt{2}}$ & $\alpha_m$ \\
$(0, 0, 1, 1, 0)$ & $\sqrt{2}$ & $2\alpha_m$ & $\sqrt{2}\alpha_m$ & $(0, 0, 0, 1, 0)$ & $1$ & $\alpha_m+\epsilon_m$ & $\alpha_m-\cfrac{2\epsilon_m}{3}$ & $\cfrac{7\alpha_m}{2}+\epsilon_m$ & $\cfrac{\alpha_m+\epsilon_m}{\sqrt{2}}$ & $\alpha_m+\epsilon_m$ \\
$(0, 0, 0, 1, 1)$ & $\sqrt{2}$ & {\color[HTML]{FE0000} \textbf{$3\alpha_m+2\epsilon_m$}} & $\cfrac{3\alpha_m+2\epsilon_m}{\sqrt{2}}$ & $(0, 0, 0, 1, 1)$ & $\sqrt{2}$ & $3\alpha_m+2\epsilon_m$ & $3\alpha_m+2\epsilon_m$ & $0$ & {\color[HTML]{FE0000} \textbf{$\cfrac{3\alpha_m+2\epsilon_m}{\sqrt{2}}$}} & $\cfrac{3\alpha_m+2\epsilon_m}{\sqrt{2}}$ \\
$(1, 1, 1, 0, 0)$ & $\sqrt{3}$ & {\color[HTML]{FE0000} \textbf{$3\alpha_m+2\epsilon_m$}} & $\cfrac{3\alpha_m+2\epsilon_m}{\sqrt{3}}$ & $(1, 1, 0, 0, 0)$ & $\sqrt{2}$ & $\epsilon_m$ & $0$ & $3\alpha_m+2\epsilon_m$ & $\cfrac{\epsilon_m}{\sqrt{3}}$ & $\cfrac{\epsilon_m}{\sqrt{2}}$ \\
$(0, 1, 1, 1, 0)$ & $\sqrt{3}$ & $\alpha_m$ & $\cfrac{\alpha_m}{\sqrt{3}}$ & $(0, 1, 0, 1, 0)$ & $\sqrt{2}$ & $2\alpha_m+\epsilon_m$ & $\alpha_m+\cfrac{4\epsilon_m}{3}$ & $\cfrac{5\alpha_m}{2}+\epsilon_m$ & $\cfrac{2\alpha_m+\epsilon_m}{\sqrt{3}}$ & $\cfrac{2\alpha_m+\epsilon_m}{\sqrt{2}}$ \\
$(0, 0, 1, 1, 1)$ & $\sqrt{3}$ & $\epsilon_m$ & $\cfrac{\epsilon_m}{\sqrt{3}}$ & $(0, 0, 0, 1, 1)$ & $\sqrt{2}$ & $3\alpha_m+2\epsilon_m$ & $\alpha_m+\cfrac{5\epsilon_m}{3}$ & $3\alpha_m+\epsilon_m$ & $\cfrac{3\alpha_m+2\epsilon_m}{\sqrt{3}}$ & $\cfrac{3\alpha_m+2\epsilon_m}{\sqrt{2}}$ \\
$(1, 1, 1, 1, 0)$ & $2$ & $2\alpha_m+\epsilon_m$ & $\alpha_m+\cfrac{\epsilon_m}{2}$ & $(1, 1, 0, 1, 0)$ & $\sqrt{3}$ & $\alpha_m$ & $\alpha_m+\epsilon_m$ & {\color[HTML]{FE0000} \textbf{$\cfrac{7\alpha_m}{2}+2\epsilon_m$}} & $\cfrac{\alpha_m}{2}$ & $\cfrac{\alpha_m}{\sqrt{3}}$ \\
$(0, 1, 1, 1, 1)$ & $2$ & $\alpha_m+\epsilon_m$ & $\cfrac{\alpha_m+\epsilon_m}{2}$ & $(0, 1, 0, 1, 1)$ & $\sqrt{3}$ & $4\alpha_m+2\epsilon_m$ & {\color[HTML]{FE0000} \textbf{$3\alpha_m+\cfrac{7\epsilon_m}{3}$}} & $2\alpha_m+\epsilon_m$ & $2\alpha_m+\epsilon_m$ & {\color[HTML]{FE0000} \textbf{$\cfrac{4\alpha_m+\epsilon_m}{\sqrt{3}}$}} \\
$(1, 1, 1, 1, 1)$ & $\sqrt{5}$ & $0$ & $0$ & $(1, 1, 0, 1, 1)$ & $2$ & $3\alpha_m+\epsilon_m$ & $3\alpha_m+2\epsilon_m$ & $3\alpha_m+2\epsilon_m$ & $\cfrac{3\alpha_m+\epsilon_m}{\sqrt{5}}$ & $\cfrac{3\alpha_m+\epsilon_m}{2}$
\\
\hline
\end{tabular}
\end{sc}
\end{footnotesize}
\end{center}
\vskip -0.1in
\end{table}

\begin{table}[t]
\caption{Calculation of the objective functions for the third query in proof of Proposition \ref{prop:advantage}}
\label{tb:prop2proof2}
\vskip 0.05in
% \hskip 0.05in
\begin{center}
\begin{footnotesize}
\begin{sc}
\begin{tabular}{c@{\hskip 0.05in}c@{\hskip 0.05in}c@{\hskip 0.05in}c@{\hskip 0.05in}c@{\hskip 0.05in}c@{\hskip 0.05in}c@{\hskip 0.05in}c@{\hskip 0.05in}c@{\hskip 0.05in}c}
\hline
$\q$ & $\|\q\|$ & $\q_\bot$ & $\|\q_\bot\|_2$ & $\left|\q^T_\bot\g^{(1)}_{\ogb\bot}\right|$ & $\left|\q^T\g^{(2)}_\gb\right|$ & \begin{tabular}[c]{@{}c@{}}$\gsobj(\q)$\\      1st case\end{tabular} & \begin{tabular}[c]{@{}c@{}}$\gsobj(\q)$\\      2nd case\end{tabular} & $\gbobj(\q)$ & $\ogbobj(\q)$ \\
\hline
$(1, 0, 0, 0, 0)$ & $1$ & $(1, 0, 0, 0, 0)$ & $1$ & $\alpha_m+\epsilon_m$ & $\alpha_m+\epsilon_m$ & $\cfrac{3\alpha_m+\epsilon_m}{4}$ & $\cfrac{3\epsilon_m}{7}$ & {\color[HTML]{FE0000} \textbf{$\alpha_m+\epsilon_m$}} & $\alpha_m+\epsilon_m$ \\
$(0, 1, 0, 0, 0)$ & $1$ & $\left(0, \cfrac{2}{3}, 0, -\cfrac{1}{3},   -\cfrac{1}{3}\right)$ & $\sqrt{\cfrac{2}{3}}$ & $\cfrac{\alpha_m+2\epsilon_m}{3}$ & $\alpha_m$ & $\cfrac{13\alpha_m+3\epsilon_m}{8}$ & {\color[HTML]{FE0000} \textbf{$2\alpha_m+\cfrac{4\epsilon_m}{7}$}} & $\alpha_m$ & $\cfrac{\alpha_m+2\epsilon_m}{\sqrt{6}}$ \\
$(0, 0, 1, 0, 0)$ & $1$ & $(0, 0, 0, 0, 0)$ & $0$ & $0$ & $0$ & $\cfrac{19\alpha_m+5\epsilon_m}{8}$ & $2\alpha_m+\cfrac{3\epsilon_m}{7}$ & $0$ & $0$ \\
$(0, 0, 0, 1, 0)$ & $1$ & $\left(0, -\cfrac{1}{3}, 0, \cfrac{2}{3},   -\cfrac{1}{3}\right)$ & $\sqrt{\cfrac{2}{3}}$ & $\cfrac{\alpha_m-\epsilon_m}{3}$ & $\cfrac{\alpha_m}{2}$ & $\cfrac{\alpha_m-\epsilon_m}{8}$ & $\cfrac{2\epsilon_m}{7}$ & $\cfrac{\alpha_m}{2}$ & $\cfrac{\alpha_m-\epsilon_m}{\sqrt{6}}$ \\
$(0, 0, 0, 0, 1)$ & $1$ & $\left(0, -\cfrac{1}{3}, 0, -\cfrac{1}{3},   \cfrac{2}{3}\right)$ & $\sqrt{\cfrac{2}{3}}$ & $\cfrac{2\alpha_m+\epsilon_m}{3}$ & $\cfrac{\alpha_m}{2}$ & $\cfrac{7\alpha_m+\epsilon_m}{8}$ & $\cfrac{2\epsilon_m}{7}$ & $\cfrac{\alpha_m}{2}$ & $\cfrac{2\alpha_m+\epsilon_m}{\sqrt{6}}$ \\
$(1, 1, 0, 0, 0)$ & $\sqrt{2}$ & $\left(1, \cfrac{2}{3}, 0, -\cfrac{1}{3},   -\cfrac{1}{3}\right)$ & $\sqrt{\cfrac{5}{3}}$ & $\cfrac{4\alpha_m+5\epsilon_m}{3}$ & $\epsilon_m$ & $\cfrac{19\alpha_m+5\epsilon_m}{8}$ & $2\alpha_m+\cfrac{\epsilon_m}{7}$ & $\cfrac{\epsilon_m}{\sqrt{2}}$ & $\cfrac{4\alpha_m+5\epsilon_m}{\sqrt{15}}$ \\
$(0, 1, 1, 0, 0)$ & $\sqrt{2}$ & $\left(0, \cfrac{2}{3}, 0, -\cfrac{1}{3},   -\cfrac{1}{3}\right)$ & $\sqrt{\cfrac{2}{3}}$ & $\cfrac{\alpha_m+2\epsilon_m}{3}$ & $\alpha_m$ & $\cfrac{3\alpha_m+\epsilon_m}{4}$ & $\cfrac{\epsilon_m}{7}$ & $\cfrac{\alpha_m}{\sqrt{2}}$ & $\cfrac{\alpha_m+2\epsilon_m}{\sqrt{6}}$ \\
$(0, 0, 1, 1, 0)$ & $\sqrt{2}$ & $\left(0, -\cfrac{1}{3}, 0, \cfrac{2}{3},   -\cfrac{1}{3}\right)$ & $\sqrt{\cfrac{2}{3}}$ & $\cfrac{\alpha_m-\epsilon_m}{3}$ & $\cfrac{\alpha_m}{2}$ & {\color[HTML]{FE0000} \textbf{$\cfrac{5\alpha_m+\epsilon_m}{2}$}} & $2\alpha_m+\cfrac{\epsilon_m}{7}$ & $\cfrac{\alpha_m}{2\sqrt{2}}$ & $\cfrac{\alpha_m-\epsilon_m}{\sqrt{6}}$ \\
$(0, 0, 0, 1, 1)$ & $\sqrt{2}$ & $\left(0, -\cfrac{2}{3}, 0, \cfrac{1}{3}, \cfrac{1}{3}\right)$ & $\sqrt{\cfrac{2}{3}}$ & $\cfrac{\alpha_m+2\epsilon_m}{3}$ & $0$ & $\cfrac{3\alpha_m+\epsilon_m}{4}$ & $0$ & $0$ & $\cfrac{\alpha_m+2\epsilon_m}{\sqrt{6}}$ \\
$(1, 1, 1, 0, 0)$ & $\sqrt{3}$ & $\left(1, \cfrac{2}{3}, 0, -\cfrac{1}{3},   -\cfrac{1}{3}\right)$ & $\sqrt{\cfrac{5}{3}}$ & $\cfrac{4\alpha_m+5\epsilon_m}{3}$ & $\epsilon_m$ & $0$ & $\cfrac{2\epsilon_m}{7}$ & $\cfrac{\epsilon_m}{sqrt{3}}$ & $\cfrac{4\alpha_m+5\epsilon_m}{\sqrt{15}}$ \\
$(0, 1, 1, 1, 0)$ & $\sqrt{3}$ & $\left(0, \cfrac{1}{3}, 0, \cfrac{1}{3}, -\cfrac{2}{3}\right)$ & $\sqrt{\cfrac{2}{3}}$ & $\cfrac{2\alpha_m+\epsilon_m}{3}$ & $\cfrac{\alpha_m}{2}$ & $\cfrac{7\alpha_m+\epsilon_m}{8}$ & $\cfrac{3\epsilon_m}{7}$ & $\cfrac{\alpha_m}{2\sqrt{3}}$ & $\cfrac{2\alpha_m+\epsilon_m}{\sqrt{6}}$ \\
$(0, 0, 1, 1, 1)$ & $\sqrt{3}$ & $\left(0, -\cfrac{2}{3}, 0, \cfrac{1}{3}, \cfrac{1}{3}\right)$ & $\sqrt{\cfrac{2}{3}}$ & $\cfrac{\alpha_m+2\epsilon_m}{3}$ & $0$ & $\cfrac{13\alpha_m+3\epsilon_m}{8}$ & $2\alpha_m+\cfrac{3\epsilon_m}{7}$ & $0$ & $\cfrac{\alpha_m+2\epsilon_m}{\sqrt{6}}$ \\
$(1, 1, 1, 1, 0)$ & $2$ & $\left(1, \cfrac{1}{3}, 0, \cfrac{1}{3}, -\cfrac{2}{3}\right)$ & $\sqrt{\cfrac{5}{3}}$ & $\cfrac{5\alpha_m+4\epsilon_m}{3}$ & $\cfrac{\alpha_m}{2}+\epsilon_m$ & $\cfrac{\alpha_m-\epsilon_m}{8}$ & $0$ & $\cfrac{\alpha_m}{4}+\cfrac{\epsilon_m}{2}$ & {\color[HTML]{FE0000} \textbf{$\cfrac{5\alpha_m+4\epsilon_m}{\sqrt{15}}$}} \\
$(0, 1, 1, 1, 1)$ & $2$ & $(0, 0, 0, 0, 0)$ & $0$ & $0$ & $\alpha_m$ & $0$ & $\cfrac{\epsilon}{7}$ & $\cfrac{\alpha_m}{2}$ & $0$ \\
$(1, 1, 1, 1, 1)$ & $\sqrt{5}$ & $(1, 0, 0, 0, 0)$ & $1$ & $\alpha_m+\epsilon_m$ & $\epsilon_m$ & $\cfrac{3\alpha_m+\epsilon_m}{4}$ & $\cfrac{2\epsilon}{7}$ & $\cfrac{\epsilon_m}{\sqrt{5}}$ & $\alpha_m+\epsilon_m$
\\
\hline
\end{tabular}
\end{sc}
\end{footnotesize}
\end{center}
\vskip -0.1in
\end{table}
%In the proof of Proposition 2, we assume that there is a default background rule which covers all data points whose weight is the mean of the outputs of all data points, and the weight of the background rule is not recalculated.

% \begin{proposition}
%     There is one-dimensional input data $\X\in \R^{5\times1}$ and a sequence of output data, $\y^{(m)}\in \R^5$, such that for the empirical risk of three-element rule ensembles with corrective weight update we have $
%     \lim_{m\rightarrow\infty}{R\left(f_\ogb\left(\X, \y^{(m)}\right)\right)}=0,
%     $ but
%     $$\lim_{m\rightarrow\infty}{R\left(f_\gb\left(\X, \y^{(m)}\right)\right)}=
%     \lim_{m\rightarrow\infty}{R\left(f_\gs\left(\X, \y^{(m)}\right)\right)}=\infty,$$
%     where $f_{*}(\X, \y^{(m)})$ denotes the three-element rule ensemble trained by $\obj_{*}$.
% \end{proposition}
\advantageprop*
\begin{proof}
We define $\X$ as $(1,2,3,4,5)$
and $\y^{(m)}=(-\alpha_m-\epsilon_m, \alpha_m, -3\alpha_m-\epsilon_m, \alpha_m+\epsilon_m, 2\alpha_m+\epsilon_m)$,
where $\alpha_m, \epsilon_m \in \R$ are two arbitrary sequences with $\alpha_m\rightarrow\infty$ and $\epsilon_m\rightarrow0$.
We calculate the values of $\gsobj$,$\gbobj$ and $\ogbobj$ for all possible queries to select the first, second and the third queries, as shown in Table \ref{tb:prop2proof1} and Table \ref{tb:prop2proof2}. 
% As a tiebreaker, if there are two queries with the same maximum objective values, we choose the one covering more data points.
We use the query vectors to represent the queries in this proof.

For the gradient sum objective, according to Table \ref{tb:prop2proof1}, the first query identified is either the one with outputs $\q_\gs^{(1)}=(1,1,1,0,0)$ or the one with output $\q_\gs^{(1)'}=(0,0,0,1,1)$ for the five data points. 
In the first case,
the weight of the query is $\weight_\gs^{(1)}=(-\alpha_m-2\epsilon_m/3)$.
The gradient vector after adding this rule is $\g_{\gs}^{(1)}=(-\epsilon_m/3,2\alpha_m+2\epsilon_m/3,-2\alpha_m-\epsilon_m/3, \alpha_m+\epsilon_m, 2\alpha_m+\epsilon_m)$.
The second query selected is $\q_\gs^{(2)}=(0,1,1,1,1)$ according to the objective values calculated in Table \ref{tb:prop2proof1}.
After adding this query, the corrected weight vector is 
$\weight_\gs^{(2)}=(-7\alpha_m/4-5\epsilon_m/4, 9\alpha_m/8+7\epsilon_m/8)$,
and the gradient vector is $\g_{\gs}^{(2)}=((3\alpha_m+\epsilon_m)/4, (13\alpha_m+3\epsilon_m)/8, -(19\alpha_m+5\epsilon_m)/8, -(\alpha_m-\epsilon_m)/8, (7\alpha_m+\epsilon_m)/8)$.
Then, we calculate the values of $\gsobj(\q)$ in Table \ref{tb:prop2proof2}, and the third query selected is $\q_\gs^{(3)}=(0,0,1,1,0)$.
The corrected weight vector is $\weight_\gs^{(3)}=(-(7\alpha_m+5\epsilon_m)/4, (19\alpha_m+9\epsilon_m)/8, -(5\alpha_m+\epsilon_m)/2)$.
The output vector of the rule ensemble is
$$-\frac{7\alpha_m+5\epsilon_m}{4}\begin{pmatrix}
1\\1\\1\\0\\0
\end{pmatrix}+\frac{19\alpha_m+9\epsilon_m}{8}\begin{pmatrix}
0\\1\\1\\1\\1
\end{pmatrix}-\frac{5\alpha_m+\epsilon_m}{2}\begin{pmatrix}
0\\0\\1\\1\\0
\end{pmatrix}=\begin{pmatrix}
-(7\alpha_m+5\epsilon_m)/4\\(5\alpha_m-\epsilon_m)/8\\-5(3\alpha_m+\epsilon_m)/8\\-(\alpha_m-5\epsilon_m)/8\\(19\alpha_m+9\epsilon)/8
\end{pmatrix}.$$
The gradient vector is $\g_{\gs}^{(3)}=((3\alpha_m+\epsilon_m)/4, (3\alpha_m+\epsilon_m)/8, -(9\alpha_m+3\epsilon_m)/8, (9\alpha_m+3\epsilon_m)/8, -(3\alpha_m+\epsilon_m)/8)$.
The empirical risk after adding three rules into the rule ensemble is 
$$R\left(f_\gs\left(\X, \y^{(m)}\right)\right)=\frac{3}{8}(3\alpha_m+\epsilon_m)^2.$$
In the second case ( $\q_\gs^{(1)'}=(0,0,0,1,1)$), the weight of the first query is $3\alpha_m/2+\epsilon_m$, and the gradient vector is $\g_{\gs}^{(1)'}=(-\alpha_m-\epsilon_m, \alpha_m, -3\alpha_m-\epsilon_m, -\alpha_m/2, \alpha_m/2)$.
The second query selected is $\q_\gs^{(2)'}=(1,1,1,1,0)$.
The corrected weight vector is $(2\alpha_m+9\epsilon_m,-\alpha_m-4\epsilon_m/7)$.
The gradient vector after adding two rules is $\g_{\gs}^{(2)'}=(-3\epsilon_m/7, 2\alpha_m+4\epsilon_m/7, -2\alpha_m-3\epsilon/7, 2\epsilon/7, -2\epsilon/7)$.
The third query selected is $\q_\gs^{(3)'}=(0,1,0,0,0)$.
The corrected weight vector is $((12\alpha_m+7\epsilon_m)/5, -(9\alpha_m+4\epsilon_m)/5, 2(7\alpha_m+2\epsilon_m)/5)$.
The output vector of the rule ensemble is
$$\frac{12\alpha_m+7\epsilon_m}{5}\begin{pmatrix}
0\\0\\0\\1\\1
\end{pmatrix}-\frac{9\alpha_m+4\epsilon_m}{5}\begin{pmatrix}
1\\1\\1\\1\\0
\end{pmatrix}+\frac{2}{5}(7\alpha_m+2\epsilon_m)\begin{pmatrix}
0\\1\\0\\0\\0
\end{pmatrix}=\begin{pmatrix}
-(9\alpha_m+4\epsilon_m)/5\\
\alpha_m\\
-(9\alpha_m+4\epsilon_m)/5\\
3(\alpha_m+\epsilon_m)/5\\
(12\alpha_m+7\epsilon_m)/5
\end{pmatrix}.$$
The empirical risk after adding three rules is 
$$R\left(f_\gs\left(\X, \y^{(m)}\right)\right)=\frac{2}{5}\left(6\alpha_m^2+2\alpha_m\epsilon_m+\epsilon_m^2\right).$$

\begin{figure*}[t]
\vskip 0.1in
% \addtocounter{figure}{5}
\begin{center}
\centering
% \includegraphics[width=0.48\columnwidth]{}
\includegraphics[width=0.95\columnwidth]{Images/visual_proof.pdf}

\vskip -0.1in
\caption{Visualisation of the generating of rules by different objective functions (in column) in each iteration (row) for the dataset used in proof of Proposition \ref{prop:advantage}. The solid lines show the output of rule ensembles in each iteration. The dashed lines show the output of the rule generated in each iteration.}
\label{fig:vis_proof}
\end{center}
\vskip 0.1in
\end{figure*}

For gradient boosting objective, the first query selected is $\q_\gb^{(1)}=(0,0,1,0,0)$ according to Table \ref{tb:prop2proof1}. 
Its weight is $-3\alpha_m-\epsilon_m.$ 
The gradient after adding this rule is $\g_{\gb}^{(1)}=(-\alpha_m-\epsilon_m, \alpha_m, 0, \alpha_m+\epsilon_m, 2\alpha_m+\epsilon_m).$
The second query selected by the gradient boosting objective is $\q_\gb^{(2)}=(0,0,0,1,1).$ 
The weight vector after correction is $(-3\alpha_m-\epsilon_m, 3\alpha_m/2+\epsilon_m).$ 
The gradient becomes $\g_{\gb}^{(2)}=(-\alpha_m-\epsilon_m, \alpha_m, 0, -\alpha_m/2, \alpha_m/2)$ after adding the second rule.
Then, according to Table \ref{tb:prop2proof2}, the third query selected by gradient boosting objective is $\q_\gb^{(3)}=(1,0,0,0,0).$ 
The corrected weight vector is $(-3\alpha_m-\epsilon_m, 3\alpha_m/2+\epsilon_m, -\alpha_m-\epsilon_m).$
The output vector of the rule ensemble is
$$-(3\alpha_m+\epsilon_m)\begin{pmatrix}
0\\0\\1\\0\\0
\end{pmatrix}+\left(\frac{3\alpha_m}{2}+\epsilon_m\right)\begin{pmatrix}
0\\0\\0\\1\\1
\end{pmatrix}-(\alpha_m+\epsilon_m)\begin{pmatrix}
1\\0\\0\\0\\0
\end{pmatrix}=\begin{pmatrix}
-\alpha_m-\epsilon_m\\0\\-3\alpha_m-\epsilon_m\\3\alpha_m/2+\epsilon_m\\3\alpha_m/2+\epsilon_m
\end{pmatrix}.$$
The gradient vector is $\g_{\gb}^{(3)}=(0, \alpha_m, 0, -\alpha_m/2, \alpha_m/2).$
The empirical risk after adding three rules into the ensemble is 
$$
R\left(f_\gb\left(\X, \y^{(m)}\right)\right)=3\alpha_m^2/2.
$$

In the first iteration, the orthogonal boosting objective selects the same query as the gradient boosting, since their objective values are the same, so their weights, outputs, gradients are also the same.
We calculate the orthogonal projections $\q_\bot$, their lengths $\|\q_\bot\|$ and the objective values $\ogbobj(\q)$ as Table \ref{tb:prop2proof1}. 
The second query selected is $\q_\ogb^{(2)}=(0,1,1,1,1),$
and the weight vector after correction is $(-(13\alpha_m+5\epsilon_m)/3, (4\alpha_m+2\epsilon_m)/3).$
The gradient vector after adding the first two rules is $\g_{\ogb}^{(2)}=(-(\alpha_m+\epsilon_m),-(\alpha_m+\epsilon_m)/3,0,-(\alpha_m-\epsilon_m)/3,(2\alpha_m+\epsilon_m)/3).$
According to Table \ref{tb:prop2proof2}, the third query selected by the orthogonal boosting objective is $\q_\ogb^{(3)}=(1,1,1,1,0)$.
The weight vector is now $(-4\alpha_m-7\epsilon_m/5, 2\alpha_m+6\epsilon_m/5, -\alpha_m-4\epsilon_m/5).$
The output vector of the rule ensemble is
$$-\left(4\alpha_m+\frac{7\epsilon_m}{5}\right)\begin{pmatrix}
0\\0\\1\\0\\0
\end{pmatrix}+\left(2\alpha_m+\frac{6\epsilon_m}{5}\right)\begin{pmatrix}
0\\1\\1\\1\\1
\end{pmatrix}-\left(\alpha_m+\frac{4\epsilon_m}{5}\right)\begin{pmatrix}
1\\1\\1\\1\\0
\end{pmatrix}=\begin{pmatrix}
-\alpha_m-4\epsilon_m/5\\\alpha_m+2\epsilon_m/5\\-3\alpha_m-\epsilon_m\\\alpha_m+2\epsilon_m/5\\2\alpha_m+6\epsilon_m/5
\end{pmatrix}.$$
The gradient vector is $\g_{\ogb}^{(3)}=(-\epsilon_m/5, -2\epsilon_m/5, 0, 3\epsilon_m/5, -\epsilon_m/5).$
The empirical risk after adding three rules into the ensemble is 
$$
R\left(f_\ogb\left(\X, \y^{(m)}\right)\right)=3\epsilon_m^2/5.
$$

Since $\alpha_m\rightarrow\infty$ and $\epsilon_m\rightarrow0$, 
$$
\lim_{m\rightarrow\infty}{R\left(f_\gs\left(\X, \y^{(m)}\right)\right)}=\lim_{m\rightarrow\infty}{\frac{3}{8}\left(3\alpha_m+\epsilon_m\right)^2}=\infty,
$$
or 
$$
\lim_{m\rightarrow\infty}{R\left(f_\gs(\X, \y^{(m)})\right)}=\lim_{m\rightarrow\infty}{\frac{2}{5}\left(6\alpha_m^2+2\alpha_m\epsilon_m+\epsilon_m^2\right)}=\infty,
$$
$$
\lim_{m\rightarrow\infty}{R\left(f_\gb\left(\X, \y^{(m)}\right)\right)}=\lim_{m\rightarrow\infty}{3\alpha_m^2/2}=\infty,
$$
and 
$$
\lim_{m\rightarrow\infty}{R\left(f_\ogb\left(\X, \y^{(m)}\right)\right)}=\lim_{m\rightarrow\infty}{3\epsilon_m^2/5}=0.
$$

The rules generated by each objective function in each step is visualised as Figure \ref{fig:vis_proof}.

\begin{comment}
%%% no background rule gb vs ogb      
   Let 
$y_\gb^{(m)}=[\alpha_m, -2\alpha_m, -2\alpha_m, \alpha_m+\epsilon_m]^T$,
where $\alpha_m\rightarrow\infty$ and $\epsilon_m\rightarrow0$.
% we solve the regression problem to fit this dataset. 

The first rule of both GB and OGB cover the two ``$-2\alpha_m$''s. 
However, for the 2nd rule, GB only covers $\alpha_m+\epsilon_m$, 
but OGB covers all data points. It makes the risk of GB $\alpha_m/4$, 
and the risk of OGB depends only on $\epsilon_m$ and can be as small as possible. 

Let $\alpha_m=m$ and $\epsilon_m=1/m,$
then 
$$
\lim_{m\rightarrow\infty}{\hat{R}(f^{(2)}_\gb(X, y^{(m)}_\gb))}=\lim_{m\rightarrow\infty}m/4=\infty,
$$
while
$$
\lim_{m\rightarrow\infty}{\hat{R}(f^{(2)}_\ogb(X, y^{(m)}_\gb))}=\lim_{m\rightarrow\infty}1/m=0
$$

Let $\y^{(m)}_{\gb}=[-3\alpha_m-\epsilon_m, \alpha_m, 3\alpha_m, -\alpha_m+\epsilon_m]^T, $ 
where $\alpha_m\rightarrow\infty$ and $\epsilon_m\rightarrow0.$

The background rule has a weight $0$, since $\bar{\y}^{(m)}_{\gb}=0.$
The gradient after adding the background rule into the rule ensemble is $\g^{(0)}=\y^{(m)}_{\gb}.$

\begin{table*}[t]
\begin{center}
    
\caption{Calculation of the objective functions of the first rule for dataset $\X, \y^{(m)}_{\gb}$}
\label{tb:gb1strule}
\begin{tabular}{ccccccc}
\toprule
$\q$ & $\left|\q^T\g^{(0)}\right|$ & $\|\q\|_2$ & $\obj_{\gb}(\q)$ & $\q_\bot$ & $\|\q_\bot\|_2$ & $\obj_{\ogb}(\q)$ \\
\midrule
$\left[1, 0, 0, 0\right]^T$ & $3\alpha_m+\epsilon_m$ & $1$ & {\color[HTML]{FE0000} \textbf{$3\alpha_m+\epsilon_m$}} & $\left[\cfrac{3}{4}, -\cfrac{1}{4}, -\cfrac{1}{4}, -\cfrac{1}{4}\right]^T$ & $\cfrac{\sqrt{3}}{2}$ & $2\sqrt{3}\alpha_m+\cfrac{2\epsilon_m}{\sqrt{3}}$ \\
$\left[0, 1, 0, 0\right]^T$ & $\alpha_m$ & $1$ & $\alpha_m$ & $\left[-\cfrac{1}{4}, \cfrac{3}{4}, -\cfrac{1}{4}, -\cfrac{1}{4}\right]^T$ & $\cfrac{\sqrt{3}}{2}$ & $\cfrac{2\alpha_m}{\sqrt{3}}$ \\
$\left[0, 0, 1, 0\right]^T$ & $3\alpha_m$ & $1$ & $3\alpha_m$ & $\left[-\cfrac{1}{4}, -\cfrac{1}{4}, \cfrac{3}{4}, -\cfrac{1}{4}\right]^T$ & $\cfrac{\sqrt{3}}{2}$ & $2\sqrt{3}\alpha_m$ \\
$\left[0, 0, 0, 1\right]^T$ & $\alpha_m-\epsilon_m$ & $1$ & $\alpha_m-\epsilon_m$ & $\left[-\cfrac{1}{4}, -\cfrac{1}{4}, -\cfrac{1}{4}, \cfrac{3}{4}\right]^T$ & $\cfrac{\sqrt{3}}{2}$ & $\cfrac{2(\alpha_m-\epsilon_m)}{\sqrt{3}}$ \\
$\left[1, 1, 0, 0\right]^T$ & $2\alpha_m+\epsilon_m$ & $\sqrt{2}$ & $\cfrac{(2\alpha_m+\epsilon_m)}{\sqrt{2}}$ & $\left[\cfrac{1}{2}, \cfrac{1}{2}, -\cfrac{1}{2},-\cfrac{1}{2}\right]^T$ & $1$ & $2\alpha_m+\epsilon_m$ \\
$\left[0, 1, 1, 0\right]^T$ & $4\alpha_m$ & $\sqrt{2}$ & $2\sqrt{2}\alpha_m$ & $\left[-\cfrac{1}{2}, \cfrac{1}{2},\cfrac{1}{2}, -\cfrac{1}{2}\right]^T$ & $1$ & {\color[HTML]{FE0000} \textbf{$4\alpha_m$}} \\
$\left[0, 0, 1, 1\right]^T$ & $2\alpha_m+\epsilon_m$ & $\sqrt{2}$ & $\cfrac{(2\alpha_m+\epsilon_m)}{\sqrt{2}}$ & $\left[-\cfrac{1}{2},-\cfrac{1}{2},\cfrac{1}{2},\cfrac{1}{2}\right]^T$ & $1$ & $2\alpha_m+\epsilon_m$ \\
$\left[1, 1, 1, 0\right]^T$ & $\alpha_m-\epsilon_m$ & $\sqrt{3}$ & $\cfrac{\alpha_m-\epsilon_m}{\sqrt{3}}$ & $\left[\cfrac{1}{4},\cfrac{1}{4},\cfrac{1}{4},-\cfrac{3}{4}\right]^T$ & $\cfrac{\sqrt{3}}{2}$ & $\cfrac{2(\alpha_m-\epsilon_m)}{\sqrt{3}}$ \\
$\left[0, 1, 1, 1\right]^T$ & $3\alpha_m+\epsilon_m$ & $\sqrt{3}$ & $\sqrt{3}\alpha_m+\cfrac{\epsilon_m}{\sqrt{3}}$ & $\left[-\cfrac{3}{4},\cfrac{1}{4},\cfrac{1}{4},\cfrac{1}{4}\right]^T$ & $\cfrac{\sqrt{3}}{2}$ & $2\sqrt{3}\alpha_m+\cfrac{2\epsilon_m}{\sqrt{3}}$ \\
$\left[1, 1, 1, 1\right]^T$ & $0$ & $2$ & $0$ & $\left[0,0,0,0\right]^T$ & $0$ & $0$\\
\bottomrule
\end{tabular}
\end{center}
\end{table*}

The values of $\gbobj(\q)$ and $\ogbobj(\q)$ are calculated in Table \ref{tb:gb1strule} for all possible queries.
The first query select by GB objective is $\q^{(1)}_{\gb}=[1,0,0,0]^T$. The weight vector (including the background rule) is
$\weights_{\gb}^{(1)}= \left[\alpha_m+\epsilon_m/3, -4\alpha_m-4\epsilon_m/3\right]^T.$
The new gradient after fitting the first rule is $\g_{\gb}^{(1)}=\left[0, -\epsilon_m/3, 2\alpha_m-\epsilon_m/3, -2\alpha_m+2\epsilon_m/3\right]^T.$
The first query select by OGB objective is $\q^{(1)}_{\ogb}=[0,1,1,0]^T$. The weight vector (including the background rule) is
$\weights_{\ogb}^{(1)}= [-2\alpha_m, 4\alpha_m]^T.$
The new gradient after fitting the first rule is $\g_{\gb}^{(1)}=[-\alpha_m-\epsilon_m, -\alpha_m, \alpha_m, \alpha_m+\epsilon_m]^T.$

\begin{table*}[t]
\begin{center}
\caption{Calculation of the objective functions of the second rule for dataset $\textbf{X}$, $\y^{(m)}_{\gb}$}
\label{tb:gb2ndrule}
\begin{tabular}{cccccccc}
\toprule
$\q$ & $\left|\q^T\g^{(1)}_{\gb}\right|$ & $\|\q\|_2$ & $\obj_{\gb}$ & $\q_\bot$ & $\left|\q^T_\bot\g^{(1)}_{\ogb\bot}\right|$ & $\|\q_\bot\|_2$ & $\obj_{\ogb}$ \\
\midrule
$\left[1, 0, 0, 0\right]^T$ & $0$ & $1$ & $0$ & $\left[\cfrac{1}{2}, 0, 0, -\cfrac{1}{2}\right]^T$ & $\alpha_m+\epsilon_m$ & $\cfrac{\sqrt{2}}{2}$ & $\sqrt{2}(\alpha_m+\epsilon_m)$ \\
$\left[0, 1, 0, 0\right]^T$ & $\cfrac{\epsilon_m}{3}$ & $1$ & $\cfrac{\epsilon_m}{3}$ & $\left[0,\cfrac{1}{2},-\cfrac{1}{2},0\right]^T$ & $\alpha_m$ & $\cfrac{\sqrt{2}}{2}$ & $\sqrt{2}\alpha_m$ \\
$\left[0, 0, 1, 0\right]^T$ & $2\alpha_m-\cfrac{\epsilon_m}{3}$ & $1$ & {\color[HTML]{FE0000} \textbf{$2\alpha_m-\cfrac{\epsilon_m}{3}$}} & $\left[0, -\cfrac{1}{2}, \cfrac{1}{2},0\right]^T$ & $\alpha_m$ & $\cfrac{\sqrt{2}}{2}$ & $\sqrt{2}\alpha_m$ \\
$\left[0, 0, 0, 1\right]^T$ & $2\alpha_m-\cfrac{2\epsilon_m}{3}$ & $1$ & $2\alpha_m-\cfrac{2\epsilon_m}{3}$ & $\left[-\cfrac{1}{2}, 0, 0, \cfrac{1}{2}\right]^T$ & $\alpha_m+\epsilon_m$ & $\cfrac{\sqrt{2}}{2}$ & $\sqrt{2}(\alpha_m+\epsilon_m)$ \\
$\left[1, 1, 0, 0\right]^T$ & $\cfrac{\epsilon_m}{3}$ & $\sqrt{2}$ & $\cfrac{\epsilon_m}{3\sqrt{2}}$ & $\left[\cfrac{1}{2},\cfrac{1}{2},-\cfrac{1}{2},-\cfrac{1}{2}\right]^T$ & $2\alpha_m+\epsilon_m$ & $1$ & {\color[HTML]{FE0000} \textbf{$2\alpha_m+\epsilon_m$}} \\
$\left[0, 1, 1, 0\right]^T$ & $2\alpha_m-\cfrac{2\epsilon_m}{3}$ & $\sqrt{2}$ & $\sqrt{2}\alpha_m-\cfrac{\sqrt{2}\epsilon_m}{3}$ & $\left[0,0,0,0\right]^T$ & $0$ & $0$ & $0$ \\
$\left[0, 0, 1, 1\right]^T$ & $\cfrac{\epsilon_m}{3}$ & $\sqrt{2}$ & $\cfrac{\epsilon_m}{3\sqrt{2}}$ & $\left[-\cfrac{1}{2},-\cfrac{1}{2},\cfrac{1}{2},\cfrac{1}{2}\right]^T$ & $2\alpha_m+\epsilon_m$ & $1$ & $2\alpha_m+\epsilon_m$ \\
$\left[1, 1, 1, 0\right]^T$ & $2\alpha_m-\cfrac{2\epsilon_m}{3}$ & $\sqrt{3}$ & $\cfrac{2\alpha_m-\cfrac{2\epsilon_m}{3}}{\sqrt{3}}$ & $\left[\cfrac{1}{2},0,0,-\cfrac{1}{2}\right]^T$ & $\alpha_m+\epsilon_m$ & $\cfrac{\sqrt{2}}{2}$ & $\sqrt{2}(\alpha_m+\epsilon_m)$ \\
$\left[0, 1, 1, 1\right]^T$ & $0$ & $\sqrt{3}$ & $0$ & $\left[-\cfrac{1}{2},0,0,\cfrac{1}{2}\right]^T$ & $\alpha_m+\epsilon_m$ & $\cfrac{\sqrt{2}}{2}$ & $\sqrt{2}(\alpha_m+\epsilon_m)$ \\
$\left[1, 1, 1, 1\right]^T$ & $0$ & $2$ & $0$ & $\left[0,0,0,0\right]^T$ & $0$ & $0$ & $0$\\
\bottomrule
\end{tabular}
\end{center}
\end{table*}

We calculate the objective values of all possible queries for the second rule, as shown in Table \ref{tb:gb2ndrule}.
The second query select by GB objective is $\q^{(2)}_{\gb}=[0,0,1,0]^T$. The weight vector (including the background rule) is
$\weights_{\gb}^{(2)}= \left[\epsilon_m/2, -3\alpha_m-3\epsilon_m/2, 3\alpha_m-\epsilon_m/2\right]^T.$
The new gradient after fitting the second rule is $\g_{\gb}^{(2)}=\left[0, \alpha_m-\epsilon_m/2, 0, -a+\epsilon_m/2\right]^T.$ 
The empirical training risk is $R(f^{(2)}_\gb(\X, \y^{(m)}_\gb))=\left(2\alpha_m-\epsilon_m\right)^2/2.$
The second query select by OGB objective is $\q^{(2)}_{\ogb}=[1,1,0,0]^T$. The weight vector (including the background rule) is
$\weights_{\ogb}^{(2)}= \left[-\alpha_m+\epsilon_m/2, 4\alpha_m, -2\alpha_m-\epsilon_m\right]^T.$
The new gradient after fitting the second rule is $\g_{\ogb}^{(2)}=\left[-\epsilon_m/2, \epsilon_m/2, -\epsilon_m/2, \epsilon_m/2\right]^T.$
The empirical risk is $R(f_\ogb(\X, \y^{(m)}_\gb))=\epsilon_m^2.$

Let $\alpha_m=m$ and $\epsilon_m=1/m, $
then the training empirical risks of the two-element rule ensembles with corrective weight update generated by the GB and OGB objective functions satisfy
$$
\lim_{m\rightarrow\infty}{R(f_\gb(X, y^{(m)}_\gb))}=\lim_{m\rightarrow\infty}\cfrac{1}{2}\left(2m-\frac{1}{m}\right)^2=\infty,
$$
while
$$
\lim_{m\rightarrow\infty}{R(f_\ogb(X, y^{(m)}_\gb))}=\lim_{m\rightarrow\infty}\frac{1}{m^2}=0.
$$

Let
$\y^{(m)}_{\gs}=\left[-\alpha_m, -\epsilon_m, \epsilon_m, \alpha_m\right]^T,$
where $\epsilon_m\rightarrow0$ and $\alpha_m\rightarrow\infty.$
% we solve the regression problem to fit this dataset.
Initially, the weight of the background rule is 0 since $\bar{\y}_{\text{GS}}^{(m)}=0.$
The gradient with only the background rule is $\g^{(0)}=\y_{\text{GS}}^{(m)}.$

The first rule of GS chooses a query covering ``$-\epsilon_m\beta_m$'' and ``$-\epsilon_m$'' since the sum of gradient of the covered points is largest. 
The second rule generated by GS choose a query covering the last 3 data points.
The weights of the 2 rules generated by GS are $-\epsilon_m(\beta_m+1)/2$ and $(3\beta_m+1)\epsilon_m/6$, and the risk of these 2 rules is $\hat{R}(f^{(2)}_{GS}(X, y^{(m)}_{GS}))=\epsilon_m^2(9\beta_m^2-18\beta_m+17)/48.$

However, if $\beta_m$ is large enough, the first rule of OGB chooses a query which only covers one data point ``$-\epsilon_m\beta_m$'', and the second rule covers ``$\epsilon_m\beta_m$''.
The weights of the rule generated by OGB is $-\epsilon_m\beta_m$ and $\epsilon_m\beta_m$, and the risk is $\hat{R}(f^{(2)}_\ogb(X, y^{(m)}_{GS}))=\epsilon_m^2/2.$

Let $\epsilon_m=1/m$, and $\beta_m=m^2$, then
$$\lim_{m\rightarrow\infty}\hat{R}(f^{(2)}_\ogb(X, y^{(m)}_{GS}))=\lim_{m\rightarrow\infty}\cfrac{1}{2m^2}=0, $$
while
$$\lim_{m\rightarrow\infty}\hat{R}(f^{(2)}_{GS}(X, y^{(m)}_{GS}))=\lim_{m\rightarrow\infty}\cfrac{9m^4-18m^2+17}{48m^2}=\infty.$$
% The ratio between $R_\text{GS}$ and $R_\ogb$ is 
% $$
% \cfrac{R_\text{GS}}{R_\ogb}=\cfrac{9\beta_m^2-18\beta_m+17}{24}.
% $$
% If $\beta_m\rightarrow\infty$, then $\cfrac{R_\text{GS}}{R_\ogb}\rightarrow\infty.$
\end{comment}

\end{proof}

% \begin{theorem}
%     Given a gradient vector $\g \in \R^n$, an orthonormal basis $\veco_1, \dots, \veco_t \in \R^n$ of the subspace spanned by the queries of the first $t$ rules, and a sub-selection of $l$ candidate points $\sigma\from [l] \to [n]$, the best prefix selection problem~\eqref{eq:best_prefix} can be solved in time $O(tl)$.
%     \label{thm:fast_prefix1}
% \end{theorem}
\efficientprop*
\begin{proof}
    To see the claim, we first rewrite the objective value for the $i$-th prefix as
    \begin{equation*}
        \frac{\left|\inner{\g_\bot}{\q^{(i)}}\right|}{\|\q^{(i)}_\perp\| + \epsilon} = \frac{\left|\inner{\g_\bot}{\q^{(i)}}\right|}{\sqrt{\|\q^{(i)}\|^2 - \|\q^{(i)}_\parallel\|^2} + \epsilon} 
        \enspace .
    \end{equation*}
    The value of $\|\q^{(i)}\|^2$ is trivially given as $|\text{I}(\q^{(i)})|=i$, and $\inner{\g}{\q^{i}}$ can be easily computed for all $i \in [l]$ in time $O(n)$ via cumulative summation.
    Finally we can reduce the problem of computing the (squared) norms of the $l$ projected prefixes to computing the $t$ (squared) norms of the prefixes on the subspaces given by the individual orthonormal basis vectors via
    \begin{equation*}
        \|\q^{(i)}_\parallel\|^2 = \left\|\sum_{k=1}^t \veco_k \veco_k^T\q^{(i)}\right\|^2 = \sum_{k=1}^t \|\veco_k \veco_k^T\q^{(i)}\|^2 \enspace .
    \end{equation*}
    Each of these $t$ sequences of (squared) norms can be computed in time $O(n)$ by rewriting
    \begin{align*}
        \|\veco_k\veco_k^T\q^{(i)}\| &= \left\|\veco_k\veco_k^T\left(\sum_{j=1}^i\e_{\sigma(j)}\right)\right\|\\
        &=\|\veco_k\| \left|\sum_{j=1}^i \inner{\veco_k}{\e_{\sigma(j)}}\right|\\
        &=\left|\sum_{j=1}^i o_{k, \sigma(j)}\right| 
    \end{align*}
    where the last equality shows how an $O(n)$-computation is achieved via cumulative summation of the $k$-th basis vector elements in the order given by $\sigma$.
    %To incrementally compute $\|q^{(i)}\|$  in time $O(t)$ per $i \in [n]$ as required by the claim we reduce the problem
\end{proof}

\begin{proposition}
Let $\g$ be the gradient vector after the application of the weight correction step~\eqref{eq:weight_correction} for selected queries $\q_1, \dots, \q_t$. If the regularisation parameter is 0, then $\g \perp \spanof\{\q_1, \dots, \q_t\}$.
\label{th:g_orth}
\end{proposition}
\begin{proof}
    After the weight correction step $\weights$ is a stationary point of $R(\Q(\cdot))$, i.e., we have for all $j \in [t]$
    \begin{equation*}
        0 = \frac{\partial R(\Q\weights)}{\partial \beta_j}
          = \sum_{i=1}^n \frac{\partial\, l(\inner{\tilde{\q}_i}{\weights}, y_i)}{\partial \beta_j} 
          = \sum_{i=1}^n q_{ij} \underbrace{\frac{\partial\, l(\inner{\tilde{\q}_i}{\weights}, y_i)}{\partial\, \inner{\tilde{\q}_i}{\weights}}}_{g_i} = \inner{\q_j}{\g} \enspace .
    \end{equation*}
\end{proof}
\begin{proposition}
Let $\Q = [\q_1, \dots, \q_{t-1}] \in \R^{n \times (t-1)}$ be the selected query matrix and $\g$ the corresponding gradient vector after full weight correction, and let us denote by $\q=\q_\perp+\q_\parallel$ the orthogonal decomposition of $\q$ with respect to $\range\, \Q$. 
%, and $\q^*$ a maximizer of the \defemph{orthogonal gradient boosting objective} 
% $
%     \ogbobj(q) = |\inner{\g}{\q}|/(\|\q_\perp\|+\epsilon)
% $
% where $\q_\perp$ is the projection of $\q$ onto the orthogonal complement of $\range \,\Q$. 
Then we have for a maximizer $\q^*$ of the \defemph{orthogonal gradient boosting objective} 
$
    \ogbobj(q) = |\inner{\g_\perp}{\q}|/(\|\q_\perp\|+\epsilon)
$:
\begin{itemize}
    \item[a)] For $\epsilon \to 0$, $\spanof\{\q_1, \dots, \q_{t-1}, \q^*\}$ is the best approximation to $\spanof\{\q_1, \dots, \q_{t-1}, \g\}$.
    % as defined in~\eqref{eq:gen_proj_error}.
    % \item[b)] For $\epsilon \to \infty$, $\q^*$ is a maximizer of $\gsobj$ and any maximizer of $\gsobj$ is a maximizer of $\ogbobj$.
    \item[b)] For $\epsilon \to \infty$, $\q^*$ maximizes $\gsobj$ and any maximizer of $\gsobj$ maximizes $\ogbobj$.
    \item[c)] For $\epsilon = 0$ and $\|\q_\perp\|>0$, the ratio $(\ogbobj(q)/\gbobj(q))^2$ is equal to $1+(\|\q_\parallel\|/\|\q_\perp\|)^2$.
    \item[d)] The objective value $\ogbobj(q)$ is upper bounded by $\|\g_\perp\|$.
\end{itemize}
    \label{thm:objective}
\end{proposition}

\begin{proof}
\begin{itemize}
\item[a)] If $\epsilon\rightarrow0$, then $\ogbobj(q)\rightarrow\cfrac{|g_\bot^Tq|}{\norm{q_\bot}}$.
% Let $\Q=[\q_1,\cdots, \q_{t-1}]$.
If $\q^*$ is a maximizer of $\ogbobj$, 
then as shown in Lemma 4.1, $\q^*$ minimises the minimum distance from all 
$$\f\in\text{span}\{\q_1, \cdots, \q_{t-1}, \g\}$$
to the subspace of $$\text{span}\{\q_1, \cdots, \q_{t-1}, \q^*\}.$$ 
Therefore, the subspace spanned by $[\q_1, \cdots, \q_{t-1}, \q^*]$ is the best approximation to the subspace spanned by $[\q_1, \cdots, \q_{t-1}, \g]$.
% \end{proof}

% \subsubsection{Property b}
% \begin{proposition}
% For $\epsilon \to \infty$, $\q^*$ is also a maximizer of $\gsobj$ and any maximizer of $\gsobj$ is also a maximizer of $\ogbobj$.
%  \end{proposition}
% \begin{proof}
% If $\epsilon\rightarrow\infty$, then 
% \begin{align*}
%     \ogbobj(\q)&=\frac{|\g^T\q|}{\norm{\q_\bot}+\epsilon} \\
%     &=\cfrac{\cfrac{|\g^T\q|}{\epsilon}}{1+\cfrac{\norm{\q_\bot}}{\epsilon}} \\
%     &\rightarrow\frac{|\g^T\q|}{\epsilon}.
% \end{align*}
% Therefore, maximising $\ogbobj$ is equivalent to maximising $\gsobj$. 
% \end{proof}
% $$\epsilon_{q_1, q_2} = \frac{\|\q_2^\perp\|\gsobj(\q_1) - \gsobj(\q_2)\|\q_1^\perp\|}{\gsobj(\q_2) - \gsobj(\q_1)}.$$

% \subsection{Property b (proposal after discussion)}
% \begin{proof}
    
\item[b)] Let $q_1$ and $q_2$ be any two queries and denote by $\ogbobj^{(\epsilon)}(q)$ the $\ogbobj$-value of $q$ for a specific $\epsilon$.
Then
\begin{align*}
    &\lim_{\epsilon \to \infty} \epsilon\left(\ogbobj^{(\epsilon)}(q_1) - \ogbobj^{(\epsilon)}(q_2)\right) \\
    = &\lim_{\epsilon \to \infty} \epsilon\left( \frac{|\inner{g_\bot}{\q_1}|}{\|\q_1^\perp\|+\epsilon} - \frac{|\inner{g_\bot}{\q_2}|}{\|\q_2^\perp\|+\epsilon}\right) \\
    = &\lim_{\epsilon \to \infty} \left( \frac{|\inner{g_\bot}{\q_1}|}{\|\q_1^\perp\|/\epsilon+1} - \frac{|\inner{g_\bot}{\q_2}|}{\|\q_2^\perp\|/\epsilon+1}\right) \\
    = &|\inner{g_\bot}{\q_1}|-|\inner{g_\bot}{\q_2}|\\
    = &\gsobj(q_1) - \gsobj(q_2)
\end{align*}
Thus for large enough $\epsilon$, the signs of $\ogbobj^{(\epsilon)}(q_1) - \ogbobj^{(\epsilon)}(q_2)$ and $\gsobj(q_1) - \gsobj(q_2)$ agree.
Therefore, a query $q$ is a $\gsobj$-maximizer, i.e., $\gsobj(q) \geq \gsobj(q')$ for all $q' \in \cQ$, if and only if $q$ is a $\ogbobj$-maximizer, i.e., $\ogbobj(q) \geq \ogbobj(q')$ for all $q' \in \cQ$.
% \end{proof}

% \subsection{Property b by Pierre}
% \begin{proof}
    
% Suppose that for $\epsilon \geq 0$, $q^* \in \argmax_q \ogbobj(q)$, and for any $q' \in \argmax_q \gsobj(q)$, we have $$\frac{\gsobj(q')}{\|\q'_\perp\|+\epsilon} < \frac{\gsobj(q^*)}{\|\q^*_\perp\|+\epsilon}.$$
% Then $$\frac{\gsobj(q')}{\|\q'_\perp\|+\epsilon'} \geq \frac{\gsobj(q^*)}{\|\q^*_\perp\|+\epsilon'}$$ if and only if $$\epsilon' \geq \frac{\|\q'_\perp\|\gsobj(q^*) - \gsobj(q')\|\q^*_\perp\|}{\gsobj(q') - \gsobj(q^*)}.$$
% Therefore, if $q^*$ is a maximizer of $\ogbobj$ for $\epsilon$, but not of $\gsobj$, there exists an $\epsilon'$ such that $q^*$ is not a maximizer of $\ogbobj$. We conclude that for large enough $\epsilon$, only maximizers of $\gsobj$ can be maximizers of $\ogbobj$.
% \end{proof}

% \subsubsection{Property c}
% \begin{proposition}

%  For $\epsilon = 0$ and $\|\q_\perp\|>0$, the ratio $(\frac{\ogbobj(q)}{\gbobj(q)})^2$ is equal to $1+(\frac{\|\q_\parallel\|}{\|\q_\perp\|})^2$.
    
% \end{proposition}
% \begin{proof}
    
\item[c)] If $\epsilon=0$ and $\norm{q_\bot}>0$, then 
\begin{align*}
    \left(\frac{\ogbobj(q)}{\gbobj(q)}\right)^2&=\cfrac{\cfrac{|\g_\bot^T\q|^2}{\norm{\q_\bot}^2}}{\cfrac{|\g_\bot^T\q|^2}{\norm{\q}^2}} =\frac{\norm{\q}^2}{\norm{\q_\bot}^2} \\
    &=\frac{\norm{\q_\parallel}^2+\norm{\q_\bot}^2}{\norm{\q_\bot}^2}\\
    &=1+\left(\frac{\norm{\q_\parallel}}{\norm{\q_\bot}}\right)^2
\end{align*}
% \end{proof}
% % \subsubsection{Property d}
% \begin{proposition}
    
% The objective value $\ogbobj(q)$ is upper bounded by $\|\g\|$.
% \end{proposition}

% \begin{proof}
    
\item[d)] If we divide the numerator and denominator of $\ogbobj(\q)$ with $\norm{\q_bot}$, then we can get
\begin{align*}
    \ogbobj(\q)&=\frac{|\g_\bot^T\q|}{\|q_\perp\|+\epsilon} \\
    &= \cfrac{\cfrac{|\g_\bot^T\q_\bot|}{\norm{\q_\bot}}}{1+\cfrac{\epsilon}{\norm{\q_\bot}}}\\
    \intertext{according to the Cauchy–Schwarz inequality,  $\cfrac{|\g_\bot^T\q|}{\norm{\q_\bot}}\leq\cfrac{\norm{\g_\bot}\norm{\q_\bot}}{\norm{\q_\bot}}=\norm{\g_\bot}$, so, }\\
    \ogbobj(\q)&\leq \frac{\norm{\g_\bot}}{1+\cfrac{\epsilon}{\norm{\q_\bot}}}\\
    \intertext{as $\norm{\q_\bot}$ is upper bounded by the number of data points $n$, }\\
     \ogbobj(\q) & \leq \frac{\norm{\g_\bot}}{1+\cfrac{\epsilon}{n}}\\
     \ogbobj(\q) & \leq \norm{\g_\bot}.
\end{align*}
% Let $\epsilon'=\cfrac{\epsilon}{n}$,
% then 
% \begin{align*}
%     \ogbobj(\q)\leq\frac{\norm{\g}}{1+\epsilon'}
% \end{align*}
\end{itemize}
\end{proof}

% \subsection{Proof of Theorem 4.4}

% \input{SI_table.tex}
\section{Greedy approximation to bounding function}

% \begin{figure*}[t]
% \vskip 0.1in
% % \addtocounter{figure}{5}
% \begin{center}
% \centering
% \includegraphics[width=\columnwidth]{Images/combined_greedy.pdf}
% \vskip -0.1in
% \caption{The number of instances of ratios between the best objective values obtained from the greedy search and the true optimal objective value. The upper figures are in linear scales and the lower figures are in $\log$ scales. The total variation distances for these three values of $\epsilon$ are 0.394, 0.377 and 0.227.}
% \label{fig:greedy_pn}
% \end{center}
% \vskip 0.1in
% \end{figure*}
\begin{figure*}[htb]
\vskip 0.1in
% \addtocounter{figure}{5}
\begin{center}
\centering
% \includegraphics[width=0.48\columnwidth]{Images/approx_rules_99.pdf}
\includegraphics[width=0.8\columnwidth]{Images/approx_data_points_99.pdf}

\vskip -0.1in
\caption{The approximation rates for different number of existing rules (left) and data points (right) with $99\%$ success rate.}
\label{fig:greedy_approx}
\end{center}
\vskip 0.1in
\end{figure*}

The branch-and-bound search described in Section 3.3 requires an efficient way of calculating the value of $\text{bnd}(\q)=\max\{obj(\q'): I(\q')\subseteq I(\q), \q'\in\{0,1\}^n\}$, where $I(\q)=\{i: \q(x_i)=1,1\leq i\leq n\}$. 
It is too expensive to enumerate all possible $\q'$s as there are $2^n$ cases in the worst case. 
One solution to this problem is that we can relax the constraint $\q'\in\{0,1\}^n$ to $\q'\in[0,1]^n$ and it can be solved by quadratic programming. 
However, this would render the branch-and-bound search computationally very expensive. 
Instead, we investigate here using greedy approximations to identifying the optimal point sets. This does not yield an admissible bounding function, and thus does not guarantee to identify the optimal query, but can still lead to good approximation ratios in practice and is computationally inexpensive.

\begin{table}[t]
\begin{center}
\caption{The ratio of instances (15 data points and 5 existing rules) which reaches certain approximation rates.}
\begin{tabular}{rrrr}
\toprule
Approx. rate & $\epsilon$=0.001 & $\epsilon$=0.1 & $\epsilon$=1 \\
\midrule
75\% & 100.00\% & 100.00\% & 100.00\% \\
80\% & 99.66\% & 99.71\% & 100.00\% \\
85\% & 96.21\% & 98.05\% & 99.93\% \\
90\% & 88.11\% & 90.80\% & 99.34\% \\
95\% & 65.54\% & 70.15\% & 92.43\% \\
100\% & 33.20\% & 36.87\% & 63.28\%\\
\bottomrule
\end{tabular}
\end{center}
\end{table}

In particular, we are investigating two greedy variants: the fast prefix greedy approach described in the main text, and a slower full greedy approach that works as follows.
% A full greedy approach can be used to approximate the maximum objective value of the subset of data points selected by $\q$, which is the bounding value $\bnd(\q)$.  
Given a query $\q'^{(t-1)}\leq\q$, we need to find the data point selected by $\q$ which maximise the objective function, and use it with $\q'^{(t-1)}$ to form a $\q'{(t)}$.
\begin{equation*}
    i_*^{(t)}=\argmax_{i\in I(q)-I(q'^{(t-1)})}{\frac{\g^T\left(\q'^{(t-1)}+\e_i\right)}{\norm{\left(\q'^{(t-1)}+\e_i\right)_\bot}+\epsilon}}. 
\end{equation*}
where 
% $I(\q)=\{i: \q(x_i)=1,1\leq i\leq n\}$, 
$0\leq t\leq|I(\q)|$, $\q'^{(0)}=\0$ and $\q'^{(t)}=\q'^{(t-1)}+\e_{i_*^{(t)}}$. 
We use the maximum value of $\obj(\q'^{(t)})$ as the bounding value for query $\q$. 
The computation time complexity level of this approach is $O(n^2)$ for each query.
% , which is not as efficient as the presorting greedy approach described in Section 4.3.

%The presorting greedy approach of solving the prefix optimization problem leads to another approximation to the optimal objective function value for the queries which cover subsets of data points covered by $\q$. As proved in Theorem 4.4, this approach has a time complexity of $O(tn)$.

We now investigate the approximation ratios achieved by both approaches.
For that, we generate 2000 groups of initial queries and initial gradient vectors. 
Each of the groups contains 15 data points and 5 queries. In each query, each data point has a probability of 0.5 being 1 and 0. 
The initial gradient vector is originally generated by an $15$-dimensional standard normal distribution, and then it is projected onto the subspace orthogonal to the existing queries.
We test these 2000 instances to see the difference between the approximation of $\bnd(\q)$ obtained by the full greedy approach, the pre-sorting greedy approach, and the actual optimal objective values (obtained by a brute-force approach). We choose three different values of $\epsilon$: 0.001, 0.1 and 1.

% Figure \ref{fig:greedy_pn} compares the ratio between the approximations to $\bnd(\q)$ obtained by the two greedy approaches and the true optimal objective value. The Y axis of Figure \ref{fig:greedy_pn} represents the percentage of instances of different ratios. 
Figure \ref{fig:greedy_approx} shows the approximation rate of different number of existing rules and number of data points. 
For the presorting greedy approach, if there are more rules existing, the approximation is closer to the true value.
Although the approximation rate is decreasing slightly with more number of data points, there is still a trend that the decreasing is getting smaller when the size of dataset is increasing.
% According to the comparison, for $epsilon=0.001$, more than $99\%$ of instances 
The full greedy approach approximates the true bounding function better than the presorting greedy approach. 
However, since the fully greedy approach costs more time than the presorting greedy, it is still reasonable to use the presorting greedy approach to get higher efficiency.

% For smaller $\epsilon$ values (e.g. $\epsilon=0.001$), there are $90\%$ instances whose approximation values are more than $90\%$ of the true bounding function values, while $96\%$ of instances approximate more than $90\%$ of the value of $\bnd(\q)$ using the full greedy approach. For $\epsilon=0.1$, both algorithms have slight better (both $1\%$ promotion) approximation than $\epsilon=0.001$. 
% It can be observed that for $\epsilon=1$, both algorithms have more instances where the approximations are closed to the true bounding values.
% However, if the value of $\epsilon$ is too large, then the calculated objective values are not accurate according to Proposition \ref{thm:fast_prefix}. 
% Comparing the statistical distances of these two greedy approaches, it is reasonable to use the presorting greedy approach to approximate the bounding values. 

% To approximate the true bounding function more efficiently and more accurate, we adopt the presorting greedy approach in this research.
\begin{figure*}[t]
\vskip 0.1in
\centering
\begin{center}
\includegraphics[width=0.55\columnwidth]{Images/all_datasets_order_log_train.pdf}
% \includegraphics[width=0.41\columnwidth]{Images/running_time_box.pdf}
% \includegraphics[width=0.96\columnwidth]{Images/}
\end{center}
\vskip -0.1in
\caption{Comparison of log training risks over different datasets. The datasets are ordered by the training risks of COB. 
% Right Inner: comparison of running time of COB with XGBoost. 
}
\label{fig:log_train_risk}
\end{figure*}

\begin{figure*}[t]
\vskip 0.1in
\centering
\begin{center}
\includegraphics[width=0.41\columnwidth]{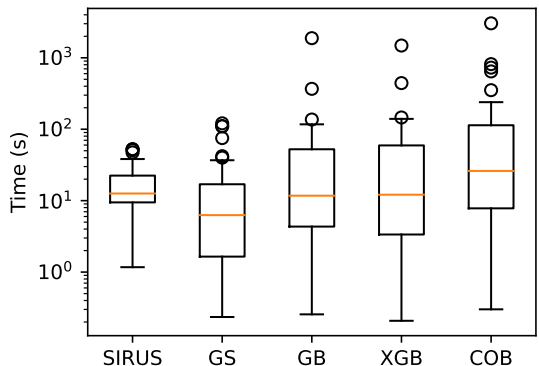}
% \includegraphics[width=0.96\columnwidth]{Images/all_datasets_order_log.pdf}
\end{center}
\vskip -0.1in
\caption{The distribution of computation times across all test datasets to reach complexity level 50 for different algorithms. GS and GB are using greedy search, while XGB and COB are using branch-and-bound.
% Right Inner: comparison of running time of COB with XGBoost. 
}
\label{fig:time_box}
\end{figure*}

\begin{table*}[t]
\caption{Comparison of normalised risks and computation times for rule ensembles, averaged over cognitive complexities between 1 and 50, using SIRUS(SRS), Gradient Sum(SGS), Gradient boosting (SGB), XGBoost (SXB) and COB (using greedy search and branch-and-bound search), for benchmark datasets of classification (upper), regression (middle) and Poisson regression problems (lower).}
\label{tb:comparison}
\vskip 0.05in
% \hskip 0.05in
\begin{center}
\begin{scriptsize}
\begin{sc}
\begin{tabular}{l@{\hskip 0.03in}c@{\hskip 0.03in}c@{\hskip 0.03in}c@{\hskip 0.03in}c@{\hskip 0.03in}c@{\hskip 0.03in}c@{\hskip 0.03in}c@{\hskip 0.03in}c@{\hskip 0.03in}c@{\hskip 0.03in}c@{\hskip 0.03in}c@{\hskip 0.03in}c@{\hskip 0.03in}c@{\hskip 0.03in}c@{\hskip 0.03in}c@{\hskip 0.03in}c@{\hskip 0.03in}c@{\hskip 0.03in}c@{\hskip 0.03in}c}
\toprule
\multicolumn{1}{c}{\multirow{2}{*}{Dataset}} & \multicolumn{1}{c}{\multirow{2}{*}{$d$}} & \multicolumn{1}{c}{\multirow{2}{*}{$n$}} & \multicolumn{6}{c}{Train risks} & \multicolumn{6}{c}{Test risks} & \multicolumn{5}{c}{Computation times} \\
\multicolumn{1}{c}{} & \multicolumn{1}{c}{} & \multicolumn{1}{c}{} & SRS & SGS & SGB & SXB & COB$_{G}$ & COB$_{B}$ & SRS & SGS & SGB & SXB & COB$_{G}$ & COB$_{B}$ & SRS & SGS & SGB & SXB & COB$_{B}$ \\ \hline
titanic & 7 & 1043 & .895 & .653 & .656 & .646 & {\color[HTML]{FF0000} .639} & {\color[HTML]{FF0000} \textbf{.616}} & .894 & .695 & \textbf{.707} & .711 & .713 & .717 & 7.077 & 2.624 & 9.858 & 10.21 & 25.71 \\
tic-tac-toe & 27 & 958 & .892 & .555 & .641 & .577 & .650 & {\color[HTML]{FF0000} \textbf{.492}} & .885 & .596 & .682 & .629 & .721 & {\color[HTML]{FF0000} \textbf{.566}} & 12.59 & 3.971 & 10.34 & 6.09 & 13.99 \\
iris & 4 & 150 & .685 & .261 & .261 & .332 & {\color[HTML]{FF0000} \textbf{.192}} & {\color[HTML]{FF0000} .251} & .745 & .521 & .440 & .459 & .531 & {\color[HTML]{FF0000} \textbf{.424}} & 11.02 & 0.775 & 1.099 & 1.453 & 2.487 \\
breast & 30 & 569 & .569 & \textbf{.277} & .310 & .314 & .290 & .304 & .627 & \textbf{.269} & .362 & .338 & .383 & .349 & 11.48 & 6.744 & 74.43 & 74.83 & 239.2 \\
wine & 13 & 178 & .578 & .216 & .250 & .192 & {\color[HTML]{FF0000} .191} & {\color[HTML]{FF0000} \textbf{.183}} & .621 & .346 & .431 & .409 & .483 & {\color[HTML]{FF0000} \textbf{.265}} & 9.456 & 1.530 & 4.432 & 2.154 & 55.183 \\
ibm hr & 32 & 1470 & .980 & .567 & .560 & .571 & {\color[HTML]{FF0000} \textbf{.558}} & {\color[HTML]{FF0000} \textbf{.558}} & .974 & .640 & .645 & .636 & .652 & {\color[HTML]{FF0000} \textbf{.621}} & 11.15 & 17.24 & 10.99 & 12.92 & 12.03 \\
telco churn & 18 & 7043 & .944 & .679 & .682 & .678 & {\color[HTML]{FF0000} \textbf{.664}} & {\color[HTML]{FF0000} .668} & .945 & .663 & .677 & .665 & {\color[HTML]{FF0000} \textbf{.650}} & {\color[HTML]{FF0000} .660} & 50.83 & 40.01 & 1883 & 1485 & 3039 \\
gender & 20 & 3168 & .566 & .230 & .230 & .249 & {\color[HTML]{FF0000} .224} & {\color[HTML]{FF0000} \textbf{.224}} & .570 & \textbf{.243} & .247 & .263 & .246 & .246 & 22.42 & 22.73 & 25.49 & 24.27 & 32.95 \\
banknote & 4 & 1372 & .854 & .304 & .267 & .290 & {\color[HTML]{FF0000} .253} & {\color[HTML]{FF0000} \textbf{.228}} & .858 & .311 & .268 & .299 & {\color[HTML]{FF0000} .264} & {\color[HTML]{FF0000} \textbf{.229}} & 8.933 & 6.298 & 5.648 & 7.060 & 8.444 \\
liver & 6 & 345 & .908 & .815 & .834 & .814 & {\color[HTML]{FF0000} \textbf{.802}} & .834 & .917 & .879 & .927 & \textbf{.873} & .940 & .891 & 9.734 & 1.997 & 99.72 & 124.1 & 193.9 \\
magic & 10 & 19020 & .906 & .718 & .708 & .710 & {\color[HTML]{FF0000} .707} & {\color[HTML]{FF0000} .707} & .903 & .698 & .693 & .693 & {\color[HTML]{FF0000} .688} & {\color[HTML]{FF0000} \textbf{.688}} & 1.364 & 75.14 & 89.18 & 101.9 & 352.2 \\
adult & 11 & 30162 & .804 & .594 & .599 & .588 & {\color[HTML]{FF0000} \textbf{.575}} & {\color[HTML]{FF0000} .576} & .802 & .603 & .615 & .601 & {\color[HTML]{FF0000} \textbf{.589}} & {\color[HTML]{FF0000} .589} & 2.169 & 121.0 & 136.7 & 146.0 & 728.3 \\
digits5 & 64 & 3915 & \textbf{.248} & .332 & .312 & .344 & .329 & .315 & \textbf{.262} & .329 & .314 & .341 & .320 & .315 & 52.60 & 110.8 & 72.74 & 101.5 & 97.4 \\\hline
insurance & 6 & 1338 & .169 & .130 & .142 & .144 & {\color[HTML]{FF0000} \textbf{.120}} & {\color[HTML]{FF0000} .123} & .177 & .132 & .145 & .147 & {\color[HTML]{FF0000} \textbf{.127}} & {\color[HTML]{FF0000} .128} & 14.06 & 7.507 & 15.94 & 12.98 & 39.53 \\
friedman1 & 10 & 2000 & .180 & .089 & .074 & .068 & {\color[HTML]{FF0000} .067} & {\color[HTML]{FF0000} .068} & .165 & .091 & .077 & .075 & {\color[HTML]{FF0000} \textbf{.070}} & {\color[HTML]{FF0000} .072} & 16.79 & 2.514 & 4.302 & 3.171 & 6.915 \\
friedman2 & 4 & 10000 & .082 & .133 & .119 & .115 & {\color[HTML]{FF0000} .770} & {\color[HTML]{FF0000} \textbf{.075}} & .082 & .135 & .120 & .115 & {\color[HTML]{FF0000} \textbf{.077}} & {\color[HTML]{FF0000} \textbf{.077}} & 47.33 & 11.79 & 17.56 & 13.18 & 28.4 \\
friedman3 & 4 & 5000 & .093 & .045 & .042 & .042 & {\color[HTML]{FF0000} .041} & {\color[HTML]{FF0000} \textbf{.041}} & .092 & .048 & .047 & .046 & {\color[HTML]{FF0000} .045} & {\color[HTML]{FF0000} \textbf{.045}} & 29.86 & 6.243 & 10.61 & 8.559 & 17.65 \\
wage & 5 & 1379 & .427 & .370 & .362 & .359 & {\color[HTML]{FF0000} .352} & {\color[HTML]{FF0000} .354} & \textbf{.341} & .358 & .405 & .411 & .368 & .365 & 14.18 & 5.605 & 12.12 & 13.17 & 25.19 \\
demographics & 13 & 6876 & .219 & .214 & .214 & .214 & {\color[HTML]{FF0000} .212} & {\color[HTML]{FF0000} \textbf{.212}} & \textbf{.209} & .216 & .217 & .217 & .214 & .215 & 38.24 & 36.80 & 29.40 & 33.04 & 72.42 \\
gdp & 1 & 35 & .063 & .020 & .020 & .020 & .024 & {\color[HTML]{FF0000} \textbf{.020}} & .059 & .020 & .020 & .020 & .027 & {\color[HTML]{FF0000} \textbf{.020}} & 7.974 & .261 & .351 & .282 & .488 \\
used cars & 4 & 1770 & .373 & .139 & .123 & .132 & {\color[HTML]{FF0000} \textbf{.113}} & {\color[HTML]{FF0000} \textbf{.121}} & .427 & .171 & .131 & .141 & {\color[HTML]{FF0000} \textbf{.116}} & {\color[HTML]{FF0000} .130} & 15.00 & 8.371 & 12.10 & 9.484 & 20.27 \\
diabetes & 10 & 442 & .156 & .138 & .142 & .139 & {\color[HTML]{FF0000} .132} & {\color[HTML]{FF0000} .134} & .188 & .141 & .141 & .147 & {\color[HTML]{FF0000} \textbf{.136}} & .149 & 10.50 & 2.204 & 3.574 & 3.920 & 7.591 \\
boston & 13 & 506 & .101 & .086 & .087 & .086 & {\color[HTML]{FF0000} \textbf{.080}} & {\color[HTML]{FF0000} .082} & .105 & \textbf{.079} & .087 & .089 & .088 & .087 & 10.96 & 3.055 & 6.731 & 5.285 & 10.44 \\
happiness & 8 & 315 & .109 & .031 & .031 & .031 & {\color[HTML]{FF0000} .029} & {\color[HTML]{FF0000} \textbf{.029}} & .109 & .033 & .039 & .039 & .035 & {\color[HTML]{FF0000} \textbf{.033}} & 6.344 & 1.160 & 11.37 & 11.31 & 26.43 \\
life expect. & 21 & 1649 & .109 & .026 & .026 & .026 & {\color[HTML]{FF0000} .026} & {\color[HTML]{FF0000} \textbf{.026}} & .110 & .027 & .027 & .027 & {\color[HTML]{FF0000} .026} & {\color[HTML]{FF0000} \textbf{.026}} & 21.44 & 16.16 & 58.43 & 63.82 & 131.2 \\
mobile prices & 20 & 2000 & .148 & .131 & .137 & .137 & {\color[HTML]{FF0000} \textbf{.122}} & {\color[HTML]{FF0000} .126} & .140 & .134 & .143 & .143 & {\color[HTML]{FF0000} \textbf{.126}} & {\color[HTML]{FF0000} .132} & 33.81 & 15.03 & 367.7 & 442.5 & 815.4 \\
suicide rate & 5 & 27820 & .547 & .543 & .540 & .540 & {\color[HTML]{FF0000} .540} & {\color[HTML]{FF0000} .532} & .514 & .521 & .521 & .521 & {\color[HTML]{FF0000} .519} & {\color[HTML]{FF0000} .512} & 52.35 & 109.6 & 117.1 & 139.6 & 644.6 \\
videogame & 6 & 16327 & \textbf{.895} & .953 & .953 & .953 & .953 & .953 & .850 & \textbf{.720} & \textbf{.720} & \textbf{.720} & {\color[HTML]{FF0000} \textbf{.720}} & {\color[HTML]{FF0000} \textbf{.720}} & 1.171 & 41.91 & 34.38 & 45.90 & 119.1 \\
red wine & 11 & 1599 & .072 & .034 & .035 & .034 & {\color[HTML]{FF0000} .034} & {\color[HTML]{FF0000} .034} & .073 & .035 & .036 & .036 & {\color[HTML]{FF0000} .035} & {\color[HTML]{FF0000} \textbf{.035}} & 19.94 & 9.149 & 15.32 & 21.99 & 35.34 \\\hline
covid vic & 4 & 85 & NA & .153 & .121 & .132 & {\color[HTML]{FF0000} .105} & {\color[HTML]{FF0000} \textbf{.086}} & NA & .182 & \textbf{.100} & .133 & .104 & .086 & NA & .523 & .600 & .628 & .854 \\
covid & 2 & 225 & NA & .344 & .371 & .891 & {\color[HTML]{FF0000} .343} & {\color[HTML]{FF0000} \textbf{.321}} & NA & .459 & .411 & .741 & .417 & {\color[HTML]{FF0000} \textbf{.395}} & NA & .701 & .690 & .682 & 1.143 \\
bicycle & 4 & 122 & NA & .317 & .324 & .337 & {\color[HTML]{FF0000} .296} & {\color[HTML]{FF0000} \textbf{.275}} & NA & .366 & .478 & .457 & .529 & {\color[HTML]{FF0000} \textbf{.320}} & NA & .695 & 1.103 & 1.105 & 2.124 \\
ships & 4 & 34 & NA & .174 & .181 & .146 & {\color[HTML]{FF0000} \textbf{.125}} & .168 & NA & .197 & .203 & .199 & .464 & {\color[HTML]{FF0000} \textbf{.155}} & NA & .235 & .296 & .311 & .448 \\
smoking & 2 & 36 & NA & .127 & .128 & .163 & {\color[HTML]{FF0000} \textbf{.078}} & {\color[HTML]{FF0000} .072} & NA & .136 & .250 & .322 & {\color[HTML]{FF0000} .121} & {\color[HTML]{FF0000} \textbf{.084}} & NA & .266 & .256 & .208 & .301\\
\bottomrule
\end{tabular}
\end{sc}
\end{scriptsize}
\end{center}
\vskip -0.2in
\end{table*}

\section{Additional Details of Empirical Evaluation}
\label{sec:SI:eval}
All experiments in this paper are conducted on a computer with CPU ‘Intel(R) Core(TM) i5-10300H CPU @ 2.50GHz’ and memory of 24G.

% \input{SI_table_scores}
% Table \ref{tb:comparison_score} shows the comparison of the area under the score / cognitive complexity curve of SIRUS, Gradient Sum, Gradient boosting, XGBoost and COB for benchmark datasets of classification, regression and Poisson regression problems. All the experiments have the same configurations as discussed in Section 5.
% \section{Comparison of testing risks}

% Table \ref{tb:comparison_test} compares the testing empirical risks of the same benchmark datasets in Section 5.
% Table \ref{tb:comparison_test} has the same format with Table 1 except that it does not contain the comparison of running times.

% As discussed in Section 5, the test performance of these algorithms follows similar trends to the training performance. 

% \section{Comparison of computation time in terms of cognitive complexity}
% \begin{figure}
% \vskip 0.1in
% \begin{center}
% \begin{subfigure}
% \centering
% \includegraphics[width=0.48\columnwidth]{}
% \label{fig:time_tic}
% \end{subfigure}
% \begin{subfigure}
% \centering
% \includegraphics[width=0.46\columnwidth]{}
% \label{fig:time_diab}
% \end{subfigure}
% \end{center}
% \vskip -0.1in
% \caption{Comparison of the computation time of Gradient boosting, XGBoost and COB for the benchmark datasets \texttt{breast cancer} and \texttt{diabetes} of generating a rule ensemble with cognitive complexity of 40.}
% \label{fig:time_complexity}
% \end{figure}
% Figure \ref{fig:time_complexity} shows the computation time of all algorithms generating rule ensembles with a cognitive complexity level of 40 for the benchmark datasets \texttt{breast cancer} and \texttt{diabetes}. 
% \section{Comparison of greedy COB and Optimal (Branch-and-bound) COB}
% Table \ref{tb:comparison_greedy} compares the average normalised risks of generating rule ensembles with cognitive complexity levels of 30 by Gradient Sum (GS), Gradient boosting (GB), XGBoost, Greedy COB and COB using Branch-and-bound search (BB COB). This table also compares the computation times of these algorithms of generating 10 rules.
% \input{SI_table_Greedy_time}
To further show the difference between the proposed corrective orthogonal boosting and the other methods, we provide additional details of empirical evaluation.
Table \ref{tb:comparison} shows the normalised average training risk, test risk and computation time of the rule ensembles generated by SIRUS, SGS, SGB, SXB and COB using greedy search and  branch-and-bound search over complexity levels from 1 to 50 for the 34 datasets used in the experiments of this paper.
In Table \ref{tb:comparison}, we bold the lowest training and test risks for each dataset, and the texts with red colours indicate the COB approach using greedy search or branch-and-bound search have lower risks than all the other methods.
Figure \ref{fig:log_train_risk} compares the normalised average logged training risks over complexity levels from 1 to 50 for different datasets. 
According to Fig. \ref{fig:log_train_risk} and Table \ref{tb:comparison}, COB generates lower training risks than the other algorithms for 26 out of 34 datasets. 

For the COB with greedy search, there are 29 out of 34 datasets whose training risks are lower than the other methods. However, it has only 15 out of 34 datasets whose test risks are lower than other methods. Therefore, using branch-and-bound search generates better rule ensembles than greedy search. 
The One-sided T-test at significance level 0.05 with Bonferroni-correction for 8 hypotheses (4 for training and 4 for test) also shows the same results with the branch-and-bound search: the COB using greedy search generates rule ensembles with significantly less risk than the other methods by at least 0.001 of the normalised training and test risks.

Figure \ref{fig:time_box} shows the box plot of the running time for generating rule ensembles with complexity level 50 spent by different algorithms on the 34 datasets. Although the overall running time of COB is higher than the other methods, they are still at the same scale.

Furthermore, Figure \ref{fig:compare_all} shows more comparisons of the risk / complexity tradeoff for SIRUS, gradient sum, gradient boosting, XGBoost and Orthogonal Gradient Boosting for 6 datasets. 
We compare the rule ensembles generated by COB with complexity around 20 and the first method whose risk value is compatitive with the COB rule ensemble.

% \input{4_1_table_fix_obj.tex}
\begin{figure}[t!]
\vskip 0.1in
\begin{center}
\begin{subfigure}
\centering
\includegraphics[width=0.95\columnwidth]{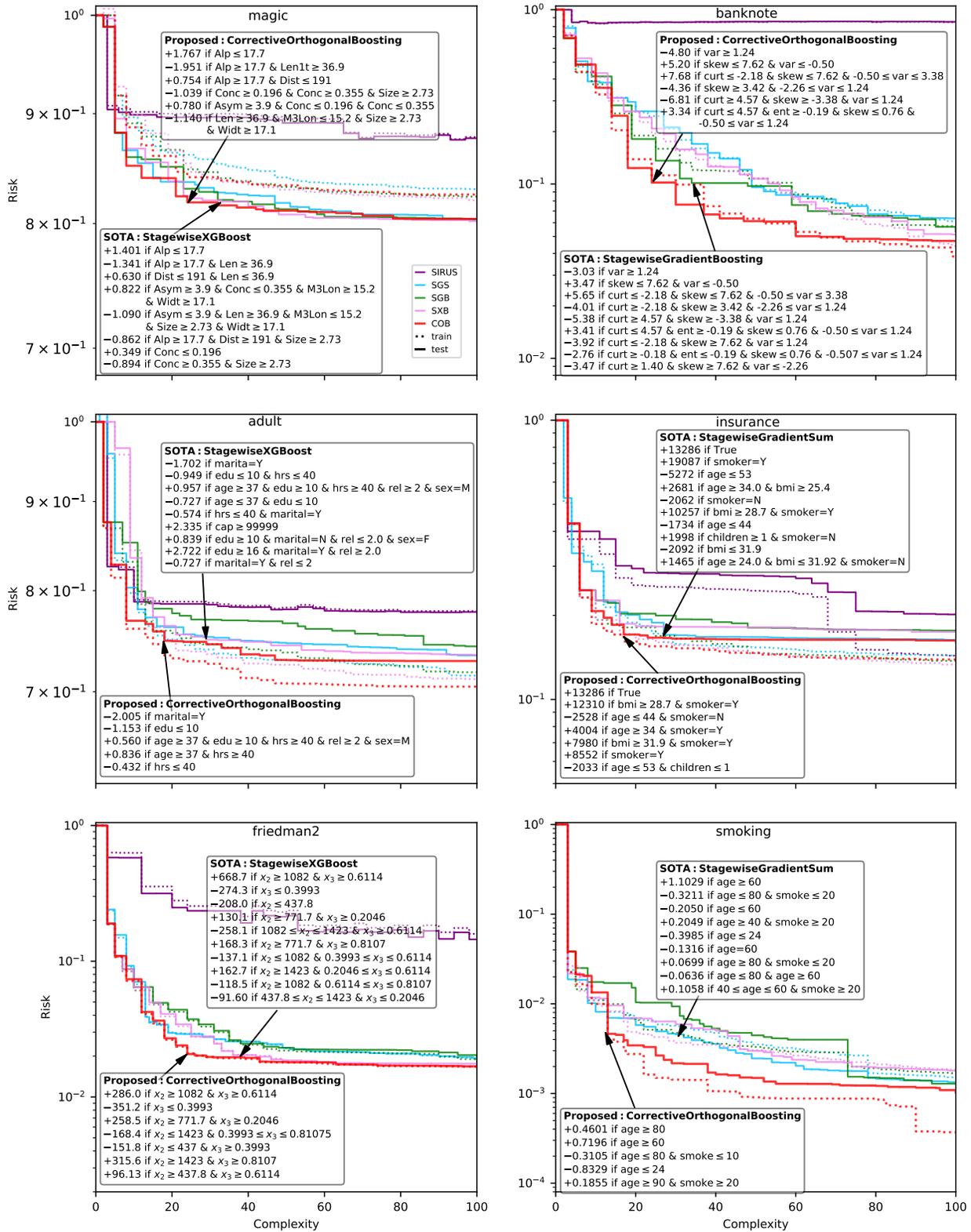}
% some issues, will modify later
% \includegraphics[width=0.45\columnwidth]{Images/magic_no_bg_v2_5.pdf}
% \includegraphics[width=0.45\columnwidth]{Images/adult_no_bg_v2_5.pdf}
% \includegraphics[width=0.45\columnwidth]{Images/insurance_no_bg_v2_5.pdf}
% \includegraphics[width=0.45\columnwidth]{Images/make_friedman2_no_bg_v2_5.pdf}
% \includegraphics[width=0.45\columnwidth]{Images/smoking_no_bg_v2_5.pdf}
\vspace{-0.5cm}
\caption{Risk/complexity curves of proposed  approach (red) compared to alternatives for  \texttt{magic}, \texttt{banknotes}, \texttt{adult}, \texttt{insurance}, \texttt{friedman 2} and \texttt{smoking}. % and \texttt{banknote}. 
Annotated rule ensembles have  equivalent risk but substantially reduced complexity for the proposed method.
}
\label{fig:compare_all}
\end{subfigure}
\end{center}
% \vskip 0.3in
\end{figure}
\begin{comment}
    
\begin{figure}[t!]
\vskip 0.1in
\begin{center}
\begin{subfigure}
\centering
\includegraphics[width=0.45\columnwidth]{Images/magic_no_bg_v2_5.pdf}
\vspace{-0.5cm}
\caption{Risk/complexity curves of proposed  approach (red) compared to alternatives for  \texttt{magic}. % and \texttt{banknote}. 
Annotated rule ensembles have  equivalent risk but substantially reduced complexity for the proposed method.
}
% \label{fig:compare_tic}
\end{subfigure}
\end{center}
% \vskip 0.3in
\end{figure}

\begin{figure}[t!]
\vskip 0.1in
\begin{center}
\begin{subfigure}
\centering
\includegraphics[width=0.45\columnwidth]{Images/adult_no_bg_v2_5.pdf}
\vspace{-0.5cm}
\caption{Risk/complexity curves of proposed  approach (red) compared to alternatives for  \texttt{adult}. % and \texttt{banknote}. 
Annotated rule ensembles have  equivalent risk but substantially reduced complexity for the proposed method.
}
% \label{fig:compare_tic}
\end{subfigure}
\end{center}
% \vskip 0.3in
\end{figure}

\begin{figure}[t!]
\vskip 0.1in
\begin{center}
\begin{subfigure}
\centering
\includegraphics[width=0.45\columnwidth]{Images/insurance_no_bg_v2_5.pdf}
\vspace{-0.5cm}
\caption{Risk/complexity curves of proposed  approach (red) compared to alternatives for  \texttt{insurance}. % and \texttt{banknote}. 
Annotated rule ensembles have  equivalent risk but substantially reduced complexity for the proposed method.
}
% \label{fig:compare_tic}
\end{subfigure}
\end{center}
% \vskip 0.3in
\end{figure}

\begin{figure}[t!]
\vskip 0.1in
\begin{center}
\begin{subfigure}
\centering
\includegraphics[width=0.45\columnwidth]{Images/make_friedman2_no_bg_v2_5.pdf}
\vspace{-0.5cm}
\caption{Risk/complexity curves of proposed  approach (red) compared to alternatives for  \texttt{friedman 2}. % and \texttt{banknote}. 
Annotated rule ensembles have  equivalent risk but substantially reduced complexity for the proposed method.
}
% \label{fig:compare_tic}
\end{subfigure}
\end{center}
% \vskip 0.3in
\end{figure}

\begin{figure}[t!]
\vskip 0.1in
\begin{center}
\begin{subfigure}
\centering
\includegraphics[width=0.45\columnwidth]{Images/smoking_no_bg_v2_5.pdf}
\vspace{-0.5cm}
\caption{Risk/complexity curves of proposed  approach (red) compared to alternatives for  \texttt{smoking}. % and \texttt{banknote}. 
Annotated rule ensembles have  equivalent risk but substantially reduced complexity for the proposed method.
}
% \label{fig:compare_tic}
\end{subfigure}
\end{center}
% \vskip 0.3in
\end{figure}
\end{comment}

% \input{SI_checklist}

% --- supplement: supplement.tex ---

% If your paper is accepted and the title of your paper is very long,
% the style will print as headings an error message. Use the following
% command to supply a shorter title of your paper so that it can be
% used as headings.
%
%\runningtitle{I use this title instead because the last one was very long}

% If your paper is accepted and the number of authors is large, the
% style will print as headings an error message. Use the following
% command to supply a shorter version of the authors names so that
% they can be used as headings (for example, use only the surnames)
%
%\runningauthor{Surname 1, Surname 2, Surname 3, ...., Surname n}

% Supplementary material: To improve readability, you must use a single-column format for the supplementary material.
\onecolumn
\aistatstitle{Instructions for Paper Submissions to AISTATS 2024: \\
Supplementary Materials}

\section{FORMATTING INSTRUCTIONS}

To prepare a supplementary pdf file, we ask the authors to use \texttt{aistats2024.sty} as a style file and to follow the same formatting instructions as in the main paper.
The only difference is that the supplementary material must be in a \emph{single-column} format.
You can use \texttt{supplement.tex} in our starter pack as a starting point, or append the supplementary content to the main paper and split the final PDF into two separate files.

Note that reviewers are under no obligation to examine your supplementary material.

\section{MISSING PROOFS}

The supplementary materials may contain detailed proofs of the results that are missing in the main paper.

\subsection{Proof of Lemma 3}

\textit{In this section, we present the detailed proof of Lemma 3 and then [ ... ]}

\section{ADDITIONAL EXPERIMENTS}

If you have additional experimental results, you may include them in the supplementary materials.

\subsection{The Effect of Regularization Parameter}

\textit{Our algorithm depends on the regularization parameter $\lambda$. Figure 1 below illustrates the effect of this parameter on the performance of our algorithm. As we can see, [ ... ]}

\vfill